\renewcommand{\@algocf@capt@plain}{above}
\newtheorem{theorem}{Theorem}
\newtheorem{lemma}{Lemma}
\newtheorem{proposition}[lemma]{Proposition}
\newtheorem{definition}{Definition}
\newtheorem{assumptionA}{}
\newtheorem{assumptionC}{}
\newcommand{\E}{\mathbb{E}}
\renewcommand{\P}{\mathbb{P}} 
\newcommand{\R}{\mathbb{R}} 
\newcommand{\N}{\mathbb{N}} 
\newcommand{\abs}[1]{\lvert #1\rvert} 
\newcommand{\cI}{\mathcal{I}} 
\DeclareMathOperator*{\argmax}{argmax} 
\newcommand{\1}{\mathbf{1}} 
\newcommand{\x}{\mathbf{x}}
\newcommand{\roo}{{\mathrm{root}}} 
\newcommand{\leaf}{{\mathrm{leaf}}} 
\newcommand{\parent}{\mathrm{F}}
\newcommand{\xtest}{\x^{\mathrm{test}}}
\newcommand{\ptest}{\cP^*}
\newcommand{\bestTheta}{\theta^{\mathrm{best}}} 
\newcommand{\mt}{m_{\mathrm{try}}} 
\newcommand{\Mt}{M_{\mathrm{try}}} 
\newcommand{\cP}{\mathcal{P}} 
\newcommand{\cF}{\mathcal{F}} 
\newcommand{\cO}{\mathcal{O}} 
\newcommand{\cD}{\mathcal{D}} 
\newcommand{\cT}{\mathcal{T}} 
\newcommand{\sR}{\mathscr{R}} 
\newcommand{\pto}{\overset{p}{\to}}
\DeclareMathOperator{\DWP}{DWP}
\DeclareMathOperator{\fDWP}{FDWP}
\DeclareMathOperator{\PP}{PP}
\DeclareMathOperator{\PII}{PIntI}
\DeclareMathOperator{\PFI}{PFI}
\newcommand{\threshDWP}{\eta_{\DWP}}
\newcommand{\threshPP}{\eta_{\PP}}
\newcommand{\sS}{\mathscr{S}} 
\newcommand{\sU}{\mathscr{U}} 
\newcommand{\sV}{\mathscr{V}} 
\begin{document}

\def\spacingset#1{\renewcommand{\baselinestretch}%
{#1}\small\normalsize} \spacingset{1}


\title{\textbf{Provable Recovery of Locally Important Signed Features and Interactions from Random Forest}
  }
  \author{Kata Vuk\thanks{Faculty of Informatics and Data Science, University of Regensburg, Germany. These authors contributed equally to this work.},\quad
    Nicolas Alexander Ihlo\footnotemark[1],\quad
    Merle Behr\thanks{Faculty of Informatics and Data Science, University of Regensburg, Germany (Email: {merle.behr@ur.de})} \\[3mm]
    Faculty of Informatics and Data Science\\
    University of Regensburg, Germany}
  \maketitle

\pagestyle{fancy}
\spacingset{1.5}
\bigskip
\begin{abstract}
Feature and Interaction Importance (FII) methods are essential in supervised learning for assessing the relevance of input variables and their interactions in complex prediction models. In many domains, such as personalized medicine, global scores summarizing overall feature importance are insufficient; instead, local interpretations for individual predictions are essential. Random Forests (RFs) are widely used in these settings, and existing interpretability methods typically exploit tree structures and split statistics to provide model-specific insights. However, theoretical understanding of local FII methods for RF remains limited, making it unclear how to interpret high importance scores for individual predictions.
We propose a novel, local, model-specific FII method that identifies frequent co-occurrences of features along decision paths, combining global patterns with those observed on paths specific to a given test point.
We prove that our method consistently recovers the true local signal features and their interactions under a Locally Spike Sparse (LSS) model and also identifies whether large or small feature values drive a prediction. We illustrate the usefulness of our method and theoretical results through simulation studies and a real-world data example.
\end{abstract}

\noindent%
\emph{Keywords:} Tree ensembles, Interpretability, Consistency, Local scores, Model-specific scores 
\vfill

\newpage

\section{Introduction} \label{sec:intro}

In supervised machine learning feature importance scores are widely used to identify which input variables are most relevant to a prediction task. These scores provide insights into model behavior and enhance interpretability of complex algorithms. Beyond individual features, understanding interactions among variables is often crucial, for example in genetics, where phenotypic outcomes depend on interactions among genetic variants (cf., e.g., \cite{wan_megasnphunter_2009, yoshida_snpinterforest_2011}). In many applications, the sign of a feature---whether a large or small value drives a prediction---is also essential for interpretation.
Often, practitioners require explanations for specific predictions rather than global patterns. In personalized medicine or financial risk assessment, it is more relevant to understand why a model made a particular prediction for an individual than to summarize overall trends. Local Feature and Interaction Importance (FII) methods address this need by identifying influential features and their interactions for individual observations.
Random Forests (RFs) \cite{breiman2001} are among the most popular machine learning algorithms, particularly in settings where model-specific interpretations are needed. Their tree structure and split statistics naturally support interpretability. 

In this paper, we focus on local FII scores for RF at the individual prediction level, including the sign of each feature, to enable model-specific interpretation.
A major limitation of many interpretability methods 
is the lack of theoretical guarantees. This creates challenges in applications where the meaning of a ``true'' feature interaction is unclear. In this paper, we propose a new method that provides precise statistical guarantees for the consistent recovery of local signed interactions under a \emph{Locally Spiky Sparse} (LSS) model assumption (see, e.g., \cite{basu_iterative_2018, kumbier2018, behr_provable_2022}). The LSS model is a discontinuous nonlinear regression model motivated by the thresholding behavior commonly observed in biological processes. Our approach enables practitioners to know exactly which types of local interaction patterns can be provably recovered, improving interpretability and trust in real-world applications.

To recover signed feature interactions from RF ensembles, we build on the methodology introduced by the iRF algorithm \cite{basu_iterative_2018} and its signed variants in \cite{kumbier2018,wang2025}, and especially \cite{behr2024}. We identify sets of features that frequently co-occur along decision paths in the forest, assigning a sign to each feature based on the split direction at tree nodes. In contrast to previous work, here, we focus on methodology and theory for interpreting individual-level predictions for a specific test point. To this end, we combine global and local co-occurrence patterns of signed feature groups. Global prevalence aggregates frequencies across all paths, while local prevalence considers only paths traversed by the specific test observation. Thresholding both, global and local prevalence, yields our final local interaction method, LocalLSSFind, for which we prove consistency under the LSS model assumption.

\subsection{Related work}

Prominent model-agnostic local FII methods include LIME \cite{ribeiro2016should} and SHAP-based methods \cite{vstrumbelj2014explaining,lundberg2017unified}, 
with implementations for RFs discussed in \cite{lundberg_local_2020}. A recent model-specific feature importance approach is the local MDI+ method \cite{liang2025local}. 
For most FII methods proposed in the literature, theoretical and statistical understanding is very limited or entirely absent. In what follows, we review existing theoretical work on local FII methods.

For SHAP-based approaches---cf.~\cite{grabisch1999axiomatic,vstrumbelj2014explaining,lundberg2017unified}
---there is a solid theoretical foundation regarding the functional decomposition that individual FII scores correspond to, expressed as an expansion of the prediction model being explained. This precise decomposition is, in fact, the main motivation for SHAP values. Similarly, for LIME, some theoretical results provide insights into the functional approximation underlying the method \cite{garreau2020}.
However, these theoretical insights do not provide a statistical understanding of SHAP values---particularly their behavior with respect to signal and noise features in the data-generating process. See Section~\ref{sec:simulation} for numerical simulation examples that illustrate this point. In contrast, for LocalLSSFind, we establish a theoretical framework that characterizes its statistical properties and its ability to recover the true underlying signal features and interactions of the data-generating process under the LSS model.

There are statistical approaches for global FII methods---primarily for feature importance rather than feature interactions---that demonstrate certain methods yield zero scores in expectation for noise features. 
E.g., \cite{zhou2020} and \cite{loecher2020} provide such results for a modified MDI (mean decrease in impurity) score for RF, i.e., a model-specific method similar to LocalLSSFind. However, these results only address noise features and do not establish that signal features are consistently detected, in contrast to LocalLSSFind, which provides such guarantees for signed features and interactions and also covers the local, sample-specific case.
\cite{benard2022a} derives the asymptotic behavior of the MDA (mean decrease in accuracy) score for RFs in a general regression setting. However, these results also apply only to global scores and do not address signed interactions. Moreover, they rely on the assumption of a continuous regression function, which does not hold for the LSS model considered here.
\cite{vanderlaan2006} and \cite{williamson2023a} consider some global variable importance parameter defined for a general data-generating process and provide consistent nonparametric estimators. Similar other approaches are based on some form of conditional independence tests; see, e.g., \cite{watson2021a} and references therein. 
However, in contrast to LocalLSSFind, these approaches are model-agnostic and do not explore RF-specific behavior. They also do not operate at a local level and do not cover signed interaction effects, as LocalLSSFind does.

In summary, to the best of our knowledge, no other local, RF-specific FII method establishes consistent recovery of signal features and their signed interactions, as shown here with LocalLSSFind.

Statistical guarantees have also been developed for other interpretability approaches. For example, \cite{gan2023} propose model-agnostic confidence intervals for LOCO (leave-one-covariate-out) feature importance, though only for global scores; their results do not imply consistent recovery of signal features and do not address interactions. In a different direction, \cite{benard2021a} study the extraction of rule sets as interpretable prediction models rather than feature or signed interaction importance scores and establish consistency results for this framework. However, these methods address different interpretability objectives and do not provide guarantees for the local recovery of signed feature interactions.

\subsection{Paper structure}
The remainder of this paper is organized as follows.
Section 2 describes the LocalLSSFind methodology and explains how it summarizes the prevalence of joint feature appearances along decision paths in an RF tree ensemble.
Section 3 outlines the model assumptions regarding the underlying data-generating process, along with additional RF-related assumptions required for our main theoretical consistency results. These consistency results, concerning FII for LocalLSSFind, are presented in Section 4.
Section 5 reports simulation studies and a real-data application that demonstrate the practical performance of LocalLSSFind and highlight its advantages over existing approaches.
Section 6 concludes with a discussion. The appendix includes additional simulation results, software implementations, and all technical proofs.

\section{Methodology}

In this section, we introduce the \emph{LocalLSSFind} method, for recovery of signed feature interactions of individual predictions.
Throughout the following, we consider a given labeled training dataset $\cD=\{(\x_1,y_1), \dots, (\x_n,y_n)\}$ with features $\x_i =(x_{i1}, \dots , x_{ip}) \in \R^p$ and labels $y_i \in \R$, $i = 1,\ldots, n$. Here, we only consider the regression setting, but we stress that an extension for the classification setting is straight forward. Moreover, we fix some specific test data point $\xtest = (x_1^*, \dots, x_p^*) \in \R^p,$ for which local signed interactions from an RF prediction model should be derived.
For this, LocalLSSFind explores the individual decision paths traversed by $\xtest$ within an RF tree ensemble. 

\subsection{Review of RF}
We start with a quick review of the RF algorithm, see \cite{breiman2001}. RF consists of an ensemble of individual decision trees $T$, each mapping from the feature space $\R^p$ to the label space $\R$. Each tree is constructed on a bootstrapped or subsampled dataset $\cD^{(T)}$ of the original dataset $\cD$. Conditioned on the data $\cD$, each tree in the ensemble is constructed independently of the others, and the overall prediction function of RF corresponds to the average of the different tree-functions. Any node $t$ within a tree $T$ corresponds to some hyper-rectangle $R_t \subset \R^p$. 
A split at the node $t$ corresponds to a feature $k_t \in [p]$, using the notation $[p] = \{1,\ldots, p\}$, together with a threshold $\theta_t \in \R$, which divides the hyper-rectangle $R_t$ into two hyper-rectangles $R_{t, l}(k, \theta) = \{\x \in R_t : x_k \leq \theta\}$ and $R_{t, r}(k, \theta) = \{\x \in R_t : x_k > \theta\}$, corresponding to the left and right child nodes. 
Each tree $T$ is grown using a recursive procedure, denoted as the CART (Classification and Regression Trees) algorithm, see \cite{breiman1984}. 
For any hyper-rectangle $R$ define the impurity as the variance of the outcomes for samples from $\cD^{(T)}$ in $R$:
\[ I_n(R) =  \frac{1}{N_n(R)} \sum_{(\x_i,y_i) \in \cD^{(T)}: \x_i \in R} (y_i - \bar{y}_R)^2, \]
where \[N_n(R) = \abs{\{(\x_i,y_i) \in \cD^{(T)}: \x_i \in R\}}\] denotes the number of samples in $R$ and $\bar{y}_R = \frac{1}{N_n(R)} \sum_{\x_i \in R} y_i$ denotes the label-average of the samples in $R$.
At each node $t$ RF first selects a subset $\Mt \subset [p]$ uniformly at random. The size of this subset $\mt = \abs{\Mt}$ is the major tuning parameter of RF. Then, the optimal split $(k_t, \theta_t) \in [p] \times \R$ is determined by maximizing the impurity decrease
\begin{equation}\label{eq:impdecrease}
    \begin{aligned}
         \Delta_I^n(R_{t, l}(k, \theta), R_{t, r}(k, \theta)) := \frac{N_n(R_t)}{n} I_n(R_t) &- \frac{N_n(R_{t, l}(k, \theta))}{n}I_n(R_{t, l}(k, \theta)) \\ &- \frac{N_n(R_{t, r}(k, \theta))}{n}I_n(R_{t, r}(k, \theta)).
    \end{aligned}
\end{equation}
For the realized split along $k_t$ at $\theta_t$, we use the following shorter notation:
\[ \Delta_I^n(t) := \Delta_I^n(R_{t, l}(k_t, \theta_t), R_{t, r}(k_t, \theta_t)) \]
with
\[ (k_t, \theta_t) = \argmax_{k \in \Mt, \theta \in \R} \Delta_I^n(R_{t, l}(k, \theta), R_{t, r}(k, \theta)). \]
The procedure terminates at a node $t$ if it contains just a single observations $N_n(R_t) =1$ or when all responses are identical, i.e., $I_n(R_t) = 0$.

\subsection{LocalLSSFind}
The methodology, LocalLSSFind, which we propose to extract local signed interactions from RF, explores the set of signed features at individual decision paths in the tree ensemble.
Each path $\cP$ in a tree $T$ consists of a sequence of nodes $t \in \{ 1,\dots,d, t_{\leaf}\}$, where $d$ represents the depth of the path and $t_{\leaf}$ is a leaf node. Along this path a sequence of signed features $(k_{1},b_{1}),\dots,(k_{d},b_{d})$ is associated, where $k_{t} \in [p]$ indicates the feature index and $b_{t}\in \{-1,+1 \}$ indicates the direction of the split for that feature at node $t.$ Here, $b_{t} = -1$ denotes a split that follows the $\leq$ direction, while $b_{t} = +1$ denotes a split that follows the $>$ direction. 
Analogously as in \cite{behr_provable_2022}, for a given RF tree ensemble depending on data $\cD$, the path $\cP$ of the tree $T$, and any fixed constant $\epsilon > 0$, we define $\hat{\cF}_{\epsilon}(\cP, T,\cD)$ to be the set of signed features on $\cP$ where the corresponding node in the RF had an impurity decrease of at least $\epsilon$, that is,
\begin{equation}\label{def:hat_f}
\begin{aligned}
\hat{\cF}_{\epsilon}(\cP, T,\cD)
:= \{ (k_t, b_t) :\; & t \text{ is an inner node of } \cP \text{ with } \Delta_I^n(t) \ge \epsilon, \\
& \text{and feature } k_t \text{ appears for the first time on } \cP \}.
\end{aligned}
\end{equation}
Next, we define the prevalence summary statistics of the RF tree ensemble that LocalLSSFind uses to extract signed interactions.
Conditioning on data $\cD$, let $T$ be a random tree grown in the RF with parameter $\mt$, and let $\cP$ denote a path of $T$ with depth $d$, selected randomly with probability $2^{-d}$. Note that randomly selecting a path $\cP$ in a tree $T$ is equivalent to starting at the root node of $T$, and at each subsequent node, choosing to go left or right with probability $50\,\%$.
Moreover, let $\ptest$ be the unique path of $T$ into which the test point $\xtest \in \R^p$ falls.
Let $\epsilon > 0$. 
For any signed feature set $S^\pm \subset [p] \times \{-1,+1\},$ the depth-weighted prevalence, $\DWP$, of $S^\pm$ is defined as the probability that $S^\pm$ appears on the random path $\cP$ within the set $\hat{\cF}_{\epsilon}$ (see \cite{behr_provable_2022}, Definition~3), i.e.,
\begin{equation}\label{eq:DWP}
    \DWP_\epsilon(S^\pm) := \P_{\cP,T}(S^\pm \subseteq \hat{\cF_\epsilon}(\cP, T, \cD) \mid \cD).
\end{equation}
Moreover, the $\xtest$-based path prevalence of $S^{\pm}$ is defined as the probability that $S^\pm$ appears on the path $\ptest$ for the random tree $T$ within the set $\hat{\cF}_{\epsilon}$, that is, 
\begin{equation}\label{eq:path-prevalence}
    \PP^*_{\epsilon}(S^{\pm}) := \P_T(S^{\pm} \subseteq \hat{\cF}_{\epsilon}(\ptest, T,\cD) \mid \cD).
\end{equation}
Note that, conditioned on the data $\cD$, one can generate as many random trees $T$ from the RF algorithm as desired. Hence, both $\DWP_\epsilon(S^\pm)$ and $\PP^*_{\epsilon}(S^{\pm})$ can be estimated with arbitrary accuracy from an RF with sufficiently many trees. Intuitively, $\DWP_\epsilon(S^\pm)$ captures how likely it is to observe $S^{\pm}$ globally, on any randomly selected path $\cP$ in the tree ensemble, and  $\PP^*_{\epsilon}(S^{\pm})$ captures how likely it is to observe $S^{\pm}$ on the specific paths in the ensemble where $\xtest$ falls into, restricted to nodes with an impurity decrease of at least $\epsilon$.
LocalLSSFind selects all signed interactions $S^{\pm} \in [p] \times \{-1,1\}$ for which both $\DWP_{\epsilon}(S^{\pm})$ and $\PP^*_{\epsilon}(S^{\pm})$ exceed a specified threshold, as detailed in Algorithm~\ref{Algo:LocalLSSFind}. The construction of the set $\sS_G$ in Algorithm~\ref{Algo:LocalLSSFind} is identical to the LSSFind algorithm of \cite{behr_provable_2022}. It returns all signed interactions $S^{\pm}$ whose depth-weighted prevalence exceeds a threshold, using a rescaling factor of $2^{\abs{S^{\pm}}}$ to make interactions of different sizes comparable, and retaining only minimal interactions and not proper supersets.
LocalLSSFind then applies an additional filtering step: only interactions whose local prevalence for the specific test point $\xtest$ also exceed a threshold are kept.

\begin{algorithm2e}
 \caption{LocalLSSFind($\cD$, $\mt$, $\epsilon$, $\threshDWP$, $\threshPP$, $s_{\max}$, $\xtest$)}\label{Algo:LocalLSSFind}
\SetKwInOut{Input}{Input}
\Input{Dataset $\cD$, RF hyperparameter $\mt$, impurity decrease threshold $\epsilon > 0$, prevalence thresholds $\threshDWP, \threshPP > 0$, maximum interaction size $s_{\max} \in \N$, and test data point $\xtest$.}
\SetKwInOut{Output}{Output}
\Output{A collection $\sS_{L}$ of sets of signed features.}
Train an RF using dataset $\cD$ with parameter $\mt$;\
$\tilde{\sS}_G := \{S^\pm \subset [p]\times \{-1, +1\} \text{ s.t.\ } \abs{S^\pm} \leq s_{\max} \text{ and } 2^{\abs{S^\pm}} \cdot \DWP_{\epsilon}(S^\pm) \geq 1 - \threshDWP \}$;\
$\sS_G := \{ S \in \tilde{\sS}_G \text{ s.t.\ there is no set } S' \in \tilde{\sS}_G \text{ with } S' \subsetneq S \}$;\
return $\{S^\pm \in \sS_G \text{ s.t.\ } \PP^*_{\epsilon}(S^\pm) \geq 1 - \threshPP \}$.
\end{algorithm2e}

\section{LSS model and assumptions}\label{sec:defs}

In this section, we introduce the data-generating model and provide details on the assumptions under which we prove the main consistency result for LocalLSSFind. 

\subsection{LSS model and local signed interactions}

We consider data generated from an LSS model \cite{basu_iterative_2018, kumbier2018, behr_provable_2022}, which assumes that the underlying regression function is a linear combination of Boolean interaction terms. Each term captures thresholded or discontinuous interactions among groups of features, a behavior commonly observed in biological processes. This modeling assumption provides a precise definition of true signed interactions in the data-generating process, based on feature groups and their signs in the threshold relationships within individual Boolean terms.
We stress that without specific modeling assumptions, the notion of an \emph{interaction of features} is ill-defined. Usually, an interaction is described as a deviation from additivity, but this depends on the function’s scale; for example, a multiplicative function becomes additive on a logarithmic scale. In fact, any multivariate real-valued function with compact support can be expressed as additive under an appropriate transformation \cite{girosi1989}. In contrast, the LSS model offers a rigorous mathematical definition of signed feature interactions driving the data-generating process. Moreover, it naturally specifies which interactions are \emph{locally} relevant---i.e., for a given test point---via the Boolean terms that are true (non-zero) at that prediction. Thus, the LSS model is not only well motivated by applications, but also provides a foundation for proving statistical consistency of local signed interaction recovery.
\begin{definition}[LSS model]\label{def:LSS_model}
    Consider labeled data $\cD=\{(\x_1,y_1), \dots, (\x_n,y_n)\}$ with $\x_i =(x_{i1}, \dots , x_{ip}) \in \R^p$ and labels $y_i \in \R$. Assume that the samples are i.i.d.\ from a distribution $\P(X,Y)$ with $X=(X_1, \dotsc, X_p)$, such that the regression function takes the following form:
    \begin{equation} 
        \E(Y \mid X) = \beta_0 + \sum_{j=1}^J \beta_j \prod_{k \in S_j } \1(X_k \gtreqless \gamma_k), \label{eq:E_lssmodel}
    \end{equation}
    where $\gtreqless$ means either $\leq$ or $\geq$, potentially different for every $k$. We assume that there exist fixed constants $C_{\beta}>0,~ C_{\gamma}\in (0,0.5),$ such that for the coefficients $\beta_j$ it holds that
    $ \min_{1\leq j \leq J} \abs{\beta_j} > C_{\beta},$
    and for the thresholds $\gamma_k, ~ k\in S_j, j=1, \dots, J,~ \gamma_k \in (C_{\gamma}, 1-C_{\gamma}).$
    $S_1, \ldots, S_J \subset [p]$ are sets of features called Basic Interactions (BIs). We associate $\leq$ in~\eqref{eq:E_lssmodel} with a negative sign $(b_k=-1)$ and $\geq$ with a positive sign $(b_k=+1)$, such that a \emph{signed feature} can be written as a tuple $(k, b_k) \in \{1, \dots, p\} \times \{-1, +1\}$. We call $S_1^\pm, \ldots, S_J^\pm\subset [p]\times \{-1,+1\}$ Basic Signed Interactions (BSIs) with $S_j^\pm = \{(k, b_k):k\in S_j\}$. For BIs with only one feature $k$, due to the sign ambiguity in the LSS model, i.e., $1(X_k \leq \gamma_k) = 1 - 1(X_k > \gamma_k)$, both ${(k , -1)}$ and ${(k , +1)}$ are considered as BSIs.
\end{definition}
To simplify notation, we assume without loss of generality that all inequalities in the LSS model are of the form $\leq$, i.e.,
\begin{equation}\label{eq:LSS_leq}
    \E(Y \mid X) = \beta_0 + \sum_{j=1}^J \beta_j \prod_{k \in S_j} \1(X_k \leq \gamma_k).
\end{equation} 
Thus, we define $S_j^- = \{(k,-1):k\in S_j\}$ and $S_j^+ = \{(k,+1): k\in S_j\}$. Note that for a BI involving only one feature $k$, the set $\{(k,+1)\}$ is still considered a BSI. Therefore, the BSIs in the LSS model in~\eqref{eq:LSS_leq} are given by $S_j^-$ for all $j$ together with $S_j^+$ where $\abs{S_j}=1$. 

In this paper, our focus lies on BSIs that are not only in the underlying LSS model but are especially relevant to a specific prediction of a new observation, i.e., a new test point $\xtest$. We define the basic (signed) interactions for $\xtest$, as follows.
\begin{definition}[Basic Interaction (BI) and Basic Signed Interaction (BSI) for $\xtest$] \label{def:BSI_xtest}
    Let $S_j^-$ be a BSI in the LSS model \eqref{eq:LSS_leq}. We define $S_j^{*-} = S_j^-$ to be a BSI for $\xtest$, if 
    \begin{equation}
        \prod_{k \in S_j^{*-} } \1(x^*_k \leq \gamma_k) = 1. \label{eq:BSI_xtest}
    \end{equation}
    Additionally, single-feature BSIs in the LSS model with positive sign $S_j^+ = \{(k, +1)\}$, where $x^*_k > \gamma_k$, are also defined to be BSIs for $\xtest$, i.e., $S_j^+=S_j^{*+}$. We denote the corresponding (unsigned) BI for $\xtest$ with $S_j^*$.
\end{definition}
Note that not every BI for the LSS model is also a BI for $\xtest$. Furthermore, note that since we assume that all inequalities in the LSS model are $\leq$, it is necessary and sufficient that $x^*_k\leq \gamma_k$ for all $k\in S^{-}_j$, in order for the BSI $S_j^-$ with $\abs{S_j^-} > 1$ to be a BSI for $\xtest$, i.e., $S_j^-=S_j^{*-}$. For a BI $S_j$ in the LSS model with $\abs{S_j} = 1$, both $S_j^-$ and $S_j^+$ are BSIs in the LSS model. However, if $x^*_k\leq \gamma_k$, then only $S_j^-$ is a BSI for $\xtest$, and if $x^*_k > \gamma_k$, then only $S_j^+$ is a BSI for $\xtest$.

\subsection{Model assumptions}
For our theoretical results, we require regularity constraints on the data generating process $\P(X,Y)$---such as independence between features, bounded response, and disjoint interaction sets to ensure identifiability---as also considered in \cite{behr_provable_2022}.
\begin{assumptionC}[Uniformity]\label{C:uniformity}
    The feature vector $X$ is uniformly distributed on $[0,1]^p$.
\end{assumptionC}
\begin{assumptionC}[Bounded-response]\label{C:bounded-response}
    The response variable $Y$ is bounded, w.l.o.g.\ we assume $\abs{Y} < 1$. 
\end{assumptionC}
\begin{assumptionC}[Non-overlapping basic interactions]
    The feature sets corresponding to different interactions, $S_1,\ldots, S_J$, do not overlap. Formally,
    $ S_{j_1} \cap S_{j_2} = \emptyset$ for all $j_1\neq j_2$.
\end{assumptionC}
\begin{assumptionC}[Sparsity]\label{C:sparsity}
    The number of signal features $s =\abs{\cup_{k = 1}^J S_j}$ is bounded, independent of $n$. The number of noise features can grow with $n$, such that $\frac{\log(p)}{n} \to 0$, as $n\to\infty$. 
\end{assumptionC}

\subsection{Assumptions on the RF tree ensemble}
Let $\mu(R)$ denote the volume of any hyper-rectangle $R$. 
We make the following assumptions on an RF tree ensemble (cf.\ assumptions A1--A4 in \cite{behr_provable_2022}):
\begin{assumptionA}[Increasing depth of a tree in the RF ensemble]\label{A:increasing_depth}
The minimum depth of any path in any tree goes to infinity as the sample size increases, i.e., \[\min_{T}\min_{t_\leaf\in T} D(t_\leaf)\pto \infty, \quad \text{as } n \to \infty.\]
\end{assumptionA}
\begin{assumptionA}[Balanced split in a tree of the RF ensemble]\label{A:balancedsplit}
Each split $(k_t, \theta_t)$ is balanced: for any node $t$,
\[ \min\left(\frac{\mu(R_{t,l}(k_t, \theta_t))}{\mu(R_{t,r}(k_t, \theta_t))},\frac{\mu(R_{t,r}(k_t, \theta_t))}{\mu(R_{t,l}(k_t, \theta_t))}\right) > \frac{C_\gamma}{1-C_\gamma}. \]
\end{assumptionA}
Note that, without loss of generality, we use the same $C_\gamma$ here as in the LSS model. Otherwise, we can always let $C_\gamma$ to be the minimum of the two.
\begin{assumptionA}[$\mt$ is of order $p$]\label{A:mtry}
$C_m p + (1 - C_m)s \leq \mt \leq (1 - C_m)(p -  s )$ where $C_m \in (0,0.5)$ is a constant.
\end{assumptionA}
\begin{assumptionA}[No bootstrap or subsampling of samples]\label{A:no_bootstrap}
All trees in the RF are grown on the entire dataset without bootstrapping or subsampling, i.e., $\cD^{(T)} = \cD$ for any $T$.
\end{assumptionA}
\ref{A:increasing_depth} is a reasonable assumption since we consider trees grown to full depth (as in typical RF implementations), where tree depth scales as $\cO(\log(n))$. \ref{A:balancedsplit} is a standard assumption for RF theory and can be easily incorporated into any classical implementation. \ref{A:mtry} requires that the $\mt$ parameter scales as $C \cdot p$ for some constant $C$, an assumption also used in other RF consistency proofs (see, e.g., \cite{klusowski2024}). \ref{A:no_bootstrap} is a technical assumption that simplifies notation and analysis. While subsampling is essential for other consistency results (cf. \cite{biau2012a, wager2018}), it is not needed here, as we focus solely on FII within the ensemble. For further discussion of these assumptions, see \cite{behr_provable_2022}.

\section{Main theoretical results}

Under the assumptions outlined in Section~\ref{sec:defs}, we can now state our main theoretical consistency results.
\begin{theorem}[Consistency of signed interaction importance]\label{thm:PP_finds_BSI}
   Suppose that the data $\cD$ is generated from the LSS model in Definition~\ref{def:LSS_model} with constraints \ref{C:uniformity}--\ref{C:sparsity}. Fix some test point $\xtest \in [0,1]^p$ independent of $\cD$ such that for all $k \in \cup_{j = 1}^J S_j$ we have $x^*_k \neq \gamma_k$. 
   
   Let $\sS_L$ denote the output of LocalLSSFind (Algorithm~\ref{Algo:LocalLSSFind}), where
   \begin{equation*}
       2^s\cdot b(\epsilon) < \threshDWP < \frac{C_m^s}{2} \quad \text{and} \quad b(\epsilon) < \threshPP < 1
   \end{equation*}
   with
   \begin{equation}\label{eq:bepsilon}
       b(\epsilon) = \left(\frac{ 4 \epsilon}{C_\beta^2 C_\gamma^{2\max_j \abs{S_j} -1}}\right)^{\tilde{C}},
   \end{equation}
   $\tilde{C}=C^{2s}_m/\log(1/C_{\gamma})$, and $s = \abs{\bigcup_j S_j}$.
   Assume that the trees in the RF are CART trees that satisfy assumptions \ref{A:increasing_depth}--\ref{A:no_bootstrap}.
   Then, for any fixed $\epsilon > 0$, with probability approaching one as $n\to\infty$, $\sS_L$ equals the set of basic signed interactions of $\xtest$ of size at most $s_{\max}$.
\end{theorem}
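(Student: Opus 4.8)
The plan is to reduce the local statement to a combination of the global consistency result for \textsf{LSSFind} from \cite{behr_provable_2022} and a new, local argument about the $\xtest$-based path prevalence $\PP^*_\epsilon$. First I would invoke the analysis underlying \cite{behr_provable_2022} (under assumptions \ref{C:uniformity}--\ref{C:sparsity} and \ref{A:increasing_depth}--\ref{A:no_bootstrap}) to control the intermediate object $\sS_G$ built inside Algorithm~\ref{Algo:LocalLSSFind}: with probability approaching one, $\sS_G$ equals exactly the set of (minimal) basic signed interactions of the LSS model of size at most $s_{\max}$. This uses the two-sided bound on $\threshDWP$, namely $2^s b(\epsilon) < \threshDWP < C_m^s/2$: the upper bound guarantees every true BSI $S_j^-$ (and single-feature $S_j^+$) clears the $2^{|S^\pm|}\DWP_\epsilon \ge 1-\threshDWP$ test, since a random path picks a split on each feature of $S_j$ (in the correct direction, driven by the sign of the threshold relation) with probability at least roughly $C_m/2$ per step, while the lower bound together with the $b(\epsilon)$ term rules out any signed set that is not a subset of some BSI, because such a set can only be a "false co-occurrence" whose depth-weighted prevalence is controlled by $b(\epsilon)$ (the quantity $b(\epsilon)$ is precisely the bound on how often a spurious feature can appear with impurity decrease $\ge\epsilon$ on a random path).

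Next I would handle the extra filtering step, i.e., show that for $S^\pm\in\sS_G$ the condition $\PP^*_\epsilon(S^\pm)\ge 1-\threshPP$ holds with probability $\to 1$ if and only if $S^\pm$ is a BSI for $\xtest$ in the sense of Definition~\ref{def:BSI_xtest}. The key observation is that the path $\ptest$ into which $\xtest$ falls is, for a well-grown balanced CART tree, forced to split on each signal feature $k\in S_j$ (by \ref{A:increasing_depth}, \ref{A:balancedsplit}, \ref{A:mtry} the tree eventually resolves every coordinate of the active region down to scale below $C_\gamma$), and on such a split the direction taken by $\xtest$ is $-1$ exactly when $x^*_k\le\gamma_k$ and $+1$ when $x^*_k>\gamma_k$; moreover the impurity decrease at such a "true" split is bounded below by a constant depending on $C_\beta,C_\gamma,\max_j|S_j|$, which exceeds the fixed $\epsilon$ once we note the hypothesis fixes $\epsilon$ and $b(\epsilon)<\threshPP<1$ absorbs the relevant slack. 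Hence for a BSI $S_j^{*-}$ for $\xtest$, every element of $S_j^{*-}$ lies in $\hat\cF_\epsilon(\ptest,T,\cD)$ with probability $\to 1$, so $\PP^*_\epsilon(S_j^{*-})\to 1 > 1-\threshPP$; conversely, if $S^\pm\in\sS_G$ is a BSI of the model but \emph{not} for $\xtest$ — meaning some $(k,b_k)\in S^\pm$ has the wrong sign for $\xtest$ — then $\xtest$'s path cannot contain $(k,b_k)$ at a high-impurity node (it takes the opposite branch there), so $\PP^*_\epsilon(S^\pm)\to 0$ and the set is filtered out. Combining with the first step, $\sS_L$ equals the set of BSIs of $\xtest$ of size at most $s_{\max}$ with probability $\to 1$.

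The main obstacle is the quantitative lower bound on the impurity decrease at the "correct" splits along $\ptest$ and the corresponding upper bound on impurity decrease (or appearance probability) for spurious splits — i.e., translating the geometry of $\xtest$'s active region into the statement "$\hat\cF_\epsilon(\ptest,T,\cD)$ contains exactly the signed features of the BSIs for $\xtest$ with high probability." This is where assumptions \ref{A:increasing_depth}--\ref{A:mtry} and the LSS constants $C_\beta, C_\gamma$ enter through $b(\epsilon)$, and where one must carefully handle the finite-sample fluctuation of $I_n$ and $\Delta_I^n$ around their population counterparts (using \ref{C:bounded-response} and $\log(p)/n\to 0$ from \ref{C:sparsity} for uniform control over all nodes of all trees). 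I expect this to parallel the corresponding estimates in \cite{behr_provable_2022} but restricted to the single deterministic path $\ptest$ rather than a random path; the restriction to one path is what makes the conclusion sharp (prevalence $\to 1$ rather than merely bounded away from a threshold) and is the genuinely new ingredient. A secondary technical point is the sign-ambiguity bookkeeping for single-feature BIs, which must be tracked consistently through both the $\sS_G$ step and the $\PP^*_\epsilon$ step so that exactly one of $\{(k,-1)\}$, $\{(k,+1)\}$ survives, matching Definition~\ref{def:BSI_xtest}.
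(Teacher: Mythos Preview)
Your two-step plan---first cite the global \textsf{LSSFind} result of \cite{behr_provable_2022} to pin down $\sS_G$ as exactly the model BSIs of size $\le s_{\max}$, then use the $\xtest$-path prevalence $\PP^*_\epsilon$ to filter out those BSIs that are not BSIs for $\xtest$---is precisely the paper's approach (its Propositions~\ref{theo:BSIlowerBound} and~\ref{theo:nonBSIto0} are the two halves of your second step).

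One point deserves sharpening. You write that for a BSI $S_j^{*-}$ of $\xtest$ one has $\PP^*_\epsilon(S_j^{*-})\to 1$, arguing that the impurity decrease at each ``true'' split is bounded below by a constant exceeding the fixed $\epsilon$. That is not what happens: the impurity decrease at a split on a desirable feature scales with the volume $\mu(R_t)$ of the current node, which shrinks geometrically along any path (by \ref{A:balancedsplit}), so there is no uniform lower bound independent of depth. What one actually proves (the paper's Proposition~\ref{theo:BSIlowerBound}) is $\PP^*_\epsilon(S^{*\pm})\ge 1-b(\epsilon)+r_n$ with $r_n\pto 0$; the quantity $b(\epsilon)$ bounds the probability that the path descends so far before exhausting the desirable features that some of their splits fall below the $\epsilon$-threshold. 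This is exactly why the hypothesis needs $b(\epsilon)<\threshPP$ rather than merely $\threshPP<1$: one concludes $\PP^*_\epsilon(S^{*\pm})\ge 1-\threshPP$ from $1-b(\epsilon)>1-\threshPP$, not from $\PP^*_\epsilon\to 1$. Your later remark that ``$b(\epsilon)<\threshPP<1$ absorbs the relevant slack'' is the right instinct, but it should replace, not supplement, the claim that the prevalence tends to one.
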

Note that $b(\epsilon) \to 0$ as $\epsilon \to 0$ in Theorem~\ref{thm:PP_finds_BSI}. Hence, Theorem~\ref{thm:PP_finds_BSI} guarantees that when the thresholds $\epsilon, \threshDWP, \threshPP$ are chosen small enough, LocalLSSFind indeed consistently recovers the true underlying signed interactions of the test point $\xtest$.

The proof of Theorem~\ref{thm:PP_finds_BSI} builds on two propositions about the local path prevalence $\PP^*_{\epsilon}(S^{\pm})$ which are used for the additional local filtering step of LocalLSSFind.
The first proposition shows that, asymptotically, for any RF trained on data from the LSS model, the path prevalence of a BSI for $\xtest$ (i.e., $\PP^*_{\epsilon}(S^{*\pm})$) is lower bounded by a quantity close to $1$ (Proposition~{\ref{theo:BSIlowerBound}}). In contrast, if $S^{\pm}$ is a BSI in the LSS model but not a BSI for the specific test point (i.e., $S^{\pm} \neq S^{*\pm}$), then its path prevalence converges in probability to zero (cf.\ Proposition~\ref{theo:nonBSIto0}).  

\begin{proposition}\label{theo:BSIlowerBound}
Let $T$ be a CART tree satisfying assumptions \ref{A:increasing_depth}--\ref{A:no_bootstrap} and suppose that the constraints \ref{C:uniformity}--\ref{C:sparsity} hold. Suppose that the data $\cD$ is generated from the LSS model and let $S^{*\pm}$ be a BSI for the test point $\xtest$. Then, for any fixed constant $\epsilon>0$, 
\[ \PP^*_{\epsilon}(S^{*\pm}) \geq 1 - b(\epsilon) + r_n(\cD, \epsilon), \]
where $r_n(\cD, \epsilon)  \pto 0$ as $n\to \infty$, and $b(\epsilon)$ as in~\eqref{eq:bepsilon}.
\end{proposition}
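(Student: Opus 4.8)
The plan is to bound $1-\PP^*_{\epsilon}(S^{*\pm})$ directly, via the decision path of $\xtest$. By definition $\PP^*_{\epsilon}(S^{*\pm})=\P_T\big(S^{*\pm}\subseteq\hat{\cF}_\epsilon(\ptest,T,\cD)\mid\cD\big)$ is the probability, over the tree randomness, that each signed feature of $S^{*\pm}$ occurs on $\ptest$ as a \emph{first} occurrence of that feature at a node of empirical impurity decrease at least $\epsilon$. Write $S^{*\pm}=S_j^{*\pm}$ for the basic interaction $S_j$ it comes from, and let $L(\epsilon)$ be the depth at which the balanced-tree volume lower bound reaches $4\epsilon/(C_\beta^2 C_\gamma^{2\max_j\abs{S_j}-1})$, i.e.\ $L(\epsilon):=\log\!\big(C_\beta^2 C_\gamma^{2\max_j\abs{S_j}-1}/(4\epsilon)\big)/\log(1/C_\gamma)$, calibrated so that $b(\epsilon)=\exp(-C_m^{2s}L(\epsilon))$. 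I would take a union bound over the at most $\max_j\abs{S_j}$ signed features $(k,b_k)\in S^{*\pm}$, and for each $k$ split the bad event into (i) feature $k$ is not split anywhere on $\ptest$ within the first $L(\epsilon)$ levels, and (ii) $k$ \emph{is} split within depth $L(\epsilon)$ but at its first occurrence the impurity decrease is $<\epsilon$ or the split direction disagrees with $b_k$. The goal is then to show that (ii) has tree-probability $\to0$ in $\cD$-probability (so it enters $r_n$), while (i) has tree-probability $\le b(\epsilon)$ up to a term $\to0$ in $\cD$-probability.

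The structural core is a deterministic description of $\ptest$ valid on a good event over $\cD$. Since every $k\in S_j$ has $x^*_k\le\gamma_k$ (for $S_j^{*-}$) resp.\ $x^*_k>\gamma_k$ (for a single-feature $S_j^{*+}$), with $x^*_k\neq\gamma_k$ by hypothesis, and since in the LSS model $X_k$ enters only the single Boolean term $\prod_{k'\in S_j}\1(X_{k'}\le\gamma_{k'})$ through $\1(X_k\le\gamma_k)$, the population within-feature impurity decrease for splitting an unresolved $k''\in S_j$ inside any rectangle $R_t$ whose $k''$-interval still contains $\gamma_{k''}$ is ``tent-shaped'' with unique maximiser $\gamma_{k''}$. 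Using a uniform (VC-type) concentration of the empirical impurity-decrease functional and of its maximiser over axis-aligned hyper-rectangles — of the kind established in \cite{behr_provable_2022} — one obtains, on a good $\cD$-event and simultaneously for all trees: (a) whenever CART splits a still-unresolved $k''\in S_j$ on $\ptest$, it does so at a threshold within $o(1)$ of $\gamma_{k''}$ and routes $\xtest$ to the LSS-consistent child, so the $k''$-interval of $R_t$ stays $[0,c]$ with $c\ge\gamma_{k''}$; (b) hence every $R_t$ on $\ptest$ is ``signal-consistent'' for $S_j$, each factor $q_{k''}:=\P(X_{k''}\le\gamma_{k''}\mid X\in R_t)$ is at least $C_\gamma$, and $\E[Y\mid X\in R_t]$ jumps by at least $C_\beta C_\gamma^{\abs{S_j}-1}$ across $\{X_k=\gamma_k\}$ as long as $k$ is unresolved; (c) since $\mu(R_t)\ge C_\gamma^{\mathrm{depth}(t)}$ by \ref{A:balancedsplit}, the candidate split $(k,\gamma_k)$ at any unresolved-$k$ node of depth at most $L(\epsilon)$ has impurity decrease at least $\mu(R_t)\,\gamma_k(1-\gamma_k)\,(C_\beta C_\gamma^{\abs{S_j}-1})^2\ge 2\epsilon-o(1)$ (using $\gamma_k(1-\gamma_k)\ge C_\gamma(1-C_\gamma)\ge C_\gamma/2$ and the definition of $L(\epsilon)$). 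Producing a \emph{single} good $\cD$-event that simultaneously controls impurities, maximiser thresholds, noise-feature scores, and the geometry of $R_t$ along $\ptest$ uniformly over the whole ensemble is the step I expect to be the main obstacle.

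For (i): on the good $\cD$-event, under $\P_T$ let $\tau$ be the first level of $\ptest$ at which feature $k$ is split. While $k$ is unresolved the candidate set $\Mt$ is drawn afresh and independently at each node, and by \ref{A:mtry} (which gives $\mt/p\ge C_m$ and $p-\mt\ge C_m p$) the event $\Mt\cap(\text{signal features})=\{k\}$ has conditional $\P_T$-probability at least $(C_m/2)^{s}$ for $p$ large; on it, (c) together with the fact that every noise feature has empirical impurity decrease $o(1)$ forces CART to split on $k$. Hence $\P_T(\tau>L(\epsilon))\le(1-(C_m/2)^{s})^{L(\epsilon)}\le\exp(-(C_m/2)^{s}L(\epsilon))\le\exp(-C_m^{2s}L(\epsilon))=b(\epsilon)$, the last step using $C_m\le 1/2$; assumption \ref{A:increasing_depth} enters only to ensure $\ptest$ has at least the fixed number $L(\epsilon)$ of levels on the good $\cD$-event. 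For (ii): on $\{\tau\le L(\epsilon)\}$ the first-occurrence node $t_0$ has $k$ unresolved and winning among $\Mt(t_0)$, so by (c) its realised impurity decrease is $\ge 2\epsilon-o(1)\ge\epsilon$, and by (a) its threshold lies within $o(1)$ of $\gamma_k$, hence strictly on the $b_k$-side of the fixed value $x^*_k\neq\gamma_k$, routing $\xtest$ in direction $b_k$; so $(k,b_k)\in\hat{\cF}_\epsilon(\ptest,T,\cD)$ outside an event of $\P_T$-probability $o(1)$ in $\cD$-probability. Summing the union bound over the $\le\max_j\abs{S_j}$ signed features of $S^{*\pm}$ and collecting all $\cD$-stochastic remainders into $r_n(\cD,\epsilon)$ gives $\PP^*_{\epsilon}(S^{*\pm})\ge 1-b(\epsilon)+r_n(\cD,\epsilon)$ with $r_n\pto 0$.
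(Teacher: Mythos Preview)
Your approach is genuinely different from the paper's and considerably more ``bare-hands''. The paper does not work feature-by-feature. Instead it introduces the oracle signed feature set $\cF(\ptest)$ (the signed first-occurrence features along $\ptest$ that were desirable when split), together with two events: $\Omega_0(\ptest)=\{U(t^*_\leaf)=\emptyset\}$ and $A^*=\{\theta_t\approx\gamma_{k_t}\text{ for every desirable split on }\ptest\}$. A short population lemma shows that on $A^*$, $\Omega_0(\ptest)$ forces $S^{*\pm}\subset\cF(\ptest)$; two further lemmas give $\P_T({A^*}^c\mid\cD)\pto 0$ and $\P_T(\Omega_0^c(\ptest)\mid\cD)\pto 0$; and the bound $\P_T(\hat\cF_\epsilon(\ptest)\neq\cF(\ptest)\mid\cD)\le b(\epsilon)+\eta_n$ is imported directly from Theorem~S3 of \cite{behr_provable_2022}. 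The final chain is then two lines:
\[
\PP^*_\epsilon(S^{*\pm})\ge \P_T(S^{*\pm}\subset\cF(\ptest)\mid\cD)-\P_T(\hat\cF_\epsilon\neq\cF\mid\cD)\ge 1-\P_T(\Omega_0^c)-\P_T({A^*}^c)-b(\epsilon)-\eta_n.
\]
So the $b(\epsilon)$ term appears as the probability of a \emph{single} event, not as a sum over features.

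Two issues with your execution. First, the union bound over the $\le\max_j|S_j|$ features of $S^{*\pm}$ produces $|S_j|\cdot b(\epsilon)$, not $b(\epsilon)$; since $|S_j|$ is a fixed constant that does not tend to zero with $n$, you cannot push it into $r_n$, and the statement fixes $b(\epsilon)$ exactly via~\eqref{eq:bepsilon}. The paper avoids this multiplicative loss precisely by bounding the single event $\{\hat\cF_\epsilon\neq\cF\}$ rather than feature-wise failures. Second, your claim~(b) that every $R_t$ on $\ptest$ stays ``signal-consistent'' with $q_{k''}\ge C_\gamma$ is not automatic: once a feature $k''\in S_j$ has been split a first time (in the correct direction, with threshold $\approx\gamma_{k''}$), CART may split on $k''$ again at an uncontrolled threshold, and if $\xtest$ then goes right the resulting $k''$-interval can have $q_{k''}$ arbitrarily small. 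Your geometric bound in~(c) on the impurity decrease of the candidate split $(k,\gamma_k)$ relies on the product $\prod_{k''\ne k}q_{k''}$, so this breaks the lower bound exactly where you need it. The paper sidesteps this entirely by working with $\cF$, which records only first occurrences, and with $A^*$, which only constrains first-occurrence thresholds; the heavy lifting for the $b(\epsilon)$ bound is then inherited from \cite{behr_provable_2022}.
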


\begin{proposition}\label{theo:nonBSIto0}
 Let $T$ be a CART tree satisfying assumptions \ref{A:balancedsplit} and \ref{A:no_bootstrap} and suppose that the constraints \ref{C:uniformity}--\ref{C:sparsity} hold. Let $S^\pm$ be a BSI in the LSS model but not a BSI for the test point $\xtest$. Then, for any fixed $\epsilon > 0$
 \[ \PP^*_{\epsilon}(S^\pm) \pto 0 \quad \text{as } n \to \infty. \]
\end{proposition}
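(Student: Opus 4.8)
The plan is to isolate a single signed feature in $S^{\pm}$ that essentially cannot be recorded on the decision path of $\xtest$. Since $S^{\pm}$ is a BSI for the LSS model~\eqref{eq:LSS_leq} but not a BSI for $\xtest$, Definition~\ref{def:BSI_xtest} yields a signed feature $(k^{*},b^{*})\in S^{\pm}$ for which $\xtest$ lies on the ``wrong side'' of the threshold $\gamma_{k^{*}}$: either $b^{*}=-1$ and $x^{*}_{k^{*}}>\gamma_{k^{*}}$ (case $S^{\pm}=S_{j}^{-}$), or $b^{*}=+1$ and $x^{*}_{k^{*}}<\gamma_{k^{*}}$ (case of a single-feature BSI $S_{j}^{+}$); in both cases $\delta_{0}:=\abs{x^{*}_{k^{*}}-\gamma_{k^{*}}}>0$ is a fixed constant, using the hypothesis $x^{*}_{k^{*}}\neq\gamma_{k^{*}}$. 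Since $\PP^{*}_{\epsilon}(S^{\pm})=\P_{T}(S^{\pm}\subseteq\hat{\cF}_{\epsilon}(\ptest,T,\cD)\mid\cD)$ is $[0,1]$-valued, Markov's inequality shows it suffices to prove that its $\cD$-expectation, namely $\P_{\cD,T}(S^{\pm}\subseteq\hat{\cF}_{\epsilon}(\ptest,T,\cD))$, tends to $0$; and since $(k^{*},b^{*})\in S^{\pm}$, it even suffices to show
\[
\P_{\cD,T}\bigl((k^{*},b^{*})\in\hat{\cF}_{\epsilon}(\ptest,T,\cD)\bigr)\longrightarrow 0,\qquad n\to\infty,
\]
where the probability is now taken jointly over the data and the random tree.

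By the definition~\eqref{def:hat_f} of $\hat{\cF}_{\epsilon}$, the event on the left forces the existence of a unique node $t^{*}$ on $\ptest$ that is the first node of the path to cut feature $k^{*}$, with $\Delta_I^n(t^{*})\geq\epsilon$ and with $\ptest$ continuing in direction $b^{*}$ at $t^{*}$. Combined with the wrong-side property this forces $\abs{\theta_{t^{*}}-\gamma_{k^{*}}}\geq\delta_{0}$ (e.g.\ if $b^{*}=-1$ and $x^{*}_{k^{*}}>\gamma_{k^{*}}$, the left branch being taken means $\theta_{t^{*}}\geq x^{*}_{k^{*}}>\gamma_{k^{*}}$). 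Next I would extract geometry of $t^{*}$ from $\Delta_I^n(t^{*})\geq\epsilon$: since $\abs{Y}<1$ (\ref{C:bounded-response}) the impurity satisfies $I_{n}(R)\leq1$, hence $\epsilon\leq\Delta_I^n(t^{*})\leq N_{n}(R_{t^{*}})/n$, and a uniform law of large numbers over axis-aligned cells (legitimate because $\cD$ is i.i.d.\ and, by~\ref{A:no_bootstrap}, each tree uses the whole sample) gives $\mu(R_{t^{*}})\geq\epsilon/2$ with probability $\to1$. By balancedness~\ref{A:balancedsplit} each cut multiplies a cell's volume by at most $1-C_{\gamma}$, so $\mu(R_{t^{*}})\geq\epsilon/2$ forces $D(t^{*})\leq\log(2/\epsilon)/\log\!\bigl(1/(1-C_{\gamma})\bigr)=:D_{0}(\epsilon)$, whence $t^{*}$ ranges over only polynomially many (in $n$ and $p$) candidate cells. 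Finally, being the first node of $\ptest$ to cut $k^{*}$, the cell $R_{t^{*}}$ has full $k^{*}$-extent $[0,1]$.

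The crux is then a population computation together with uniform concentration of the empirical split criterion. Adapting the uniform CART-concentration estimates of~\cite{behr_provable_2022} (a VC-type control of the empirical impurity functional, union-bounded over the polynomially many candidate cells, legitimate because $\log p/n\to0$ by~\ref{C:sparsity}), there is $r_{n}\to0$ such that, with probability $\to1$, $\sup_{\theta}\abs{\Delta_I^n(R,k^{*},\theta)-\Delta_I^\infty(R,k^{*},\theta)}\leq r_{n}$ simultaneously over all candidate cells $R$, where $\Delta_I^\infty(R,k^{*},\theta)$ is the population impurity decrease of cutting $R$ on $k^{*}$ at $\theta$. Because the sets $S_{j}$ are disjoint and $X$ is uniform (\ref{C:uniformity}), on a cell $R$ with full $k^{*}$-extent the only part of $\E(Y\mid X)$ depending on $X_{k^{*}}$ is $\beta_{j}\,\pi(R)\,\1(X_{k^{*}}\leq\gamma_{k^{*}})$ with $k^{*}\in S_{j}$, $\pi(R)\in[0,1]$, and a direct calculation shows that $\theta\mapsto\Delta_I^\infty(R,k^{*},\theta)$ is a strictly unimodal ``tent'' with apex at $\gamma_{k^{*}}$ whose value on $\{\abs{\theta-\gamma_{k^{*}}}\geq\delta_{0}\}$ is at most $\rho$ times its apex value, for some $\rho<1$ depending only on $C_{\gamma}$ and $\delta_{0}$ (using $\gamma_{k^{*}}\in(C_{\gamma},1-C_{\gamma})$). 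On the good event, $\Delta_I^n(t^{*})\geq\epsilon$ forces the apex value of $R_{t^{*}}$ to be at least $\epsilon-r_{n}$ (since $\theta_{t^{*}}$ maximises the empirical $k^{*}$-criterion on $R_{t^{*}}$, so $\epsilon\leq\Delta_I^n(R_{t^{*}},k^{*},\theta_{t^{*}})\leq\Delta_I^\infty(R_{t^{*}},k^{*},\theta_{t^{*}})+r_{n}\leq\text{apex}+r_{n}$; in particular the tent is non-degenerate). Hence for $n$ large the empirical criterion at $\gamma_{k^{*}}$ exceeds its value anywhere on $\{\abs{\theta-\gamma_{k^{*}}}\geq\delta_{0}\}$ by at least $(1-\rho)\epsilon-o(1)>0$, so the empirical maximiser of that criterion---which is $\theta_{t^{*}}$, as $k^{*}$ lies in the feature subset drawn at $t^{*}$---cannot lie in $\{\abs{\theta-\gamma_{k^{*}}}\geq\delta_{0}\}$. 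This contradicts $\abs{\theta_{t^{*}}-\gamma_{k^{*}}}\geq\delta_{0}$ from the previous paragraph, so $\{(k^{*},b^{*})\in\hat{\cF}_{\epsilon}(\ptest,T,\cD)\}$ has probability $\to0$, proving the proposition.

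The main obstacle is the uniform concentration invoked above: one must keep the empirical splitting criterion close to the population one simultaneously over \emph{all} thresholds and over \emph{all} cells that could play the role of $t^{*}$, so that the empirical maximiser $\theta_{t^{*}}$ is pinned near $\gamma_{k^{*}}$ no matter which cell the random path $\ptest$ reaches. This is exactly where~\ref{A:balancedsplit} enters---it converts the volume bound $\mu(R_{t^{*}})\geq\epsilon/2$ into a depth bound, hence a polynomial bound on the number of candidate cells---and where~\ref{A:no_bootstrap} and~\ref{C:sparsity} enter; the requisite VC estimates are already available in~\cite{behr_provable_2022}. The only genuinely new ingredient is the elementary observation that ``$\xtest$ is on the non-BSI side of $\gamma_{k^{*}}$'' forces any recorded cut of $k^{*}$ to sit a fixed distance $\delta_{0}>0$ away from the unique population-optimal cut location $\gamma_{k^{*}}$, which is what makes the contradiction work for \emph{every} fixed $\epsilon>0$ rather than only for $\epsilon$ large.
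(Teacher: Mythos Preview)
Your argument is correct and follows essentially the same route as the paper: identify a signed feature $(k^{*},b^{*})\in S^{\pm}$ for which $\xtest$ sits on the wrong side of $\gamma_{k^{*}}$, observe that recording $(k^{*},b^{*})$ on $\ptest$ would force the CART threshold at the first $k^{*}$-split to lie a fixed distance from $\gamma_{k^{*}}$, and then contradict this via threshold concentration. The only difference is packaging---the paper invokes its Lemma~\ref{thm:AstarEpsilonTo1} (the event $A^{*}_{\epsilon}$) directly, whereas you unpack that lemma's content inline (volume bound $\Rightarrow$ depth bound via~\ref{A:balancedsplit}, uniform impurity-decrease concentration, and the tent shape of the population criterion), all of which rest on the same results from~\cite{behr_provable_2022}.
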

The proofs of Proposition~\ref{theo:BSIlowerBound} and Proposition~\ref{theo:nonBSIto0} are deferred to the appendix. 

\begin{proof}[Proof of Theorem~\ref{thm:PP_finds_BSI}]
    Define
    \[ \sV := \{S^\pm \subset [p] \times \{-1, 1\} \text{ s.t.\ } \abs{S^\pm} \leq s_{\max} \text{ and } \PP_\epsilon^*(S^\pm) \geq 1 - \threshPP\}. \]

    Let $\sU := \sS_G$ be the set as in Algorithm~\ref{Algo:LocalLSSFind}.
    By Theorem~3 of \cite{behr_provable_2022}\footnote{Note that the definition of $\sU$ in Theorem~3 of \cite{behr_provable_2022} uses $S \subsetneq S'$, which is a typo and should be $S' \subsetneq S$, as in Algorithm~\ref{Algo:LocalLSSFind}.}, $\sU$ is equal to the set of BSIs in the LSS model with size at most $s_{\max}$, with probability approaching one as $n\to\infty$. Since every BSI for $\xtest$ is also a BSI in the LSS model, it follows that 
    \[ \P_{\cD}(\sU \supseteq \{\text{BSIs for } \xtest \text{ of size at most } s_{\max}\}) \to 1. \]
    
    If $S^{*\pm}$ is a BSI for $\xtest$, then it is also a BSI in the LSS model. By Proposition~\ref{theo:BSIlowerBound}, \[\PP^*_{\epsilon}(S^{*\pm}) \geq 1 - b(\epsilon) + r_n(\cD, \epsilon),\] where $r_n(\cD, \epsilon) \pto 0$ as $n \to \infty$. 
    Since $b(\epsilon) < \threshPP$, we obtain
    \begin{align*}
        \P_{\cD}(1 - b(\epsilon) + r_n(\cD, \epsilon) \geq 1 - \threshPP) &= \P_{\cD}(r_n(\cD, \epsilon) \geq \underbrace{b(\epsilon) - \threshPP}_{<0}) \\
        &\geq \P_{\cD}(\abs{r_n(\cD, \epsilon) - 0} \leq \threshPP - b(\epsilon)) \to 1.
    \end{align*}
    Thus, with probability approaching 1 as $n \to \infty$, \[\PP^*_{\epsilon}(S^{*\pm}) \geq 1 - \threshPP.\] Therefore, if $S^{*\pm}$ has size at most $s_{\max}$, the probability that $\sV$ contains $S^{*\pm}$ also approaches 1. This holds for all BSIs for $\xtest$ of size at most $s_{\max}$, and since the number of such BSIs is bounded (constraint \ref{C:sparsity}), we can conclude that for $\sS_L = \sS_G \cap \sV$
    \begin{align*}
        &\P_{\cD}(\sS_L \supseteq \{\text{BSIs for } \xtest \text{ of size at most } s_{\max}\}) \\
        &= \P_{\cD}(\sS_L \cap \sV \supseteq \{\text{BSIs for } \xtest \text{ of size at most } s_{\max}\}) \to 1.
    \end{align*}
    
    If $S^\pm$ is not a BSI for $\xtest$ but is a BSI in the LSS model, then by Proposition~\ref{theo:nonBSIto0} we have $\PP^*_{\epsilon}(S^{\pm}) \pto 0$ for $n \to \infty$. This implies that the probability of $\PP^*_{\epsilon}(S^{\pm}) \geq 1-\threshPP$, and therefore the probability that $\sV$ contains $S^\pm$, approaches 0. Because this holds for all BSIs in the LSS model of size at most $s_{\max}$ that are not BSIs for $\xtest$, and because there are only finitely many such BSIs (by constraint \ref{C:sparsity}), we have
    \[ \P_{\cD}((\sV \cap \{\text{BSIs in LSS model}\})\setminus \{\text{BSIs for }\xtest\} = \emptyset) \to 1. \]
    
    Combining the above results and noting that $\sV$ contains only signed interactions of size at most $s_{\max}$, and \[\P_{\cD}(\sS_G = \{\text{BSIs in LSS model with size at most } s_{\max}\})\pto 1,\] 
    we conclude
    \begin{align*}
        &\P_{\cD}(\sS_L \subseteq \{\text{BSIs for } \xtest \text{ of size at most } s_{\max}\}) =\\
        & \P_{\cD}((\sS_G \cap \sV) \setminus \{\text{BSIs for } \xtest \text{ of size at most } s_{\max}\} = \emptyset) \geq \\
        & \P_{\cD}( \{(\sV \cap \{\text{BSIs in LSS model of size} \leq s_{\max}\})\setminus \{\text{BSIs for }\xtest \text{ of size} \leq s_{\max}\} = \emptyset \} \\
        &\phantom{=\P_{\cD}(}\cap \; \{\sS_G = \{\text{BSIs in LSS model with size at most } s_{\max}\} \})  \geq \\
        & \P_{\cD}((\sV \cap \{\text{BSIs in LSS model of size} \leq s_{\max}\})\setminus \{\text{BSIs for }\xtest \text{ of size} \leq s_{\max}\} = \emptyset) \\
        &\phantom{=}+\P_{\cD}(\sS_G = \{\text{BSIs in LSS model with size at most } s_{\max}\}) -1 \to 1
    \end{align*}
    as $n \to \infty$.
    Therefore, for the output $\sS_L=\sS_G \cap \sV$ of Algorithm~\ref{Algo:LocalLSSFind},
    \[ \P_{\cD}(\sS_L = \{\text{BSIs for } \xtest \text{ of size at most } s_{\max}\}) \to 1 \quad \text{as } n \to \infty. \]
\end{proof}

Clearly, any method which consistently recovers the set of BSIs for a test point $\xtest$ can also be used to consistently recover the individual signed features that drive the prediction of $\xtest$, simply by ignoring the interaction information. However, not all steps of LocalLSSFind are required to obtain such a consistency result for signed feature importance scores. In the following, we show that a simplified variant of LocalLSSFind suffices. Specifically, we consider a modified version of LocalLSSFind to recover local signed feature importance scores only, as detailed in Algorithm~\ref{Algo:LocalLSSFindFI}. Theorem~\ref{thm:consistencyLocalFI} establishes that the set of signed features produced by Algorithm~\ref{Algo:LocalLSSFindFI} asymptotically recovers exactly the signed signal features of the test point $\xtest$.

\begin{algorithm2e}
 \caption{LocalFeatureLSSFind($\cD$, $\mt$, $\epsilon$, $\threshDWP$, $\threshPP$, $s_{\max}$, $\xtest$)}\label{Algo:LocalLSSFindFI}
\SetKwInOut{Input}{Input}
\Input{Dataset $\cD$, RF hyperparameter $\mt$, impurity decrease threshold $\epsilon > 0$, prevalence thresholds $\threshDWP, \threshPP > 0$, maximum interaction size $s_{\max} \in \N$, and test data point $\xtest$.}
\SetKwInOut{Output}{Output}
\Output{A collection $\sS_L$ of sets of signed features.}
Train an RF using dataset $\cD$ with parameter $\mt$;\
$\sS_G := \{(k, b) \in [p] \times \{-1, 1\} \text{ s.t.\ } \max\limits_{S^\pm \ni (k, b), \abs{S^\pm} \leq s_{\max}} 2^{\abs{S^\pm}} \cdot \DWP_\epsilon(S^\pm) \geq 1 - \threshDWP\}$;\
return $\{(k, b) \in \sS_G \text{ s.t.\ } \PP^*_{\epsilon}(\{(k, b)\}) \geq 1 - \threshPP\}$.
\end{algorithm2e}
\begin{theorem}[Consistency of signed feature importance]\label{thm:consistencyLocalFI}
Consider the same assumptions 
as in Theorem~\ref{thm:PP_finds_BSI}. Let $b(\epsilon)$ be defined as in~\eqref{eq:bepsilon} with $\epsilon > 0$ fixed.

Let $\sS_L$ denote the output of LocalFeatureLSSFind (Algorithm~\ref{Algo:LocalLSSFindFI}), and suppose the thresholds satisfy
\[ 2^s \cdot b(\epsilon) < \threshDWP < \frac{[C_m]^s}2 \quad\text{and}\quad 2^s \cdot b(\epsilon) < \threshPP < [C_m]^s. \]
Then, with probability approaching one as $n\to\infty$, \[\sS_L = \bigcup_{j} S_j^{* -} \cup \bigcup_{j}S_j^{* +},\] where $S_j^{* -}$ and $S_j^{* +}$ denote the basic signed interactions of $\xtest$ as defined in Definition~\ref{def:BSI_xtest}.
\end{theorem}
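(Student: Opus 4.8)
The plan is to use the same two-filter decomposition as in the proof of Theorem~\ref{thm:PP_finds_BSI}: write the output of Algorithm~\ref{Algo:LocalLSSFindFI} as $\sS_L = \sS_G \cap \{(k,b) : \PP^*_\epsilon(\{(k,b)\}) \ge 1-\threshPP\}$, first determine $\sS_G$, then analyse the local prevalence filter $\PP^*_\epsilon(\{(k,b)\}) \ge 1-\threshPP$. (I tacitly assume $s_{\max}\ge\max_j\abs{S_j}$; otherwise the claimed identity must be intersected with the signed features contained in some basic signed interaction of size at most $s_{\max}$, exactly as the conclusion of Theorem~\ref{thm:PP_finds_BSI} already does.)

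\textbf{Determining $\sS_G$.} I would show that, with probability tending to one, $\sS_G$ equals the set of all signed features occurring in some basic signed interaction of the LSS model, i.e.\ $\sS_G = \bigcup_j S_j^- \cup \bigcup_{j:\abs{S_j}=1} S_j^+$. This is the feature-level analogue of Theorem~3 of \cite{behr_provable_2022}, and I would establish it with the depth-weighted-prevalence estimates developed there, adapted to the ``$\max$ over supersets'' definition of $\sS_G$ in Algorithm~\ref{Algo:LocalLSSFindFI}. Three sub-claims suffice. (a)~If $(k,b)$ is a signed feature of an LSS basic signed interaction $S_j^\pm$, then taking $S^\pm = S_j^\pm$ as witness gives $2^{\abs{S_j^\pm}}\DWP_\epsilon(S_j^\pm)\ge1-\threshDWP$ with probability tending to one, by the lower bound on the depth-weighted prevalence of BSIs from \cite{behr_provable_2022} together with $\threshDWP>2^sb(\epsilon)$; hence $(k,b)\in\sS_G$. (b)~If $k$ is a noise feature, the impurity decrease at the first node of any path splitting on $k$ tends to $0$ in probability, so $\DWP_\epsilon(S^\pm)\le\DWP_\epsilon(\{(k,b)\})\pto0$ for every $S^\pm\ni(k,b)$; the rescaling factor $2^{s_{\max}}$ is a constant and a union bound over the (possibly many) noise features is controlled by $\log(p)/n\to0$ from \ref{C:sparsity}, so $(k,b)\notin\sS_G$. (c)~If $b=+1$ and $k$ lies in a BI $S_j$ with $\abs{S_j}\ge2$ (a wrong-sign feature), then every witness $S^\pm\ni(k,+1)$ has $2^{\abs{S^\pm}}\DWP_\epsilon(S^\pm)$ bounded asymptotically strictly below $1-\threshDWP$; this is where the calibration $\threshDWP<C_m^s/2$ enters, and the bound follows from a direct computation of $\DWP_\epsilon$ that keeps track of which feature of $S_j$ CART splits first along a random path, using \ref{A:mtry}.

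\textbf{The local prevalence filter.} With $\sS_G$ identified, I would process each $(k,b)\in\sS_G$ separately. If $(k,b)$ belongs to a basic signed interaction $S_j^{*\pm}$ of $\xtest$, then $\{(k,b)\}\subseteq S_j^{*\pm}$, and since $S_j^{*\pm}\subseteq\hat{\cF}_\epsilon(\ptest,T,\cD)$ forces $\{(k,b)\}\subseteq\hat{\cF}_\epsilon(\ptest,T,\cD)$, monotonicity of $\PP^*_\epsilon$ with respect to set inclusion gives $\PP^*_\epsilon(\{(k,b)\})\ge\PP^*_\epsilon(S_j^{*\pm})\ge1-b(\epsilon)+r_n(\cD,\epsilon)$ by Proposition~\ref{theo:BSIlowerBound}; since $b(\epsilon)<\threshPP$ and $r_n\pto0$, the filter is passed with probability tending to one, just as in the proof of Theorem~\ref{thm:PP_finds_BSI}. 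If instead $(k,b)\in\sS_G$ is \emph{not} a signed feature of any basic signed interaction of $\xtest$, three exhaustive cases remain. If $\{(k,b)\}$ is itself a single-feature LSS BSI that is not a BSI of $\xtest$, Proposition~\ref{theo:nonBSIto0} applies directly and $\PP^*_\epsilon(\{(k,b)\})\pto0$. If $(k,b)=(k,-1)$ with $k\in S_j$, $\abs{S_j}\ge2$, and $x^*_k>\gamma_k$, then the first node of $\ptest$ splitting on $k$ either has impurity decrease below $\epsilon$, or (being in the active region of $S_j$) has split threshold converging to $\gamma_k$, in which case $\ptest$ takes the $>$-branch; either way $(k,-1)\notin\hat{\cF}_\epsilon(\ptest,T,\cD)$ with probability tending to one, so $\PP^*_\epsilon(\{(k,-1)\})\pto0$. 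The delicate case is $(k,b)=(k,-1)$ with $k\in S_j$, $\abs{S_j}\ge2$, $x^*_k\le\gamma_k$, but $x^*_{k'}>\gamma_{k'}$ for some $k'\in S_j$; here I would show $\limsup_n\PP^*_\epsilon(\{(k,-1)\})\le1-C_m^s<1-\threshPP$. Combining these statements over the finitely many signal features (finite by \ref{C:sparsity}) together with the identification of $\sS_G$ then yields $\P_\cD(\sS_L=\bigcup_jS_j^{*-}\cup\bigcup_jS_j^{*+})\to1$.

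\textbf{Main obstacles.} Two steps carry the real work. The first is sub-claim~(c) above: because the set $\sS_G$ of Algorithm~\ref{Algo:LocalLSSFindFI} is not the set produced by LSSFind, excluding wrong-sign singletons requires re-running the depth-weighted-prevalence computation of \cite{behr_provable_2022} at the level of individual signed features, and it is the interplay of the rescaling $2^{\abs{S^\pm}}$ with the CART split order inside a BI and with $\mt\asymp p$ that pins down the condition $\threshDWP<C_m^s/2$. The second is the delicate case of the local filter, where $\PP^*_\epsilon(\{(k,-1)\})$ does not vanish: one must lower-bound, by a constant at least $C_m^s$, the probability that along $\ptest$ some $k''\in S_j$ with $x^*_{k''}>\gamma_{k''}$ is split before $k$ --- which pushes the impurity decrease of the subsequent $k$-split below $\epsilon$ and so keeps $(k,-1)$ out of $\hat{\cF}_\epsilon(\ptest,T,\cD)$ --- and here the random feature subsampling of \ref{A:mtry} is essential, as it randomizes the splitting order of the features of $S_j$ enough for this event to have non-negligible probability. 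Everything else --- monotonicity of $\PP^*_\epsilon$, the union bound over finitely many signal features, and the invocations of Propositions~\ref{theo:BSIlowerBound}--\ref{theo:nonBSIto0} and Theorem~3 of \cite{behr_provable_2022} --- is routine.
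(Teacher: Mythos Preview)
Your proposal is correct and follows essentially the same route as the paper: the paper packages your two filters into Propositions~\ref{theo:fDWP} and~\ref{theo:PP_feature}, whose proofs match your sub-claims (a)--(c) (via Theorem~2 of \cite{behr_provable_2022}) and your local-filter case analysis, respectively. The only notable difference is in your ``delicate case'': the paper bounds $\PP^*_\epsilon(\{(k,-1)\})$ by considering the event that the \emph{root} node splits on some $\bar k\in S_j$ with $x^*_{\bar k}>\gamma_{\bar k}$ (probability $\ge C_m^s$ by Theorem~S2 of \cite{behr_provable_2022}), and then argues $(k,-1)\notin\cF(\ptest)$ via the desirable-feature-set machinery together with $\hat\cF_\epsilon\subseteq\cF$, rather than reasoning directly about impurity decrease as you suggest --- but the mechanism and the resulting bound $1-C_m^s$ are the same.
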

Theorem~\ref{thm:consistencyLocalFI} establishes that the set of signed features produced by Algorithm~\ref{Algo:LocalLSSFindFI} asymptotically recovers exactly the signed signal features of the test point $\xtest$.
The proof of Theorem~\ref{thm:consistencyLocalFI} follows from the following two propositions.
To this end, define 
\begin{equation}\label{eq:globalsFI}
    \fDWP_\epsilon(k, b) := \max_{S^\pm \ni (k, b), \abs{S^\pm} \leq s_{\max}} 2^{\abs{S^\pm}} \cdot \DWP_\epsilon(S^\pm).
\end{equation}

\begin{proposition}\label{theo:fDWP}
    Suppose that the data $\cD$ is generated from the LSS model in Definition~\ref{def:LSS_model} with constraints \ref{C:uniformity}--\ref{C:sparsity}. Assume that the trees in the RF are CART trees that satisfy assumptions \ref{A:increasing_depth}--\ref{A:no_bootstrap}.

    Let $\threshDWP$ satisfy $2^s \cdot b(\epsilon) < \threshDWP < \frac{[C_m]^s}2$, and consider a signed feature $(k, b)$. If there exists a basic signed interaction $S_j^\pm$ with $(k, b) \in S_j^\pm$, then \[ \fDWP_{\epsilon}(k,b) \geq 1-\threshDWP \] with probability approaching one as $n\to\infty$. Conversely, if no such basic signed interaction exists, then \[ \fDWP_{\epsilon}(k,b) < 1-\threshDWP \] with probability approaching one as $n\to\infty$.
\end{proposition}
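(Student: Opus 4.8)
The plan is to reduce Proposition~\ref{theo:fDWP} to the already-established characterization of the global set $\sS_G$ from Theorem~3 of \cite{behr_provable_2022} (i.e., the statement about $\tilde{\sS}_G$ used in the proof of Theorem~\ref{thm:PP_finds_BSI}), and then carry out a combinatorial bookkeeping argument on the definition~\eqref{eq:globalsFI} of $\fDWP_\epsilon(k,b)$. The key observation is that $\fDWP_\epsilon(k,b) \geq 1 - \threshDWP$ holds \emph{iff} there exists some signed interaction $S^\pm \ni (k,b)$ with $\abs{S^\pm} \leq s_{\max}$ and $2^{\abs{S^\pm}}\DWP_\epsilon(S^\pm) \geq 1 - \threshDWP$, which is exactly the condition $S^\pm \in \tilde{\sS}_G$. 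So the claim becomes: $(k,b)$ is contained in some element of $\tilde{\sS}_G$ iff $(k,b)$ belongs to some BSI $S_j^\pm$ of the LSS model. Under the threshold window $2^s b(\epsilon) < \threshDWP < [C_m]^s/2$, Theorem~3 of \cite{behr_provable_2022} gives that with probability tending to one, the minimal elements of $\tilde{\sS}_G$ are precisely the BSIs of the LSS model of size at most $s_{\max}$; I would first need to argue the minimality passage is harmless here, since $(k,b)$ lies in \emph{some} element of $\tilde{\sS}_G$ iff it lies in some \emph{minimal} element of $\tilde{\sS}_G$ (every element of $\tilde{\sS}_G$ contains a minimal one, and every minimal one is in $\tilde{\sS}_G$).

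For the direct implication, if $(k,b) \in S_j^\pm$ for some BSI of the LSS model, then on the high-probability event where $\sS_G$ (the minimal-elements set) equals the set of BSIs of the LSS model of size $\leq s_{\max}$, the set $S_j^\pm$ (or a BSI contained in it of size $\leq s_{\max}$ — relevant when $\abs{S_j}$ exceeds $s_{\max}$, though one should check whether that case needs to be excluded or handled) witnesses $\fDWP_\epsilon(k,b) \geq 1 - \threshDWP$. I would want to double-check the edge case $\abs{S_j} > s_{\max}$: if a BSI is too large to be captured, then strictly speaking no sub-interaction of size $\leq s_{\max}$ need have large DWP, so the statement of the proposition implicitly assumes all relevant BSIs have size at most $s_{\max}$, or else $(k,b)$ should only be claimed recoverable when it sits in a small-enough BSI — this is the one place where I'd be careful to match hypotheses precisely. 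For the converse, if $(k,b)$ is in no BSI, then it is in no minimal element of $\tilde{\sS}_G$ on the same high-probability event, hence in no element of $\tilde{\sS}_G$ at all, hence every $S^\pm \ni (k,b)$ with $\abs{S^\pm}\leq s_{\max}$ has $2^{\abs{S^\pm}}\DWP_\epsilon(S^\pm) < 1 - \threshDWP$, so the max defining $\fDWP_\epsilon(k,b)$ is strictly below $1 - \threshDWP$.

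A subtlety I would flag is that the "iff $(k,b)$ is in a minimal element" equivalence requires that $\tilde{\sS}_G$ itself (not just its minimal elements) be controlled: a priori $\tilde{\sS}_G$ could contain enormous supersets of BSIs that introduce spurious features. But a noise feature $k$ appended to a BSI $S_j^\pm$ gives a set $S_j^\pm \cup \{(k,b)\}$ whose DWP is strictly smaller than that of $S_j^\pm$ (adding a constraint can only shrink the event in~\eqref{eq:DWP}), and the rescaling $2^{\abs{S^\pm}}$ doubles, so whether such a superset lands in $\tilde{\sS}_G$ is not immediately obvious — it could, if the DWP doesn't drop by more than a factor of two. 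However, this does not hurt the argument: every element of $\tilde{\sS}_G$ contains a \emph{minimal} element of $\tilde{\sS}_G$, and by Theorem~3 of \cite{behr_provable_2022} the minimal elements are exactly the BSIs; so if $(k,b)$ sits in some superset in $\tilde{\sS}_G$, that superset contains a minimal element which is a BSI, but that BSI need not contain $(k,b)$ — so this direction genuinely needs the fact that a set in $\tilde{\sS}_G$ that properly contains a BSI and also contains an extra feature $(k,b)$ cannot itself have large enough rescaled DWP. I expect this is precisely where the upper bound $\threshDWP < [C_m]^s/2$ enters (the factor of $2$ and the $[C_m]^s$ gap), via the same noise-feature DWP estimates used in \cite{behr_provable_2022}; the main obstacle in writing a clean proof is making this quantitative "supersets don't survive the threshold" statement rigorous by invoking the appropriate lemma from \cite{behr_provable_2022} rather than re-deriving the DWP bounds for interactions padded with noise features.
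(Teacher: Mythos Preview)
Your proposal is correct in spirit but takes an unnecessary detour. You route the argument through Theorem~3 of \cite{behr_provable_2022} (the characterization of $\sS_G$ as the set of minimal elements of $\tilde{\sS}_G$), and then discover that the minimality passage does not directly control supersets of BSIs padded with a noise feature $(k,b)$---which forces you to reach back for the underlying DWP bounds anyway. The paper's proof skips this detour entirely and invokes Theorem~2 of \cite{behr_provable_2022}, which is precisely the two-sided statement you need: with probability tending to one, $2^{\abs{S^\pm}}\DWP_\epsilon(S^\pm) \geq 1-\threshDWP$ if $S^\pm$ is a USI, and $< 1-\threshDWP$ otherwise. From this the proposition is immediate: if $(k,b)\in S_j^\pm$ then $S_j^\pm$ (a BSI, hence a USI) witnesses $\fDWP_\epsilon(k,b)\geq 1-\threshDWP$; conversely, if $(k,b)$ lies in no BSI then no $S^\pm$ containing $(k,b)$ can be a USI, so the maximizer in~\eqref{eq:globalsFI} already has rescaled DWP below $1-\threshDWP$.

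The ``supersets don't survive the threshold'' worry you flag is exactly what the second part of Theorem~2 handles directly, and indeed the upper bound $\threshDWP < [C_m]^s/2$ is baked into that theorem's hypotheses, just as you suspected. So there is no gap in your reasoning---you simply did not identify Theorem~2 (rather than Theorem~3) as the clean reference, and as a result spent effort reconstructing a consequence of it. Your caution about the edge case $\abs{S_j} > s_{\max}$ is legitimate; the paper's proof tacitly assumes the relevant BSI fits under $s_{\max}$, which is consistent with the sparsity constraint~\ref{C:sparsity} and the intended use in Theorem~\ref{thm:consistencyLocalFI}.
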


\begin{proposition}\label{theo:PP_feature}
    Suppose that the data $\cD$ is generated from the LSS model in Definition~\ref{def:LSS_model} with constraints \ref{C:uniformity}--\ref{C:sparsity}. Assume that the trees in the RF are CART trees that satisfy assumptions \ref{A:increasing_depth}--\ref{A:no_bootstrap}. Fix some test point $\xtest \in [0,1]^p$ independent of $\cD$ such that for all $k \in \cup_{j = 1}^J S_j$ we have $x^*_k \neq \gamma_k$.

    Let $\threshPP$ satisfy $2^s \cdot b(\epsilon) < \threshPP < [C_m]^s$, and consider a signed feature $(k, b)$ that belongs to a basic signed interaction $S^{\pm}_j$. If $S^\pm_j$ is also a BSI for the test point $\xtest$, then \[ \PP^*_{\epsilon}(\{(k, b)\}) \geq 1-\threshPP \] with probability approaching one as $n\to\infty$.  If it is not a BSI for the test point, then \[ \PP^*_{\epsilon}(\{(k, b)\}) < 1-\threshPP \] with probability approaching one as $n\to\infty$.
\end{proposition}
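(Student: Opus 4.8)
The plan is to treat the two directions of the statement separately; the positive direction and the case $\abs{S_j}=1$ reduce immediately to Propositions~\ref{theo:BSIlowerBound} and~\ref{theo:nonBSIto0}, whereas the case $\abs{S_j}\ge 2$ of the negative direction carries the real content.

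\emph{Positive direction, and $\abs{S_j}=1$.} If $(k,b)\in S_j^\pm$ and $S_j^\pm$ is a BSI for $\xtest$, then $S_j^\pm=S_j^{*\pm}$ in the notation of Definition~\ref{def:BSI_xtest}. Since $\{(k,b)\}\subseteq S_j^{*\pm}$ and $\PP^*_\epsilon$ is monotone non-increasing under enlarging its argument (it is the probability of a set-containment event), Proposition~\ref{theo:BSIlowerBound} gives $\PP^*_\epsilon(\{(k,b)\})\ge\PP^*_\epsilon(S_j^{*\pm})\ge 1-b(\epsilon)+r_n(\cD,\epsilon)$ with $r_n\pto 0$, and because $b(\epsilon)\le 2^s b(\epsilon)<\threshPP$ the concentration argument from the proof of Theorem~\ref{thm:PP_finds_BSI} yields $\P_\cD(\PP^*_\epsilon(\{(k,b)\})\ge 1-\threshPP)\to 1$. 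If instead $\abs{S_j}=1$, then $\{(k,b)\}$ equals the BSI $S_j^\pm$ itself, which is a BSI in the LSS model but, by hypothesis, not a BSI for $\xtest$, so Proposition~\ref{theo:nonBSIto0} gives $\PP^*_\epsilon(\{(k,b)\})\pto 0<1-\threshPP$.

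\emph{Negative direction, $\abs{S_j}\ge 2$.} Then $b=-1$, and ``$S_j^-$ not a BSI for $\xtest$'' means the set of deactivating features $D_j:=\{k^\star\in S_j:x^*_{k^\star}>\gamma_{k^\star}\}$ is nonempty. I would first use the CART-split characterization under the LSS model that underlies Propositions~\ref{theo:BSIlowerBound}--\ref{theo:nonBSIto0}: with probability tending to one, uniformly over nodes (the union bound uses $\log p/n\to 0$ from \ref{C:sparsity}), a split on a feature $k'$ of a basic interaction $S_{j'}$ with impurity decrease at least $\epsilon$ can occur only in a cell on which $\prod_{k''\in S_{j'}}\1(X_{k''}\le\gamma_{k''})$ is not identically zero, and then its threshold is within $o(1)$ of $\gamma_{k'}$; conversely, on a cell on which that product vanishes identically, feature $k'$ carries no signal (by the non-overlap assumption C3) and no split on it reaches impurity decrease $\epsilon$. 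Two consequences follow. First, if $x^*_k>\gamma_k$, the first $\epsilon$-significant split on $k$ along $\ptest$ (the only one eligible for $\hat\cF_\epsilon$) has threshold $\approx\gamma_k<x^*_k$, so it records the sign $+1$ and $\PP^*_\epsilon(\{(k,-1)\})\pto 0$. Second, if $x^*_k<\gamma_k$, then $\{(k,-1)\in\hat\cF_\epsilon(\ptest)\}$ can hold, up to an event of probability $o(1)$, only if $k$ receives an $\epsilon$-significant split on $\ptest$ strictly above every node of $\ptest$ at which a feature of $D_j$ does: a split near $\gamma_{k^\star}$ on $k^\star\in D_j$ routes $\xtest$ into $\{X_{k^\star}>\gamma_{k^\star}\}$, on which $S_j$'s indicator product vanishes identically, so no later split on $k$ is $\epsilon$-significant.

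It then suffices to bound $\P_T(k\text{ is }\epsilon\text{-split on }\ptest\text{ before any feature of }D_j)$ by $1-[C_m]^s+o(1)$, since $\threshPP<[C_m]^s$ then gives $\PP^*_\epsilon(\{(k,-1)\})<1-\threshPP$ with probability tending to one. I would read the descent of $\xtest$ as a race among the features of $S_j$: while $S_j$ is still active, a split on a feature of $S_j$ sits at threshold $\approx$ its $\gamma$ and routes $\xtest$ to the active side if the feature is non-deactivating (in particular for $k$, as $x^*_k<\gamma_k$) and to the inactive side---ending the race---if the feature is in $D_j$; splits on other, still-active basic interactions or on noise features only pass $\xtest$ to a deeper cell without touching $S_j$. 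The variance reduction of splitting $X_{k'}$ at $\gamma_{k'}$ ($k'\in S_j$) in an $S_j$-active cell equals a common positive factor times $(1-\gamma_{k'})/\gamma_{k'}$, so at the first step of the race the feature split is the smallest-threshold element of $\Mt\cap S_j$. Writing $k^\star$ for the smallest-threshold feature of $D_j$ and $L:=\{k'\in S_j:\gamma_{k'}<\gamma_{k^\star}\}\subseteq S_j\setminus D_j$ (so $\abs L\le\abs{S_j}-1$), the race is lost at its first step whenever $k^\star\in\Mt$ and $L\cap\Mt=\emptyset$, which by Assumption~\ref{A:mtry} has probability at least $[C_m]^{\abs L}\cdot C_m\ge[C_m]^{\abs{S_j}}\ge[C_m]^s$; combining this with the disjoint event that no feature of $S_j$ is ever $\epsilon$-split on $\ptest$---which also forces $(k,-1)\notin\hat\cF_\epsilon(\ptest)$---gives the required bound $[C_m]^s-o(1)$ on the loss probability.

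The main obstacle is making this last step rigorous. The race does not literally start at the root, so the probability that some feature of $D_j$ is split before $k$ must be controlled over the entire descent, not at one node. One must show that interleaved splits on competing (still-active) basic interactions and on noise features do not decrease the loss probability---conditioning on ``the next split is on a feature of $S_j$'' perturbs the local law of $\Mt$ through the competition with features of other interactions, whose coefficients are unconstrained---and that cells that have shrunk below the point where any split reaches impurity decrease $\epsilon$ only add to the loss. Carrying this recursion through, refining the interaction-level $\Mt$-counting in the appendix proof of Proposition~\ref{theo:nonBSIto0} to the single-feature level, is the technical heart of the argument.
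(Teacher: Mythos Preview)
Your positive direction and the sub-case $x^*_k$ on the wrong side of $\gamma_k$ match the paper exactly. The gap is in the remaining sub-case ($\abs{S_j}\ge 2$, $x^*_k<\gamma_k$), and you identify it yourself: your race argument is not completed, and the obstacle you describe---controlling the competition with features of other active BIs over the entire descent---is real. Your step ``the race is lost at its first step whenever $k^\star\in\Mt$ and $L\cap\Mt=\emptyset$'' is not justified as written, because at that node a feature of another $S_{j'}$ may out-compete every element of $\Mt\cap S_j$; conditioning on ``the split is on a feature of $S_j$'' then distorts the law of $\Mt$ exactly as you fear.

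The paper avoids the race altogether by a much shorter argument that you may have overlooked. Fix any $\bar k\in D_j$ and look only at the \emph{root}. By Theorem~S2 in the supplement of \cite{behr_provable_2022}, the root of $T$ splits on $\bar k$ with probability at least $[C_m]^s$ (asymptotically); on the event $A^*$ (Lemma~\ref{thm:AstarTo1}), this split sits near $\gamma_{\bar k}$, so $\xtest$ goes to the $>$ child. From then on $(\bar k,+1)\in F^\pm(t)$ for every node on $\ptest$, hence $k\notin U(t)$ by the definition of the desirable set, and therefore $(k,-1)\notin\cF(\ptest)$. Now the paper invokes the oracle inclusion \eqref{eq:not_contain} of Proposition~\ref{theo:hatFequalF}, $\P_T(\hat\cF_\epsilon(\ptest)\nsubseteq\cF(\ptest)\mid\cD)\pto 0$, to conclude $(k,-1)\notin\hat\cF_\epsilon(\ptest)$ on this event as well. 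This gives directly
\[
\PP^*_\epsilon(\{(k,-1)\})\le \P_T(t_{\roo}\text{ does not split on }\bar k\mid\cD)+\P_T({A^*}^c\mid\cD)+\P_T(\hat\cF_\epsilon\nsubseteq\cF\mid\cD)\le 1-[C_m]^s+o_p(1),
\]
with no recursion and no race. The two ingredients you are missing are (i) restricting attention to the root, where the $[C_m]^s$ bound is already available off the shelf, and (ii) using the oracle set $\cF(\ptest)$ and the containment $\hat\cF_\epsilon\subseteq\cF$ rather than arguing directly about which downstream splits on $k$ can reach impurity decrease $\epsilon$.
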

The proofs of these propositions are deferred to the appendix. 

\begin{proof}[Proof of Theorem~\ref{thm:consistencyLocalFI}]
It suffices to show the following:
If there exists a basic signed interaction $S_j^{*\pm}$ for the test point $\xtest$ with $(k, b) \in S_j^{*\pm}$, then
    \[ \fDWP_{\epsilon}(k,b) \geq 1-\threshDWP \quad\text{and}\quad \PP^*_{\epsilon}({(k, b)}) \geq 1-\threshPP \]
    with probability approaching one as $n\to\infty$. Conversely, if no such basic signed interaction for the test point exists, then
    \[ \fDWP_{\epsilon}(k,b) < 1-\threshDWP \quad\text{or}\quad \PP^*_{\epsilon}({(k, b)}) < 1-\threshPP \]
    with probability approaching one as $n\to\infty$.

    If there is a basic signed interaction for the test point, then it is also a basic signed interaction in the LSS model. Therefore, the lower bound for $\fDWP$ follows from Proposition~\ref{theo:fDWP} and the lower bound for $\PP^*$ follows from Proposition~\ref{theo:PP_feature}.

    If no basic signed interaction for the test point contains $(k, b)$, we have to distinguish two cases. If there is no basic signed interaction containing $(k, b)$ in general, then the upper bound for $\fDWP$ follows from Proposition~\ref{theo:fDWP}. On the other hand, if there is a basic signed interaction, which is not a basic signed interaction for the test point, then the upper bound for $\PP^*$ follows from Proposition~\ref{theo:PP_feature}.
\end{proof}

\section{Simulations and application}\label{sec:simulation}
We implemented LocalLSSFind in Python and \textsf{R}. The Python-implementation extends and improves upon the implementation of \cite{behr_provable_2022} and works with Decision Trees and RFs from scikit-learn \cite{scikit-learn}. The \textsf{R}-implementation works with Decision Trees and RFs from the \texttt{ranger} package \cite{wright2017ranger}. 

\subsection{Simulated data from LSS models}
To illustrate our theoretical results and assess the empirical performance of LocalLSSFind, we generated data from the LSS model in~\eqref{eq:E_lssmodel} with different parameters and evaluated the recovered interactions. The models considered have $p=20$ features, where each $X_j$ is sampled independently and uniformly from $[0, 1]$, and outcomes were generated by
\begin{equation}\label{eq:sim_lssmodel}
    Y = \sum_{j=1}^{J} \prod_{k=(j-1) \cdot L + 1}^{j \cdot L} \1(X_k < \tau) + \mathcal{N}(0, \sigma^2),
\end{equation}
where $J \in \{1, 2\}$ is the number of basic signed interactions in the LSS model and $L \in \{2, 3, 4\}$ denotes their respective size. The noise variance $\sigma^2$ was chosen such that the signal-to-noise ratio (SNR) is $\operatorname{SNR} \in \{0.5, 1.0, 2.0, 5.0\}$. The threshold was set to $\tau = 0.5^{1/L}$, chosen so that each BSI was a BSI for approximately half of the considered test points, i.e., such that~\eqref{eq:BSI_xtest} holds. 
RF were trained with $n=1,000$ or $n=10,000$ samples using $\mt = p/2 = 10$ (see the discussion in \cite{behr_provable_2022} on this choice) and $500$ trees. LocalLSSFind was applied with $100$ new test points sampled uniformly from $[0, 1]^p$, with the chosen hyper-parameters $\epsilon = \threshDWP = \threshPP = 0.01$ fixed without any tuning.

Considering only the respective BSIs in the LSS model, we compared the path prevalences $\PP$ of interactions that are BSIs for the test point with those that are not (Figure~\ref{fig:PPforBSI}). The observed $\PP$ for BSIs of the respective test point tend to be higher---often close to 1---whereas those for BSIs that are not BSIs for the test point are mostly close to 0. This agrees with the statements of Proposition~\ref{theo:BSIlowerBound} and Proposition~\ref{theo:nonBSIto0}. When comparing different LSS models, the BSIs for the test points are more clearly separated from the remaining BSIs in models with a single interaction than in models with two interactions, and this separation becomes stronger as the sample size increases, as expected.

\begin{figure}[tb]
    \centering
    \includegraphics[width=\textwidth]{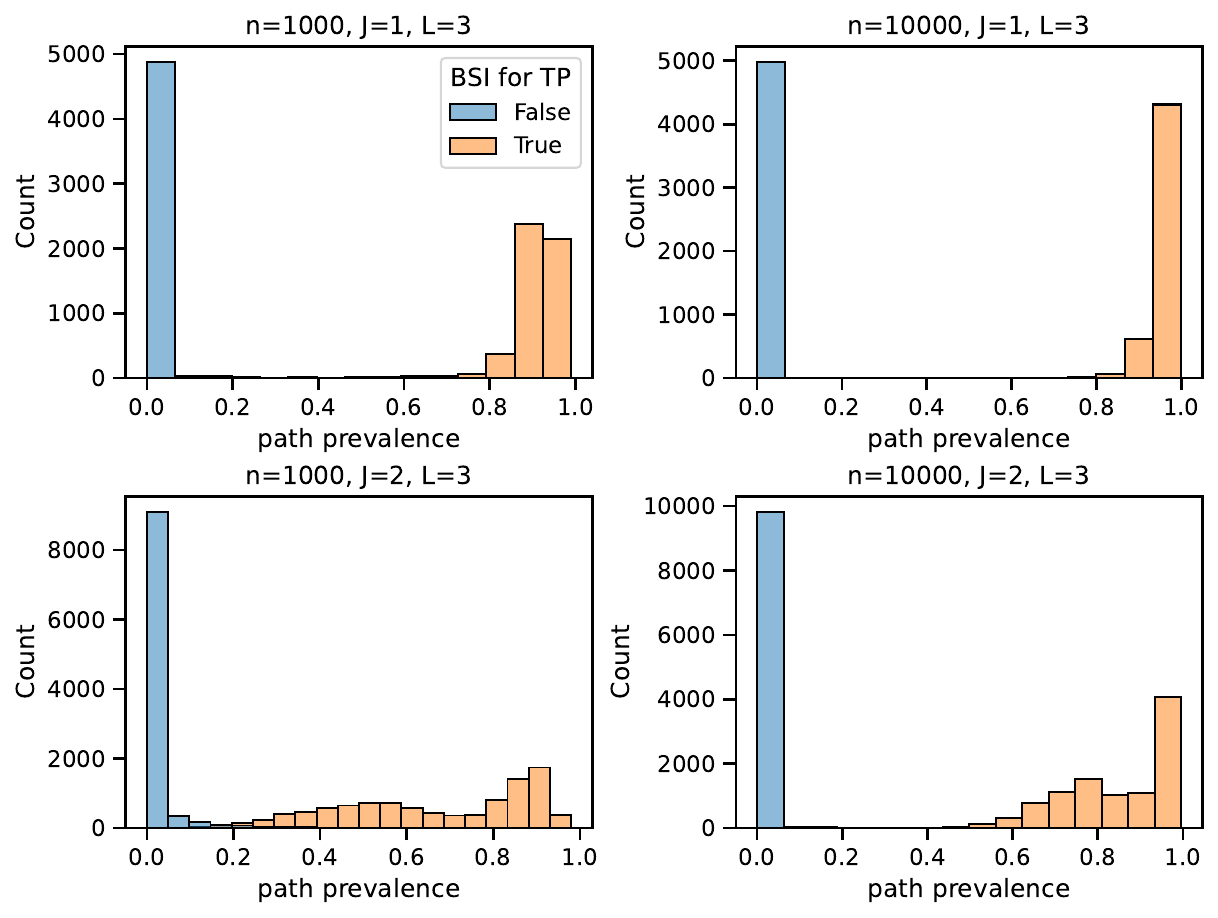}
    \caption{Histograms of $\PP$ for BSIs in the LSS model. BSIs for the test points are shown in orange while BSIs in the LSS model, which are not BSIs for the test points, are shown in blue. In all four cases, \( \operatorname{SNR} = 1.0 \) was used.}
    \label{fig:PPforBSI}
\end{figure}

Because all BSIs for the test points are also BSIs in the LSS model, they are expected to receive high scores even when considering only the global $\DWP$ measure (i.e., independent of the specific test point).
We summarized $\PP$ and $\DWP$ into a single prevalence-based interaction importance ($\PII$) statistic using their normalized product, that is, for any candidate signed interaction $S^\pm \in [p]\times \{-1,1\}$, we defined
\begin{equation}\label{eq:product}
    \PII(S^{\pm}) = 2^{\abs{S^\pm}} \cdot \DWP(S^\pm) \cdot \PP(S^\pm).
\end{equation}
We then compared the rankings of interactions based on $\PII$ with those based on $\DWP$ alone, in order to assess whether including local path-prevalence information improves the identification of interactions that are relevant for the test point $\xtest$.
For models with a maximum interaction size of $L \in \{2,3,4\}$, we considered candidate interactions of size up to $L + 1$. In total, approximately $(3 p)^{L + 1} = 60^{L+1}$ signed candidate interactions are possible, which corresponds to $\approx 10^5, 10^7, 10^9$ for $L = 2,3,4$, respectively.
We considered two statistics to compare the ranking performance of the global $\DWP$ vs.\ the local $\PII$ statistic. First, we evaluated the inclusion rate among the 10 highest-valued signed interactions. Second, we computed a ROC-AUC restricted to the 10 highest-ranked interactions (classifying these as BSIs for the test point), assigning a ROC-AUC of 0 if not all BSIs for the test point were included within the considered interactions.
As the ranking performance is only meaningful for test points $\xtest$ that have at least a single BSI, we restricted the evaluation to such test points.
Note that when $J = 1$, the global and local BSIs coincide. Therefore, Figure~\ref{fig:compare_DWP_PP} shows the results for $J = 2$ only.

\begin{figure}[tb]
    \centering
    \begin{minipage}[t]{0.48\textwidth}
        \centering
        \includegraphics[width=\textwidth]{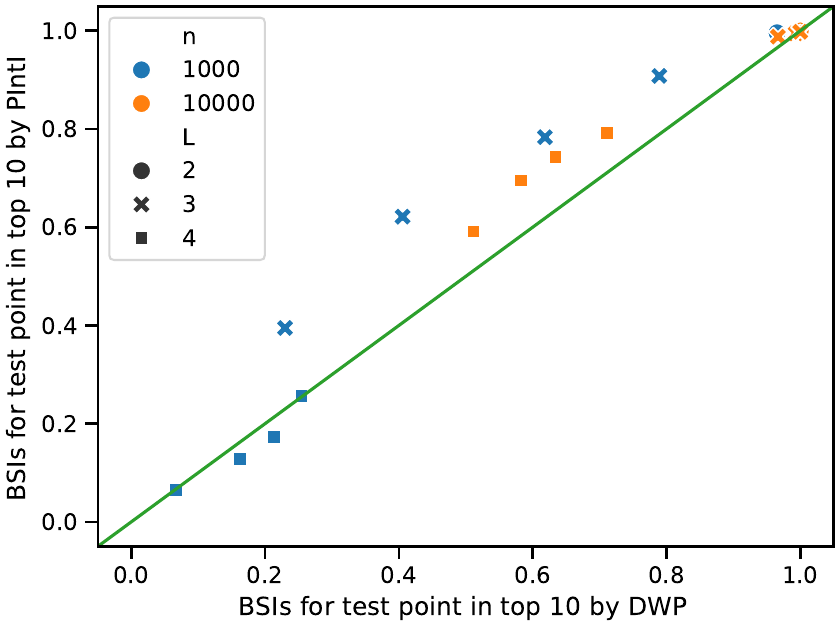}

        \small (a) Relative frequency with which BSIs for the test point are included in the top 10 interactions according to the considered metrics.
    \end{minipage}
    \hfill
    \begin{minipage}[t]{0.48\textwidth}
        \centering
        \includegraphics[width=\textwidth]{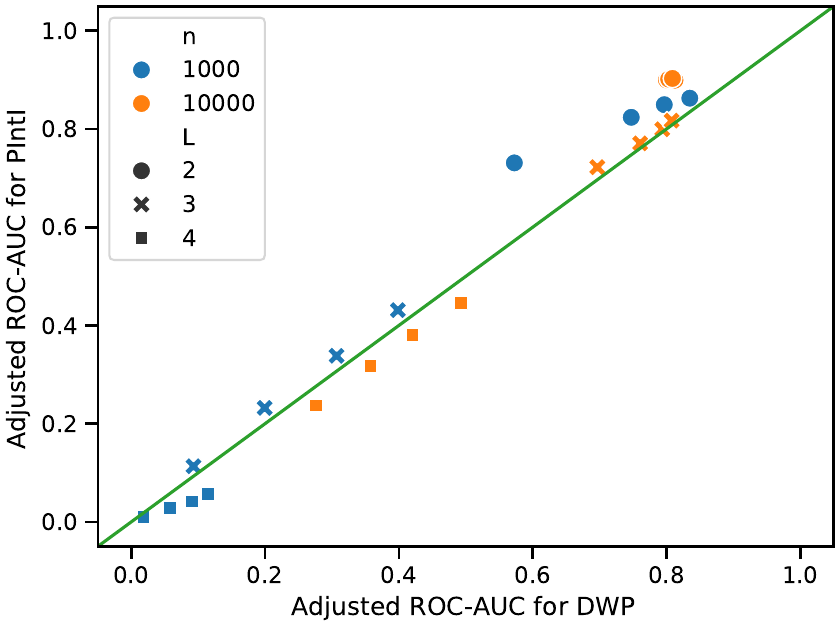}

        \small (b) Adjusted ROC-AUC (see text) based on the ranking of the 10 interactions with highest scores according to the respective metrics.
    \end{minipage}
    \vspace{1em}
    \caption{Evaluations of interaction rankings: y-axis shows rankings based on $\PII$, and x-axis shows rankings based on $\DWP$ alone. Colors and shapes of points indicate simulation parameters $n$ (number of observations) and $L$ (size of interactions), respectively. For each combination of $n,L$ the four different points correspond to different SNRs (see the appendix for the exact values). Points on the green line have equal rankings for both metrics; points above the line indicate better performance with $\PII$, while points below indicate better performance when using $\DWP$ alone.}
    \label{fig:compare_DWP_PP}
\end{figure}

A trade-off can be observed when comparing the global and local rankings (Figure~\ref{fig:compare_DWP_PP}). The local information provided by the $\PP$ statistic focuses specifically on decision paths in the tree ensemble that contain the test point $\xtest$. Consequently, it is primarily influenced by the subset of training observations that share a substantial portion of their decision path with $\xtest$. This reduced effective sample size for local information can negatively impact the overall power of the approach to detect global BSIs and, therefore, also the local BSIs for $\xtest$. For $L = 4$, decision paths must be relatively long to cover the full interaction length. As a result, the number of training observations sharing such long paths is small, and incorporating local information via $\PII$ may lead to worse performance compared to using global information from all decision paths (and thus all training observations) via $\DWP$. However, once the global ranking achieves sufficient accuracy---e.g., ROC-AUC $> 0.5$ or BSIs among the top 10 interactions with a frequency of at least $40\,\%$---we consistently observe an improvement in local rankings when incorporating local path prevalence information.

\subsection{Comparison with TreeSHAP}
An established method for local feature importance for tree ensembles is TreeSHAP by \cite{lundberg_local_2020}. It is also possible to compute TreeSHAP scores for interactions \cite{muschalik_beyond_2024}. 
Although SHAP values are theoretically grounded through their correspondence to a decomposition of the prediction model, this does not guarantee that the recovered features and interactions align with those locally relevant for test points as defined by the LSS model. While LSS models provide a canonical definition of the underlying local signal (including signed interactions), this structure does not necessarily coincide with the prediction-based perspective underlying SHAP values, as we demonstrate below.
Since SHAP values cannot recover the sign of a feature within an interaction, signs are ignored for treeSHAP in the following simulation study. That is, an interaction is counted as a true positive for treeSHAP as long as the correct interaction (without sign) is recovered, whereas for LocalLSSFind the sign must also be correctly reconstructed to count as a true positive.

We again simulated data from LSS models as before. Figure~\ref{fig:ranks_prod_SHAP} compares the rankings of global BSIs (and BIs, respectively) in the LSS model based on treeSHAP scores and the LocalLSSFind score via $\PII$. (Signed) interactions that are actual BSIs (or BIs) for the test point $\xtest$ are highlighted in yellow, while BSIs (or BIs) that are global but not local for $\xtest$ are shown in blue. The figure clearly demonstrates that treeSHAP fails to capture local interactions for the test point, in contrast to LocalLSSFind.
Figure~\ref{fig:frequency_prod_SHAP} reports the frequency with which the top-10 ranked interactions based on treeSHAP and LocalLSSFind contain the local BSIs of the test point. Again, these results confirm that treeSHAP does not identify local BSIs within the LSS model, in contrast to LocalLSSFind.

\begin{figure}[tb]
    \centering
    \includegraphics[width=\textwidth]{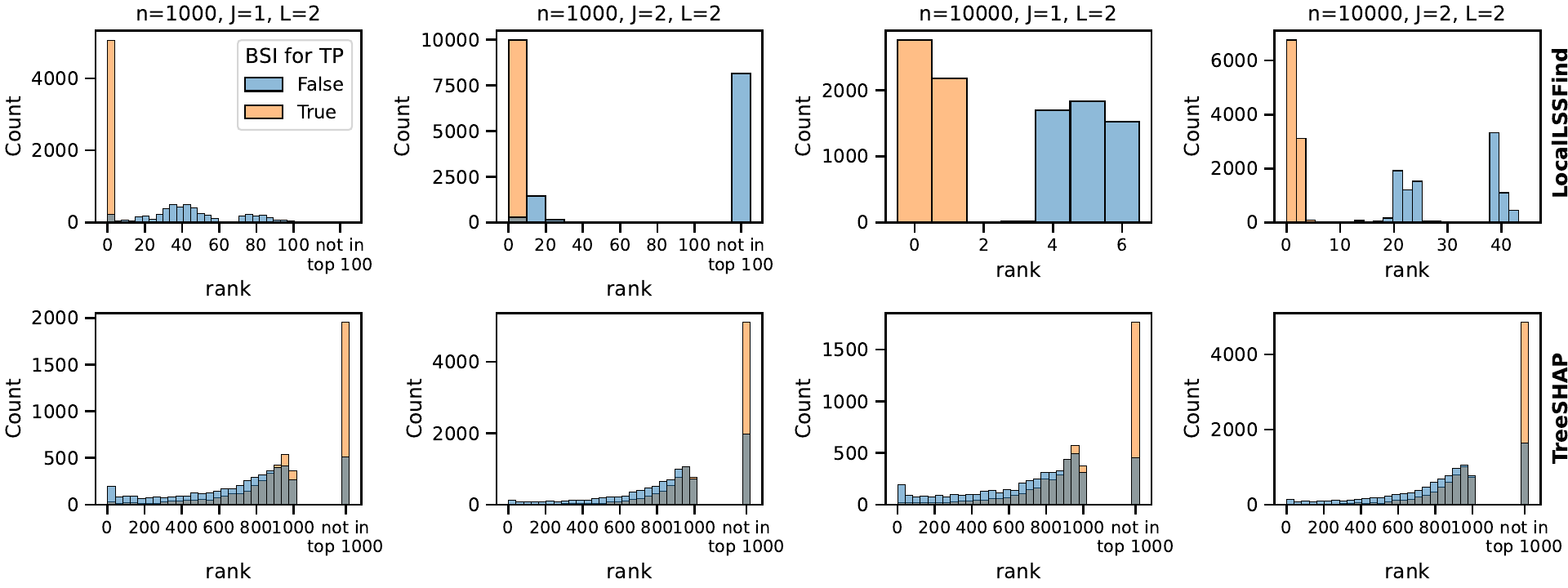}
    \caption{Ranks of local interaction importance scores of BSIs in the LSS model. BSIs for the test points are shown in orange while BSIs, that are not BSIs for the test points, are shown in blue. The simulation parameters are shown above the graphs. In all cases, \( \operatorname{SNR} = 1.0 \) was used. Top row: LocalLSSFind via $\PII$; bottom row: TreeSHAP.}
    \label{fig:ranks_prod_SHAP}
\end{figure}

\begin{figure}[tb]
    \centering
    \includegraphics[width=0.5\textwidth]{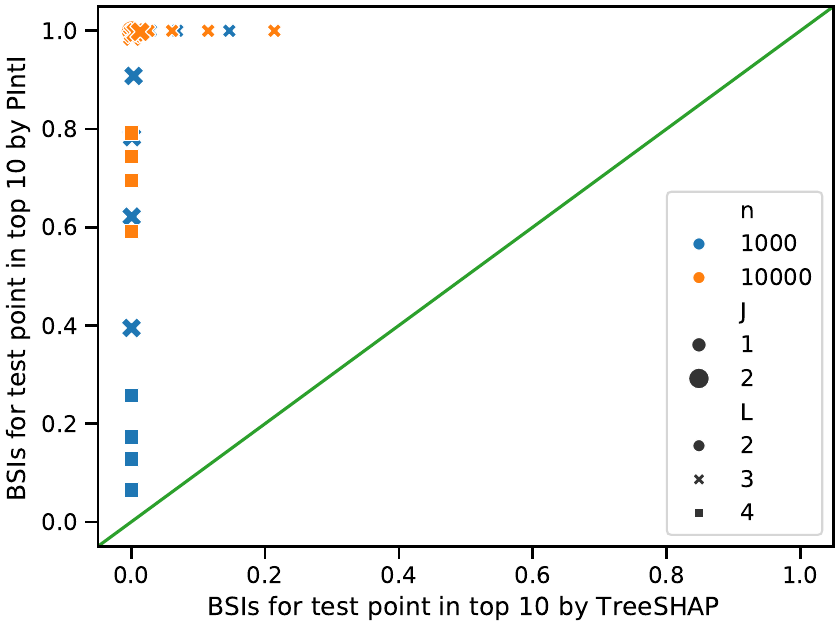}
    \caption{Relative frequency, with which BSIs for the test point are included in the top 10 interactions based on LocalLSSFind (via $\PII$) (y-axis) and TreeSHAP (x-axis).}
    \label{fig:frequency_prod_SHAP}
\end{figure}

\subsection{Application to COMPAS dataset} \label{sec:application}

We analyzed the publicly available COMPAS dataset of defendants from Broward County, Florida, for whom COMPAS violent recidivism scores were recorded \cite{larson2016we}. To reduce sparsity in some categories, we restricted the sample to African-American and Caucasian defendants and followed the preprocessing procedure described by ProPublica and adopted in \cite{herbinger2024decomposing},  
resulting in 3,373 observations. The outcome of interest was the binary indicator derived from the COMPAS violent recidivism score and distinguishes high (= 1) and low (= 0) recidivism risk. The considered features were age (\emph{age}), number of prior offenses (\emph{priors}), an indicator of whether the current charge is a felony (\emph{crime}, $1$ corresponds to felony and $2$ to misdemeanors), ethnicity (\emph{ethnicity}, $1$ corresponds to African-American and $2$ to Caucasian), and gender (\emph{gender} $1$ corresponds to female and $2$ to male). 
We employed the \texttt{ranger} implementation of RF in \textsf{R} \cite{wright2017ranger}, using the same hyperparameter settings as in the previous simulation study: $500$ trees and $ \threshDWP = \threshPP = 0.01$. Here, we used a different impurity decrease threshold, $\epsilon =0.001$, than in the simulation study ($\epsilon =0.01$), since the choice of $\epsilon$ depends on the scale of the response variable; results for $\epsilon =0.01$ are provided in the appendix.
Additional model tuning was performed via 10-fold cross-validation using the \texttt{tuneRanger} package \cite{probst2019hyperparameters}, optimizing the ROC-AUC.
The parameter $\mt$ was selected as $\mt = 3$ or $\mt = 2$, depending on the fold (note that the theoretically motivated optimal choice from \cite{behr_provable_2022} is $p/2 = 2.5$). The minimum node size (\texttt{min.node.size}) was also determined through cross-validation.
The overall cross-validated ROC-AUC for predicting the COMPAS score with this RF model was approximately $0.82$.
When evaluating LocalLSSFind, we used the respective hold-out fold to obtain the test points $\xtest$.

\begin{figure}[tb]
    \centering
    \includegraphics[width=\linewidth]{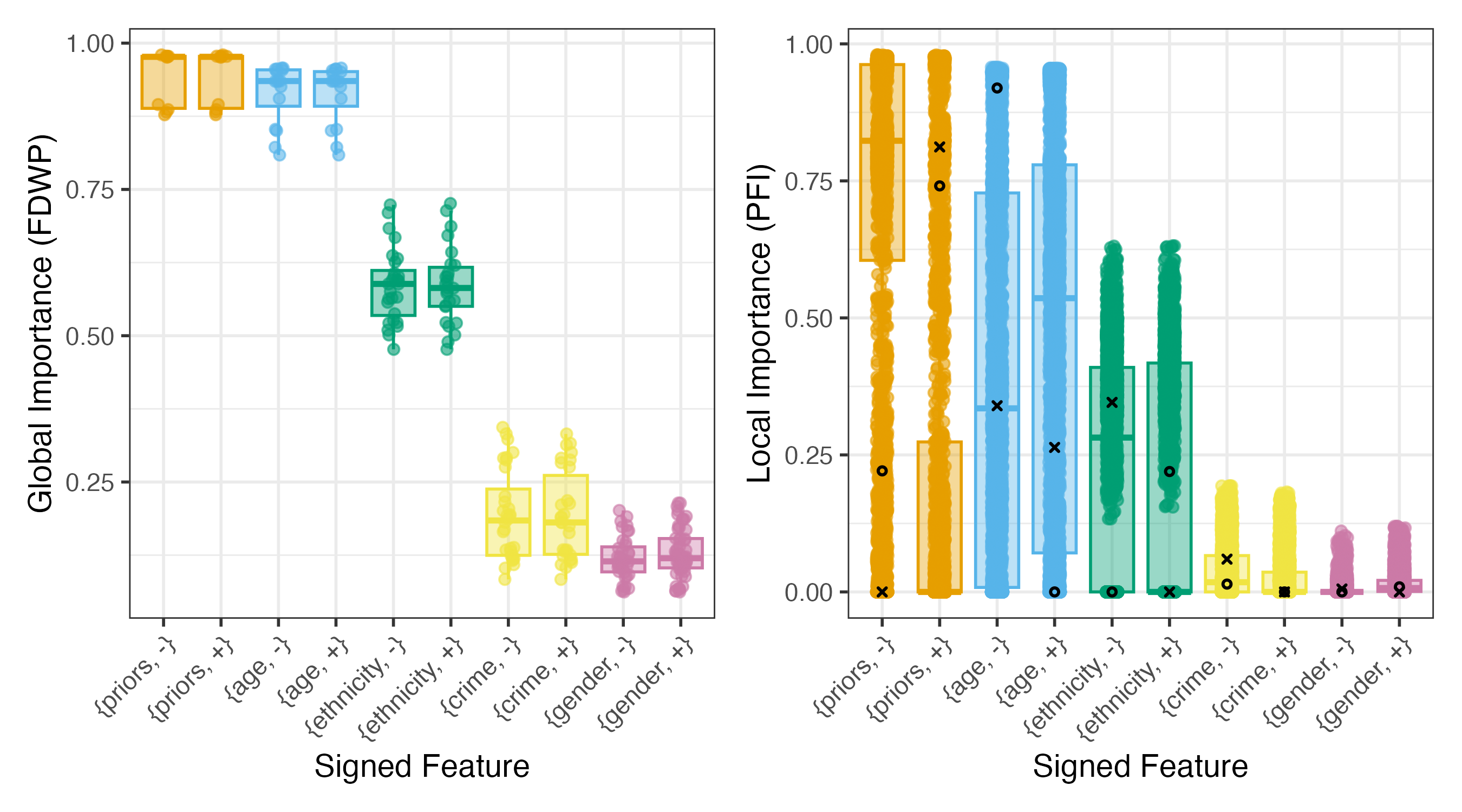}
    \caption{Global signed feature importance scores, $\fDWP$ as in~\eqref{eq:globalsFI}, across ten fold models (left) and local signed feature importance scores, $\PFI$ as in~\eqref{eq:localPFI}, for all observations. In the right panel, the circle and cross correspond to two individuals with the same predicted probability of $84\,\%$ for receiving a class one COMPAS score, see main text for details.}
    \label{fig:fi}
\end{figure}

Figure~\ref{fig:fi} (left) shows the global signed feature importance (cf.\ \eqref{eq:globalsFI}) over the 10 different folds of the cross validation scheme. The overall magnitude is consistent with the classical mean decrease in Gini impurity (MDI) implemented in the \texttt{ranger} package (which gave $202.50$, $196.57$, $40.12$, $9.05$, and $8.26$ as feature importance for features \emph{priors}, \emph{age}, \emph{ethnicity}, \emph{crime}, and \emph{gender}, respectively)%
\footnote{We observed that all binary features (\emph{ethnicity}, \emph{crime}, and \emph{gender}) exhibit lower importance compared to the two continuous-valued features (\emph{priors} and \emph{age}) in this example. It is well known that feature importance measures in RF can be biased toward features with a larger number of distinct values; see, e.g., \cite{zhou2020}. To assess whether this bias occurs here, we additionally applied the de-biased MDI proposed by \cite{zhou2020}, but this adjustment did not substantially alter the overall MDI magnitudes. 
Furthermore, introducing a small amount of additional uniform noise to the binary features did not increase their importance significantly.}.
For the global signed importance scores, we observed minimal differences between the two signs. In general, differences in global signed feature importance between the two signs can arise when a feature interacts with another feature within a specific sign combination. 

For local signed feature importance, we considered the same product statistic as in~\eqref{eq:product}, but for individual features instead of interactions, as in Algorithm~\ref{Algo:LocalLSSFindFI}, namely
\begin{equation}\label{eq:localPFI}
   \PFI(k,b) =   \PP_\epsilon(\{(k,b)\}) \cdot \fDWP_\epsilon(k, b), \text{ for } (k,b) \in [p] \times \{-1,1\}.
\end{equation}
Figure~\ref{fig:fi} (right) shows the distribution of these local signed feature importance scores across all subjects. As shown, there are significant differences in features that are considered important for different subjects.
For illustration, we highlighted two specific subjects in the right part of Figure~\ref{fig:fi}. 
The first subject is a 23-year-old Caucasian male with four prior offenses, charged with a felony (represented by a circle). 
The second subject is a 35-year-old African-American woman with thirteen prior offenses and the same charge (represented by a cross). 
Both individuals received the same predicted probability of a high violent recidivism COMPAS score, namely $84\,\%$. 
As suggested by Figure~\ref{fig:fi} (right), for the African-American female, the high predicted risk seems to be primarily driven by her large number of prior offenses. 
For the Caucasian male, the high predicted risk seems primarily influenced by young age and large number of prior offenses. 
Given that he had only four prior offenses---moderate to low compared to the overall population---it is plausible that this is driven by an interaction effect with his young age.
To investigate interaction effects, we first considered all second-order signed interactions and their global $\DWP$ importance scores. We observed that signed interactions between age and prior offenses are the largest among all pairwise interactions, which is consistent with the findings in \cite{herbinger2024decomposing}. Figure~\ref{fig:interaction} shows an interaction map of pairwise signed feature interaction scores. 

 \begin{figure}[tb]
 \centering
    \includegraphics[width=0.5\textwidth]{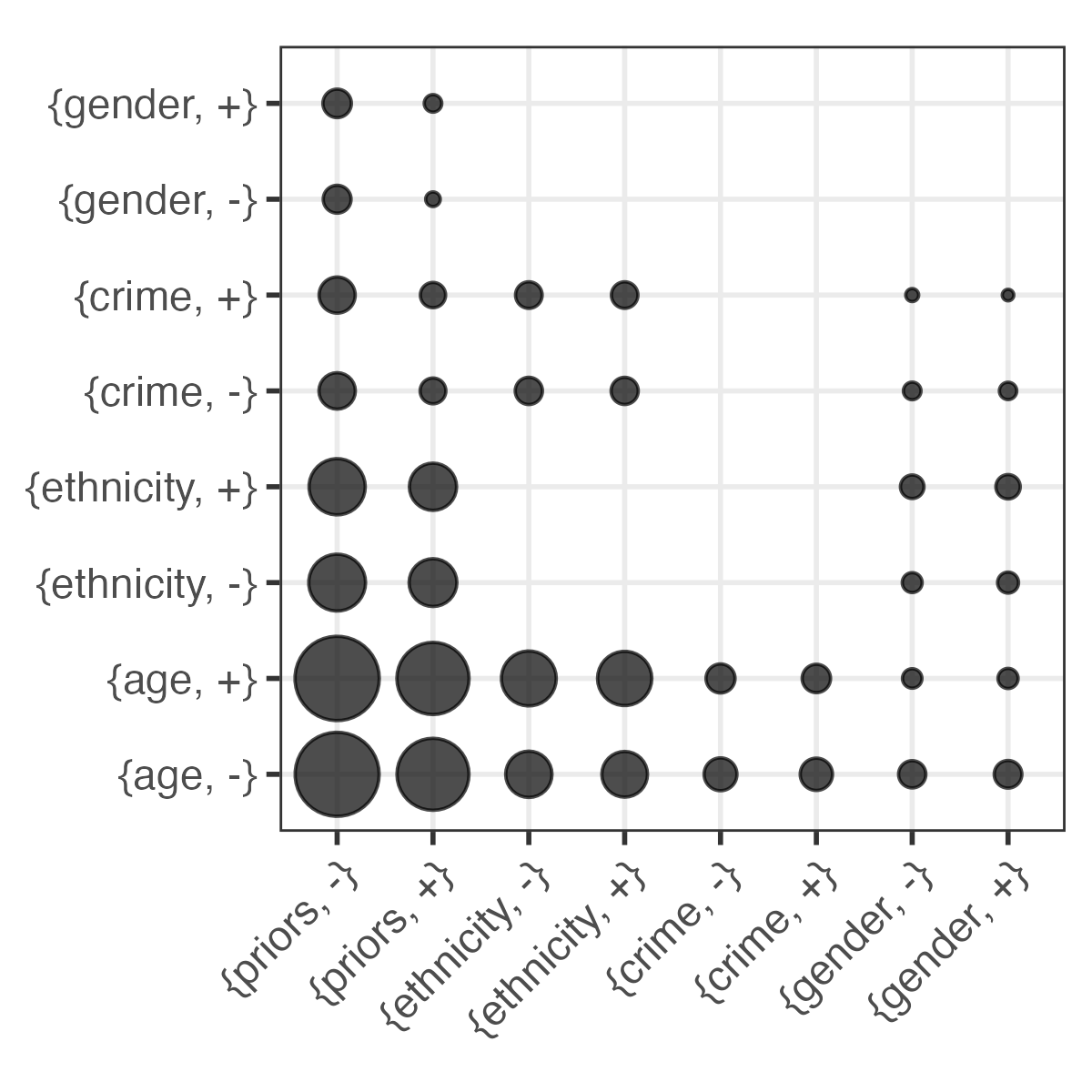}
         \caption{Interaction map showing pairwise signed feature interaction scores for the COMPAS data application (impurity decrease threshold $\epsilon=0.001$). Each point represents an interaction between two signed features, with point size reflecting $\DWP$ importance (scaled as $-\log_{10}(1 - \DWP)$).}
         \label{fig:interaction}
\end{figure}

Figure~\ref{fig:pred_age_priors} presents the corresponding local signed interaction scores as in~\eqref{eq:product} for all observations, together with the predicted probability of a high COMPAS score. 
The same two subjects are highlighted again in this figure. 
For the Caucasian male (highlighted as a circle), the signed interaction between young age and large number of prior offenses receives a high interaction importance score. 
This suggests that for him, the combination of his young age with his (moderate in the overall population) number of prior offenses was the primary driver of his high predicted COMPAS score.
For the African-American female, none of the four signed interactions between age and prior offenses stands out specifically, suggesting that it is mainly her overall large number of prior offenses (also large in the overall population) that is driving her high predicted COMPAS score.
\begin{figure}[tb]
\centering
    \includegraphics[width=\textwidth]{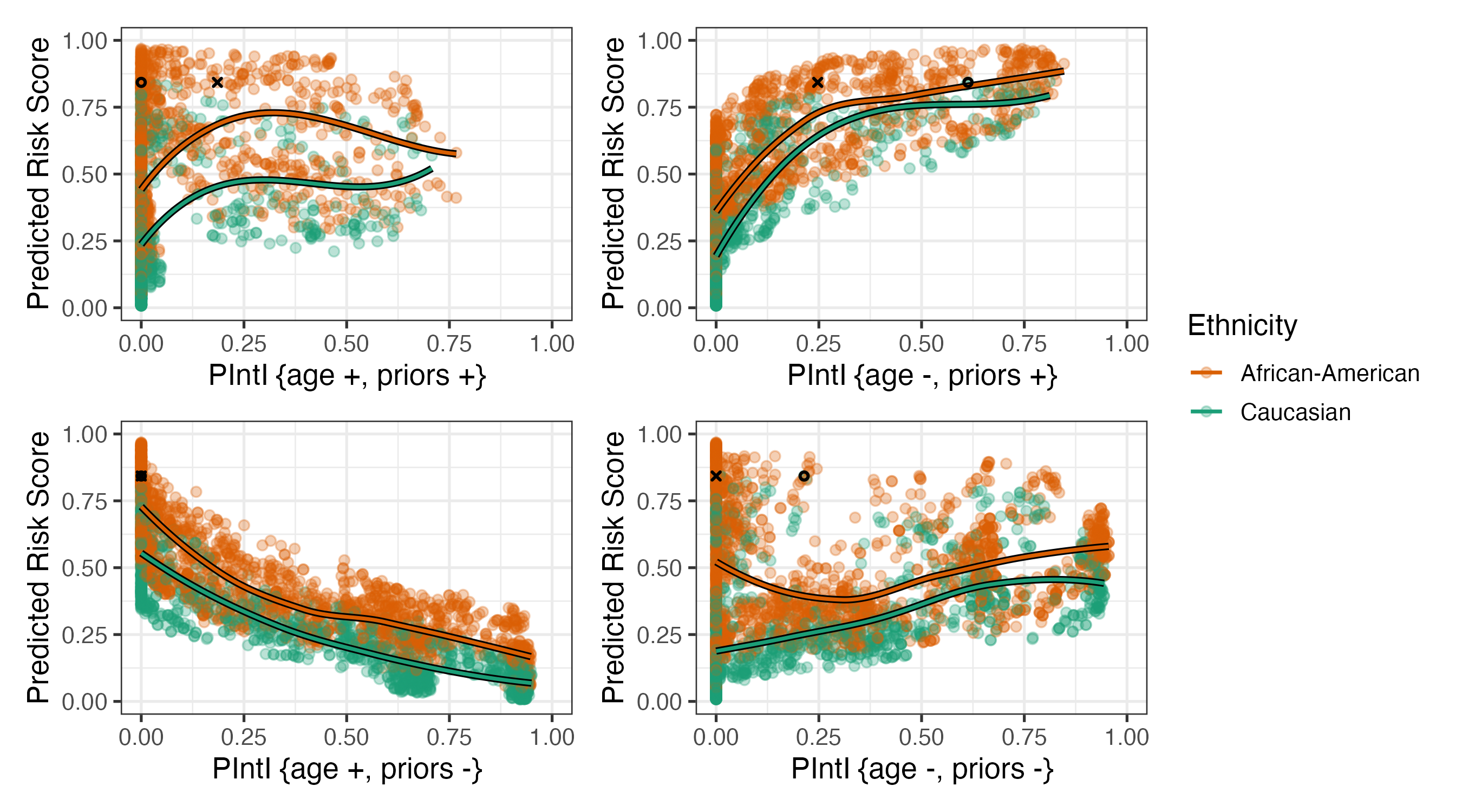}
         \caption{Relationship between local interaction importance ($\PII$) of the signed age--priors interactions and the predicted risk score, stratified by ethnicity. LOESS curves are shown for each group. The circle and cross correspond to two individuals with the same predicted COMPAS score ($84\,\%$), see main text for details.}
         \label{fig:pred_age_priors}
\end{figure}

\section{Discussion}

In this paper, we propose a new methodology to obtain local signed FII scores for RF. 
The LSS model assumption allows us to study the consistency of local signed feature and interaction recovery, and we show that our method provably recovers those signed features and interactions that are relevant for an individual's outcome. 
Our approach is model-specific and explores frequent co-occurrences of signed features along individual decision paths in the tree ensemble, building on prior work on model-specific signed interaction scores \cite{kumbier2018, basu_iterative_2018, behr_provable_2022}.

In simulations, we confirm our theoretical results and demonstrate that other interaction importance metrics---primarily driven by overall prediction accuracy, such as Shapley values---can lead to misleading interpretations of local interactions within the context of the LSS model. 
In contrast, our new methodology reliably recovers qualitative information about which signed features and their interactions primarily drive an individual's prediction, independent of marginal signal strength, which heavily influences prediction differences and thus scores such as Shapley values.
Finally, in an application to the COMPAS dataset, we show that our local signed scores provide valuable insights for individuals regarding which features and interactions, together with their directionality, are most important for their personal outcome.

We note that our theoretical analysis relies heavily on the LSS model assumption, which may be regarded as a limitation. 
On the other hand, interaction behavior is intrinsically connected to scaling and, therefore, to certain modeling assumptions of the regression function. 
Without such assumptions, any function can essentially be expressed as an additive function, in which no interaction behavior is present. 
Nevertheless, in practical applications, interactions often play a significant role, especially at the local level---recall our discussion of the interaction between a large number of prior offenses and young age for the Caucasian male in the COMPAS data example in Section~\ref{sec:simulation}. 
The LSS model assumption makes explicit which types of local, signed interactions are targeted by our method, and thus facilitates interpretation in practice.

Moreover, our theoretical analysis of LocalLSSFind assumes i.i.d.\ covariates. As in LSSFind \cite{behr_provable_2022}, performance is expected to degrade as feature correlations increase. One possible approach to address this issue is to apply feature decorrelation techniques (e.g., \cite{frohlich2025decorrelated}). Extending our results to LSS models with correlated features remains an interesting direction for future research.

\section*{Acknowledgments}
 The project was funded by the Deutsche Forschungsgemeinschaft (DFG, German Research Foundation), project number 509149993, TRR 374, and by the Bavarian Research Institute for Digital Transformation. The authors gratefully acknowledge the Leibniz Supercomputing Centre for funding this project by providing computing time on its Linux-Cluster. The authors also acknowledge funding of the Bavarian Californian Technology Center (BaCaTeC). The authors would like to thank Prof.\ Bin Yu for helpful discussions on this project.

\printbibliography

\appendix

\section{Implementation}

\paragraph{Python module:} A Python implementation of LocalLSSFind is provided as part of the \texttt{lssfind} package and is available at \url{https://github.com/behr-group/lssfind}.

\paragraph{Code repository:} The \textsf{R} implementation of LocalLSSFind (and LSSFind), together with scripts for the simulations and the COMPAS data application, is available at \\\url{https://git.uni-regensburg.de/behr-group-public/locallssfind}

\paragraph{Data:} The COMPAS data set used for the illustration of LocalLSSFind in Section~\ref{sec:application} is from ProPublica and available at \url{https://github.com/propublica/compas-analysis/}.

\section{Notations} \label{sec_sup:notations}
In this section, we list the main notation used throughout the paper.
\begin{itemize}
    \item $\cD=\{(\x_1,y_1), \dots, (\x_n,y_n)\}$ stands for labeled data with $\x_i =(x_{i1}, \dots , x_{ip}) \in \R^p$ and labels $y_i \in \R$.
    \item $n$ denotes the number of samples.
    \item $p$ denotes the number of features. 
    \item $s$ denotes the number of signal features.
    \item $\xtest = (x^*_1, \dots , x^*_p) \in \R^p$ is a new observation.
    \item $\beta_1, \dots, \beta_J$ are coefficients in the LSS model (Definition~\ref{def:LSS_model}).
    \item $\gamma_k$ are thresholds in the LSS model (Definition~\ref{def:LSS_model}).
    \item $C_{\beta}>0,~ C_{\gamma} \in (0,0.5) $ are constants. 
    \item $S_1, \dots, S_J$ are sets of features (Basic Interactions (BIs)). 
    \item $S^\pm_1, \dots, S^\pm_J$ are sets of signed features (Basic Signed Interactions (BSIs)) with $S^\pm_j = \{(k,b_k) : k \in S_j,~ b_k \in \{0,1\}\}$. 
    \item $S^*_j$ is a BI for $\xtest$ and $S^{*\pm}_j$ is a BSI for $\xtest$ with $j \in \{1, \dotsc, J\}$ (cf.\ Definition~\ref{def:BSI_xtest} above).
    \item $S^{*\pm}$ is a Union Signed Interaction (USI) for $\xtest$ (cf.\ Definition~\ref{def:USI_xtest} above).
    \item $T$ denotes a tree in an RF. 
    \item $\cP$ denotes a path in a tree which consists of a sequence of nodes $t \in \{1,\dots,d, t_{\leaf} \}$, where $d$ represents the depth of the path and $t_{\leaf}$ is a leaf node. 
    \item $\ptest$ denotes the path taken by $\xtest$. 
    \item $\theta_t$ denotes the splitting threshold used for node $t$ in a tree $T$. 
    \item $U(t)$ denotes the desirable feature set (cf.\ Definition~\ref{def:desirable_features} above). 
    \item $\Omega_0(\cP)$ denotes the event that the desirable features are exhausted at the leaf node of path $\cP$.
    \item $\dot \parent^\pm(t)$ denotes the set of signed, while $\dot \parent(t)$ denotes the set of unsigned features used by the parents of node $t$ in $T.$ 
    \item $\parent^\pm(t) \subseteq \dot \parent^\pm(t)$ and $\parent(t) \subseteq \dot \parent(t)$, where $\parent^\pm(t)$ and $\parent(t)$ include only the signed or unsigned feature corresponding to the first occurrence of each feature along the path from the root to $t$, if a feature appears multiple times.
    \item $\cF(\cP)$ is the desirable signed feature set of $F^\pm(t_{\leaf})$ (cf.\ Definition~\ref{def:F(P)} above).
    \item $R_t$ denotes the hyper-rectangle in the feature space corresponding to node $t$ in a tree $T$.
    \item $R_{t,l}(k,\theta)$ and $R_{t,r}(k,\theta)$ denote the hyper-rectangles obtained by splitting $R_t$ along feature $k$ at threshold $\theta$.
    \item $\Delta_I(t)$ and $\Delta^n_I(t)$ denote the population and the finite-sample impurity decrease\footnote{The definition of impurity decrease in \cite{behr_provable_2022}, equation~(6), contains a typo: it omits a factor $\frac{N_n(t)}{n}$, which penalizes nodes with fewer samples. However, in Lemma~S2 of its supplement, the correct version---identical to the one used here---is applied.}, respectively. $\Delta_I(t)$ is only defined and used in the proof of Lemma~\ref{thm:AstarEpsilonTo1}.
    \item $\hat{\cF}_{\epsilon}(\cP, T,\cD)$ is the signed feature set corresponding to splits along $\cP$ with at least impurity decrease $\epsilon$.
    \item $\PP^*_{\epsilon}(S^{\pm})$ is the path prevalence of $S^{\pm}$ on $\ptest$ with minimum impurity decrease $\epsilon$.
\end{itemize}

\section{Preliminaries} \label{sec_sup:preliminaries}
We define different feature sets within the context of a decision tree $T$ in an RF. Each path $\cP$ in the tree $T$ consists of a sequence of nodes $t \in \{ 1,\dots,d, t_{\leaf}\}$, where $d$ represents the depth of the path, and $t_{\leaf}$ is a leaf node. Along this path a sequence of signed features $(k_{1},b_{1}),\dots,(k_{d},b_{d})$ is associated, where $k_{t} \in \{1,\dots, p\}$ indicates the feature index and $b_{t}\in \{-1,+1 \}$ indicates the direction of the split for that feature at node $t.$ Here, $b_{t} = -1$ denotes a split that follows the $\leq$ direction, while $b_{t} = +1$ denotes a split that follows the $>$ direction. For each inner node $t$, $\theta_t$ denotes the splitting threshold used for that node. 

Moreover, for each node $t$ in the tree $T$, we define the following sets:
\begin{itemize}
    \item $\dot \parent^\pm(t)$ is the set of signed features used by the parents of node $t$ in $T.$ 
    \item $\dot \parent(t)$ is the corresponding set of unsigned features used by the parents of node $t.$
    \item $\parent^\pm(t)$ is a subset of $\dot \parent^\pm(t)$ that includes only the signed feature corresponding to the first occurrence of each feature along the path, if a feature appears multiple times. 
    \item $\parent(t)$ is the set of unsigned features corresponding to $\parent^\pm(t)$.
\end{itemize}

We now define the \emph{desirable} feature set $U(t),$ which consists of all features that would lead to a positive decrease in impurity if the RF model could observe the full distribution $\P(X,Y)$ (with respect to some particular LSS model).

\begin{definition}[$U(t),$ desirable feature set] \label{def:desirable_features}
Define the desirable feature set $U(t) \subset [p]$ to be 
\[ U(t)=\{ k \in [p] : \exists \, j \in [J] \text{ s.t. } k \in S_j, S_j^+ \cap F^{\pm}(t) = \emptyset \text{ and } (k,-1) \notin F^{\pm}(t) \}.\]
\end{definition}

Note that, since we assume the LSS model uses only $\leq$ signs, the condition $S_j^+ \cap F^{\pm}(t) = \emptyset$ in the above definition ensures that, on the path to $t$, no split on a feature from $S_j$ has been taken in the wrong direction.

We define the event $\Omega_0(\cP)$ to be that the desirable features are exhausted at the leaf node of the path $\cP$. More precisely: 
\begin{definition}[The event $\Omega_0(\cP)$]\label{def:omega_0}
        \[ \Omega_0(\cP) = \{ U(t_{\leaf}) = \emptyset \text{ for the leaf node } t_{\leaf} \text{ of } \cP \}.\]
\end{definition}
Next, we define the desirable signed feature set for a path $\cP$.
\begin{definition}[$\cF(\cP),$ desirable signed feature set of $F^\pm(t_{\leaf})$] \label{def:F(P)}
Define the set $\cF(\cP) \subset [p] \times \{-1, +1\}$ as 
\begin{multline*}
    \cF(\cP) = \{(k_t, b_t) \in F^\pm(t_{\leaf}) : k_t \in U(t), \text{ where } t \text{ is an inner node of } \cP,\\
    \text{and } t_{\leaf} \text{ is a leaf node of } \cP \}.
\end{multline*}
\end{definition}
The feature set $\cF(\cP)$ defined above is an oracle feature set because it depends on the true interactions $S_j$, which are not known in practice. However, a consistent estimate of $\cF(\cP)$ can be obtained by thresholding on the mean decrease in impurity (see Section B.1 in the supplement of \cite{behr_provable_2022}), which leads to the definition of $\hat{\cF}_{\epsilon}(\cP, T,\cD)$.

Analogously to the definition of a union signed interaction in \cite{behr_provable_2022}, we define the union signed interaction for $\xtest$ as the union of one or more individual BSIs for $\xtest$. 
\begin{definition}[Union Signed Interaction (USI) for $\xtest$] \label{def:USI_xtest}
$S^{*\pm}$ is a union signed interaction for $\xtest$, if 
\[S^{*\pm}= \bigcup_{j\in \cI_{-}} S_j^{*-} \cup \bigcup_{j \in \cI_{+}} S_j^{*+} ,\]
for some set of indices 
\begin{gather*}
\cI_{-} \subset \{j \in [J] : S_j^{-} \text{ is BSI for } \xtest\}, \\
\cI_{+} \subset \{j \in [J] : S_j^{+} \text{ is BSI for } \xtest \}.
\end{gather*}
\end{definition}
Note that for all $j\in \cI_{+}$ we only have single-feature interactions, i.e., $\abs{S_j^*}=1$. In contrast to the LSS model, for each single-feature BI $S_j^*$ only one of the signs ($-1$ or $+1$) can appear in USIs for $\xtest$.

\section{The Population Case} \label{sec_sup:pop_case}
Recall the notation for a new test point $\xtest = (x_1^*, \dots, x_p^*)$. Define the constant $C^* > 0$ such that for all signal features $k \in S_j, j=1, \dotsc, J$
\[ C^* \leq \abs{x_k^* - \gamma_k}, \]
where $\gamma_k$ denotes the threshold for feature $k$ in the LSS model. Define the event $A^*(\cD, T)$ to ensure that for every node $t$ along the path $\ptest(\cD,T)$, where feature $k_t$ is desirable, the threshold $\theta_t$ at node $t$ remains within an interval around the true threshold $\gamma_{k_t}:$
\begin{equation}\label{eq:A^*}
    A^*(\cD, T) = \Big\{\theta_t \in \Big(\gamma_{k_t} - \frac{C^*}{3}, \gamma_{k_t} + \frac{C^*}{3} \Big) \text{ for all } t \in \ptest(\cD,T) \text{ with } k_t \in U(t) \Big\}.
\end{equation}
We also define 
\begin{equation}
    \begin{aligned}\label{eq:A^*_epsilon}
    A^*_{\epsilon}(\cD, T) = \Big\{&\theta_t \in \Big(\gamma_{k_t} - \frac{C^*}{3}, \gamma_{k_t} + \frac{C^*}{3} \Big) \\ & \text{ for all } t \in \ptest(\cD,T) \text{ with } k_t \in U(t)
     \text{ and } \Delta_I^n(t) \geq \epsilon \Big\}.
\end{aligned}
\end{equation}
Under the assumption that $\theta_t \in \Big(\gamma_{k_t} - \frac{C^*}{3}, \gamma_{k_t} + \frac{C^*}{3} \Big)$, $\theta_t$ and $\gamma_{k_t}$ are on the same side with respect to $x_k^*$. In the following, for notational convenience, we write $A^*_{\epsilon}$ and $A^*$ instead of $A^*_{\epsilon}(\cD, T)$ and $A^*(\cD, T)$, respectively.
\begin{lemma} \label{thm:USI_prob}
Assume that $A^*$ from above holds true.
If $S^{*\pm}$ is a USI for $\xtest$ as in Definition~\ref{def:USI_xtest}, then, for any data $\cD$ and any decision tree $T$, we have that $ \Omega_0(\ptest)$ implies $S^{*\pm} \subset \cF(\ptest) $.
\end{lemma}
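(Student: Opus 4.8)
The plan is to unwind the three relevant definitions and show that every signed feature in $S^{*\pm}$ must already sit inside $\cF(\ptest)$ whenever the path $\ptest$ ends at a leaf with no desirable features remaining. Fix $\cD$, $T$, and assume $A^*$ holds and $\Omega_0(\ptest)$ holds, so $U(t_\leaf) = \emptyset$ at the leaf of $\ptest$. Take an arbitrary $(k, b) \in S^{*\pm}$. By Definition~\ref{def:USI_xtest}, $(k,b)$ comes from some BSI for $\xtest$, either $S_j^{*-}$ with $b = -1$ (and then $x_k^* \leq \gamma_k$, recalling all LSS inequalities are $\leq$), or a single-feature $S_j^{*+}$ with $b = +1$ (and then $x_k^* > \gamma_k$). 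In either case $k$ is a signal feature belonging to some $S_j$ with $|x_k^* - \gamma_k| \geq C^*$. The goal is to exhibit an inner node $t$ of $\ptest$ with $k_t = k$, the recorded sign $b_t = b$, and $k_t \in U(t)$ — this is exactly what membership in $\cF(\ptest)$ requires by Definition~\ref{def:F(P)}.

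The key step is a ``desirability must be resolved'' argument. Consider whether $k \in U(t_\roo)$ at the root: since $k \in S_j$, $F^\pm(t_\roo) = \emptyset$ so both conditions $S_j^+ \cap F^\pm(t_\roo) = \emptyset$ and $(k,-1) \notin F^\pm(t_\roo)$ hold, hence $k \in U(t_\roo)$. But $k \notin U(t_\leaf) = \emptyset$. So as we walk from root to leaf along $\ptest$, the status of $k$ must flip from desirable to non-desirable at some node. Inspecting Definition~\ref{def:desirable_features}, $k$ can leave $U(\cdot)$ only in one of two ways: either a split on some feature of $S_j$ in the ``wrong'' ($>$, i.e.\ $+1$) direction occurred, putting a positive signed feature of $S_j$ into $F^\pm$; or the split $(k, -1)$ itself occurred. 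I would rule out the first possibility using $A^*$ together with the fact that $S^{*\pm}$ is a USI for $\xtest$: if $(k', +1) \in F^\pm$ for some $k' \in S_j$, then on $\ptest$ the test point took the $>$ branch at a node splitting $k'$ with threshold $\theta_{t'} \in (\gamma_{k'} - C^*/3, \gamma_{k'} + C^*/3)$ (by $A^*$, since $k'$ is desirable there), forcing $x^*_{k'} > \theta_{t'}$ and hence $x^*_{k'} > \gamma_{k'} - C^*/3$; combined with $|x^*_{k'} - \gamma_{k'}| \geq C^*$ this gives $x^*_{k'} > \gamma_{k'}$, contradicting that $S_j^{-}$ (resp.\ the relevant BSI) is a BSI for $\xtest$, which requires $x^*_{k'} \leq \gamma_{k'}$ for all features in $S_j$ when $|S_j| > 1$ (and the single-feature case is handled separately by matching signs). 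Therefore the only way $k$ can exit $U(\cdot)$ is that the split $(k, b_t)$ is actually taken on $\ptest$; and by the same $A^*$-and-BSI reasoning the recorded direction must be $b_t = b$ (for $b=-1$: $x_k^* \leq \gamma_k$ and $\theta_t$ close to $\gamma_k$ forces the $\leq$ branch; for the single-feature $b=+1$ case: $x_k^* > \gamma_k$ forces the $>$ branch). At that node $t$ we still have $k = k_t \in U(t)$ (desirability is checked \emph{before} the split at $t$ enters $F^\pm$), so $(k_t, b_t) = (k, b) \in \cF(\ptest)$ by Definition~\ref{def:F(P)}.

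Since $(k,b) \in S^{*\pm}$ was arbitrary, this yields $S^{*\pm} \subseteq \cF(\ptest)$, as claimed.

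\textbf{Main obstacle.} The delicate point is the bookkeeping around ``first occurrence'' of a feature (the distinction between $\dot F^\pm$ and $F^\pm$) and the precise timing of when a node's split is counted in $F^\pm(\cdot)$ versus when desirability $U(t)$ is evaluated — I need $k_t \in U(t)$ to hold \emph{at} the node where feature $k$ is split, using that $F^\pm(t)$ collects only ancestors' splits, not $t$'s own. A second subtlety is the single-feature BSI case, where $S_j = \{k\}$ can contribute $(k,+1)$ to a USI: here I must be careful that the $A^*$-based argument correctly pins the split direction to $+1$ when $x_k^* > \gamma_k$, and that the definition of $U(t)$ (which only ever ``forbids'' $(k,-1)$, never $(k,+1)$) is consistent with $k$ leaving $U(\cdot)$ via a $+1$ split on $k$ itself — this requires re-reading Definition~\ref{def:desirable_features} carefully, since a literal reading suggests a $+1$ split on $k$ keeps $k$ desirable unless it also populates $S_j^+ \cap F^\pm$, which it does precisely because $(k,+1) \in S_j^+$. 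Handling these definitional edge cases correctly is where the real work lies; the inequality manipulations themselves are routine given $A^*$.
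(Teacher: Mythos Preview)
Your proposal is correct and follows essentially the same approach as the paper: the paper argues by contrapositive (assume $(k,b)\notin\cF(\ptest)$, then exhibit $k\in U(t_\leaf^*)$), whereas you argue directly (assume $U(t_\leaf)=\emptyset$, then track the transition node where $k$ leaves $U(\cdot)$), but both reduce to the same core step---finding the first node on $\ptest$ where some $k'\in S_j$ is split in the $+$ direction, observing that $k'\in U(\cdot)$ there so $A^*$ applies, and deriving a contradiction with the BSI-for-$\xtest$ condition $x^*_{k'}\leq \gamma_{k'}$. The two subtleties you flag (first-occurrence bookkeeping and the single-feature $+1$ case) are exactly the ones the paper handles, and your sketched resolutions are correct.
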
 

\begin{proof}
Consider any fixed decision tree $T$. We want to show that if $S^{*\pm} \not\subset \cF(\ptest) $ then $\Omega_0^c(\ptest)$ occurs, given that the event $A^*$ holds true. 

Assume that $S^{*\pm} \not\subset \cF(\ptest)$, i.e., there exists at least one element $(k, b) \in S^{*\pm}$ that is not in $\cF(\ptest)$. Since $S^{*\pm}$ is a USI for $\xtest$, it follows that there exists some BI $S_j^*$ for $\xtest$ such that $k \in S_j^*$. In the following, we will distinguish between the cases that the corresponding sign $b$ is $-1$ (case (i)) and that it is $+1$ (case (ii)).

\begin{enumerate}
\renewcommand{\labelenumi}{(\roman{enumi})}
    \item Let $(k, b) = (k, -1) \in S_j^- \subset S^{*\pm}$. First, assume that $(k, -1) \in F^\pm(t^*_{\leaf})$. Then, for the respective node of $\ptest$ with $(k, -1) = (k_t, -1)$, because $(k, -1) \notin \cF(\ptest)$, it follows from the definition of $\cF(\ptest)$ that $k_t \notin U(t)$. Thus, from the definition of $U(t)$, it must follow that $S^+_j \cap F^\pm(t) \neq \emptyset$.

On the other hand, if $(k, -1) \notin F^\pm(t^*_{\leaf}) \land k \notin U(t^*_{\leaf})$, the definition of $U(t^*_{\leaf})$ requires $S^+_j \cap F^\pm(t^*_{\leaf}) \neq \emptyset$ too. In this case set $t = t^*_{\leaf}$.

Among all elements in $S^+_j \cap F^\pm(t)$, we consider the signed feature which appears first on $\ptest$. We denote the respective node as $\tilde{t}$ and the signed feature as $(k_{\tilde{t}}, +1)$.
For this feature, it holds that $(k_{\tilde{t}}, -1) \in U(\tilde{t})$. As $(k_{\tilde{t}}, +1)$ appears on $\ptest$, it must follow that
\[ x^*_{k_{\tilde{t}}} > \theta_{\tilde{t}}. \]
The definition of $C^*$ indicates that $x^*_{k_{\tilde{t}}}$ could be either $x^*_{k_{\tilde{t}}} \leq \gamma_{k_{\tilde{t}}} - C^*$ or $x^*_{k_{\tilde{t}}} \geq \gamma_{k_{\tilde{t}}} + C^*$. Because $(k_{\tilde{t}}, -1) \in S_j^{*-}$ we must have $x^*_{k_{\tilde{t}}} \leq \gamma_{k_{\tilde{t}}}$, so only $x^*_{k_{\tilde{t}}} \leq \gamma_{k_{\tilde{t}}} - C^*$ is possible. Then
\[ x^*_{k_{\tilde{t}}} \leq \gamma_{k_{\tilde{t}}} - C^* < \theta_{\tilde{t}} + \frac{C^*}{3} - C^* < \theta_{\tilde{t}}, \]
where the second inequality holds because $\abs{\theta_{\tilde{t}} - \gamma_{k_{\tilde{t}}}} < \frac{C^*}{3},$ which follows from the definition of $A^*$. This implies $x^*_{k_{\tilde{t}}} < \theta_{\tilde{t}}$, which is a contradiction to $x^*_{k_{\tilde{t}}} > \theta_{\tilde{t}}$.

Thus, because both $(k, -1) \in F^\pm(t^*_{\leaf})$ and $(k, -1) \notin F^\pm(t^*_{\leaf}) \land k \notin U(t^*_{\leaf})$ lead to a contradiction, we must have $k \in U(t^*_{\leaf})$. Then $\Omega_0^c(\ptest)$ holds true by definition.

\item If the sign $b$ is $+1$, then $k$ is in a single-feature signed interaction $S_j^{*+} = \{(k, +1)\}$. Assume that $(k, +1) \in F^\pm(t^*_{\leaf})$. Using the same arguments as before, for the corresponding node $t$ of $\ptest$ with $(k, +1) = (k_t, +1)$, it follows that $k_t \notin U(t)$. This is a contradiction because for a single-feature signed interaction, it always holds that $k_t \in U(t)$, as by definition there was no split on the feature $k_t$ on the path before and there are no other features in the corresponding BSI $S_j^+$. Thus, $(k, +1) \notin F^\pm(t^*_{\leaf})$ holds and $k \in U(t^*_{\leaf})$ follows, causing $\Omega_0^c(\ptest)$ to hold true.
\end{enumerate}
\end{proof}

\section{The Finite Sample Case} \label{sec_sup:sample_case}
Recall assumptions \ref{A:increasing_depth}--\ref{A:no_bootstrap} and constraints \ref{C:uniformity}--\ref{C:sparsity} above. Define the families of trees $\cT_1, \cT_2$ as in \cite{behr_provable_2022}:
\begin{align*}
\cT_1 &\triangleq \{\text{Any CART tree that satisfies~\ref{A:balancedsplit} and~\ref{A:no_bootstrap}}\}, \\
\cT_2 &\triangleq \{\text{Any CART tree that satisfies~\ref{A:balancedsplit},~\ref{A:no_bootstrap}, and~\ref{A:mtry}}\}.
\end{align*}

\begin{lemma}\label{thm:AstarEpsilonTo1}
Assume that $T\in \cT_1$ and constraints \ref{C:uniformity}--\ref{C:sparsity} hold. Then, for any fixed $\epsilon>0$,
\[ \P_T(A^*_{\epsilon} \mid \cD) \pto 1, \text{ as } n \to \infty. \]
\end{lemma}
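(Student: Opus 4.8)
The plan is to establish a stronger statement: with $\P_T(\cdot\mid\cD)$-probability tending to one, \emph{every} inner node $t$ of $T$ whose split is on a desirable feature ($k_t\in U(t)$) and has finite-sample impurity decrease $\Delta_I^n(t)\ge\epsilon$ satisfies $\theta_t\in(\gamma_{k_t}-C^*/3,\gamma_{k_t}+C^*/3)$. Since $\ptest(\cD,T)$ is one particular root-to-leaf path of $T$, this gives $A^*_\epsilon$ and hence $\P_T(A^*_\epsilon\mid\cD)\pto 1$. The decisive observation is that the requirement $\Delta_I^n(t)\ge\epsilon$ confines attention to nodes of \emph{bounded} depth: since $\abs{Y}<1$ (constraint~\ref{C:bounded-response}), $I_n(R)<1$ for every $R$, so $\Delta_I^n(t)\le\tfrac{N_n(R_t)}{n}I_n(R_t)<\tfrac{N_n(R_t)}{n}$, whereas~\ref{A:balancedsplit} forces $\mu(R_t)<(1-C_\gamma)^{d}$ at depth $d$. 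A uniform law of large numbers over the (low-complexity, by constraint~\ref{C:sparsity}) class of axis-aligned boxes gives $\sup_R\abs{\tfrac{N_n(R)}{n}-\mu(R)}\pto 0$; on that event, $\Delta_I^n(t)\ge\epsilon$ is impossible once $(1-C_\gamma)^{d}<\epsilon/2$, i.e.\ whenever $d$ exceeds the constant $D_\epsilon:=\log(2/\epsilon)/\log\!\big(1/(1-C_\gamma)\big)$, which does not depend on $n$ or $T$.

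A binary tree has at most $2^{D_\epsilon+1}$ nodes of depth $\le D_\epsilon$, which we enumerate by their addresses $w$; this is a fixed finite set, so it suffices to show, for each such $w$, that $\P_T\!\left(\text{node } t_w \text{ exists},\ k_{t_w}\in U(t_w),\ \theta_{t_w}\notin(\gamma_{k_{t_w}}-\tfrac{C^*}{3},\gamma_{k_{t_w}}+\tfrac{C^*}{3})\ \middle|\ \cD\right)\to 0$, and then union-bound over the finitely many $w$. Fix such a $w$ and condition on $t_w$ existing with $k:=k_{t_w}\in U(t_w)$. By the definition of $U(t_w)$ together with the non-overlap of basic interactions (assumption C3), $k\in U(t_w)$ forces that feature $k$ has not been split on the path to $t_w$; hence $X_k$ is still uniform on $[0,1]$ inside $R_{t_w}$ and independent of the other coordinates, and $\E[Y\mid X\in R_{t_w},X_k=x]$ is a step function of $x$ with a single jump at $\gamma_k$ of magnitude at least $h_0:=C_\beta\,C_\gamma^{\max_j\abs{S_j}-1}>0$ --- the factor $C_\gamma^{\max_j\abs{S_j}-1}$ lower-bounds the product of the conditional probabilities $\P(X_{k'}\le\gamma_{k'}\mid X\in R_{t_w})$ over the remaining features $k'$ of the basic interaction containing $k$, none of which is forced to zero because $k\in U(t_w)$ (no feature of that interaction has been split in the wrong direction; cf.\ the remark after Definition~\ref{def:desirable_features}).

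Write $g(\theta):=\Delta_I\big(R_{t_w,l}(k,\theta),R_{t_w,r}(k,\theta)\big)$ for the population impurity decrease of a split of $R_{t_w}$ on $k$ at $\theta$, and $g_n$ for its finite-sample version. A short one-dimensional variance computation shows $g$ is continuous, strictly increasing on $[0,\gamma_k]$ and strictly decreasing on $[\gamma_k,1]$ --- so uniquely maximized at $\gamma_k$ --- and that it has a uniform gap,
\[
g(\gamma_k)-\sup_{\abs{\theta-\gamma_k}\ge C^*/3}g(\theta)\ \ge\ \mu(R_{t_w})\,\kappa_0\ \ge\ C_\gamma^{D_\epsilon}\kappa_0\ >\ 0,
\]
where $\kappa_0>0$ depends only on $C_\beta,C_\gamma,C^*$ and $\max_j\abs{S_j}$ (using $h_0$ and $\gamma_k\in(C_\gamma,1-C_\gamma)$), and $\mu(R_{t_w})\ge C_\gamma^{D_\epsilon}$ by~\ref{A:balancedsplit} since $t_w$ has depth at most $D_\epsilon$. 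Because $\gamma_k\in(C_\gamma,1-C_\gamma)$, splitting $R_{t_w}$ on $k$ at $\gamma_k$ is balanced, hence an admissible CART candidate, so optimality of $(k,\theta_{t_w})$ gives $g_n(\theta_{t_w})\ge g_n(\gamma_k)$. Finally, a uniform law of large numbers for sample means and sample variances over the class of balanced boxes of volume at least $C_\gamma^{D_\epsilon}$ --- each containing $\gtrsim n\,C_\gamma^{D_\epsilon}\to\infty$ points, with complexity controlled under constraint~\ref{C:sparsity}, and with all sample quantities computed on the conditioned dataset $\cD$ by~\ref{A:no_bootstrap} --- yields $\mathrm{err}_n:=\sup_\theta\abs{g_n(\theta)-g(\theta)}\pto 0$, uniformly also over the admissible realizations of $R_{t_w}$. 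Hence, on the event $\theta_{t_w}\notin(\gamma_k-\tfrac{C^*}{3},\gamma_k+\tfrac{C^*}{3})$,
\[
g(\gamma_k)-\mathrm{err}_n\ \le\ g_n(\gamma_k)\ \le\ g_n(\theta_{t_w})\ \le\ g(\theta_{t_w})+\mathrm{err}_n\ \le\ g(\gamma_k)-C_\gamma^{D_\epsilon}\kappa_0+\mathrm{err}_n,
\]
i.e.\ $2\,\mathrm{err}_n\ge C_\gamma^{D_\epsilon}\kappa_0$, an event of vanishing probability. This bounds each per-address probability by $o(1)$; summing over the finitely many $w$ and combining with the depth restriction above gives $\P_T(A^*_\epsilon\mid\cD)\pto 1$.

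The main obstacle is making the ``only finitely many nodes matter'' reduction fully rigorous: ruling out, via the depth bound together with uniform sample-size concentration (for which constraint~\ref{C:sparsity} is exactly what is needed), that any \emph{deep} node could carry impurity decrease at least $\epsilon$, and then checking that the volume, jump-size, and gap constants at the relevant bounded-depth nodes are bounded below \emph{uniformly} over the data- and forest-dependent rectangles $R_{t_w}$ that can actually arise. The remaining pieces --- the one-dimensional unimodality computation for $g$ and the uniform law of large numbers for variances over boxes --- are routine and mirror the finite-sample analysis of \cite{behr_provable_2022}, of which this lemma is essentially the $\ptest$-restriction.
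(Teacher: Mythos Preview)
Your overall route closely parallels the paper's: both first restrict to a bounded collection of nodes---you via a depth cutoff $D_\epsilon$, the paper via the volume cutoff $\mu(R_t)\ge\epsilon/2$ obtained from the uniform bound $\sup_{R_l,R_r}\lvert\Delta_I^n-\Delta_I\rvert\le\epsilon/2$ (Proposition~S6(b) of \cite{behr_provable_2022})---and then establish threshold consistency on that collection; the paper simply cites Lemma~S11(iii) of \cite{behr_provable_2022}, while you unpack an explicit one-dimensional unimodality/gap argument. Under~\ref{A:balancedsplit} the two reductions are equivalent, so the difference is largely one of packaging.

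There is, however, a genuine gap in your jump-size step. Your claim that $\P(X_{k'}\le\gamma_{k'}\mid X\in R_{t_w})\ge C_\gamma$ for every $k'\in S_j\setminus\{k\}$ is justified by ``no feature of that interaction has been split in the wrong direction,'' but $k\in U(t_w)$ only constrains \emph{first-occurrence} signs (it is stated in terms of $F^\pm$, not $\dot F^\pm$). A sibling feature $k'$ may be split first in the $-$ direction and then a second time in the $+$ direction at some $\theta_2>\gamma_{k'}$; this keeps $(k',+1)\notin F^\pm(t_w)$ yet forces $\P(X_{k'}\le\gamma_{k'}\mid R_{t_w})=0$, so the jump---and hence your $\kappa_0$---vanishes, and your gap inequality collapses. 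The repair is not to lower-bound the jump from $k\in U(t_w)$ at all, but to use the hypothesis $\Delta_I^n(t_w)\ge\epsilon$ itself: on the uniform event $\sup\lvert\Delta_I^n-\Delta_I\rvert\le\epsilon/2$ one gets $g(\theta_{t_w})\ge\epsilon/2$ and hence $g(\gamma_k)=\mu(R_{t_w})\gamma_k(1-\gamma_k)h^2\ge\epsilon/2$, which lower-bounds $h$ (and therefore $\kappa_0$) by a constant depending on $\epsilon$. This is precisely the mechanism in the paper's proof---the volume bound comes from the \emph{population} impurity decrease, and Lemma~S11(iii) is invoked under that volume bound, not under any direct jump-size control.

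A smaller imprecision: the ULLN ``over the class of axis-aligned boxes'' is not automatic when $p$ grows, since that class has VC dimension $2p$, and~\ref{C:sparsity} gives only $\log p/n\to0$. Your depth reduction in fact only needs the ULLN over boxes with at most $D_\epsilon{+}1$ constrained coordinates (use monotonicity $N_n(R_t)\le N_n(R_{t'})$ for the depth-$(D_\epsilon{+}1)$ ancestor $t'$), a class of log-covering number $O(D_\epsilon\log p)$, for which $\log p/n\to0$ does suffice.
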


\begin{proof}
Define 
\[ \bestTheta_{t,k} = \argmax_{\theta \in [C_{\gamma}, 1-C_{\gamma}]} \Delta_I^n(R_{t,l}(k, \theta), R_{t,r}(k, \theta)). \]
Recall the definition of $A^*_{\epsilon}$ in \eqref{eq:A^*_epsilon}. Since $T\in \cT_1$, each split in $T$ is constructed according to the CART algorithm, which selects the threshold that maximizes the finite-sample impurity decrease defined in equation~\eqref{eq:impdecrease} above. Therefore, the threshold $\theta_t$ at any node $t$ satisfies $\theta_t = \bestTheta_{t,k_t}$ and we can replace $\theta_t$ in the definition of $A^*_{\epsilon}$ by $\bestTheta_{t,k_t}$. Define the events
\begin{multline*}
    \tilde{A}^*_{\epsilon} := \Big\{\bestTheta_{t,k} \in \Big(\gamma_{k} - \frac{C^*}{3}, \gamma_{k} + \frac{C^*}{3} \Big) \\\text{for all } t \in \ptest(\cD,T) \text{ with } U(t) \neq \emptyset \text{ and } \Delta_I^n(t) \geq \epsilon \text{ and for all } k\in U(t)\Big\}
\end{multline*}
and
\begin{multline*}
    \tilde{A}_{\epsilon} := \Big\{\bestTheta_{t,k} \in \Big(\gamma_{k} - \frac{C^*}{3}, \gamma_{k} + \frac{C^*}{3} \Big) \\\text{for all } t \in T \text{ with } U(t) \neq \emptyset \text{ and } \Delta_I^n(t) \geq \epsilon \text{ and for all } k\in U(t)\Big\}.
\end{multline*}
$A^*_{\epsilon}$ requires that the chosen split variable $k_t \in U(t)$ yields a threshold $\bestTheta_{t,k_t}$ close to $\gamma_{k_t}$, whereas $\tilde{A}^*_{\epsilon}$ requires closeness of $\bestTheta_{t,k}$ to $\gamma_k$ for all $k\in U(t)$, even if they were not chosen. Thus, $\tilde{A}^*_{\epsilon} \subseteq A^*_{\epsilon}.$
Since $\tilde{A}_{\epsilon}$ requires the same condition as $\tilde{A}^*_{\epsilon}$ but over all relevant nodes in the entire tree $T$, we have $\tilde{A}_{\epsilon} \subseteq \tilde{A}^*_{\epsilon}.$ Hence,
\begin{equation*}
    \P_T(A^*_{\epsilon}\mid \cD) \geq  \P_T(\tilde{A}^*_{\epsilon}\mid \cD) \geq  \P_T(\tilde{A}_{\epsilon}\mid \cD).
\end{equation*}
Thus, in order to prove that $\P_T(A^*_{\epsilon} \mid \cD) \pto 1$, it suffices to show that  $\P_T(\tilde{A}^*_{\epsilon} \mid \cD) \pto 1 \text{ as } n \to \infty.$ We can express $\P_T(\tilde{A}_{\epsilon}\mid \cD)$ in a maximum-based formulation as follows 
\begin{equation}\label{prob:B_epsilon}
    \P_T(\tilde{A}_{\epsilon}\mid \cD) 
= \P_T\left( \max_{\substack{t \in T \\ \Delta_I^n(t) \geq \epsilon \\ U(t) \neq \emptyset}} \max_{k \in U(t)} \abs{\bestTheta_{t,k} - \gamma_k} < \frac{C^*}{3} \,\middle|\, \cD \right) .
\end{equation}

Now, define the \emph{population impurity decrease}\footnote{Similar to the finite-sample impurity decrease in equation~\eqref{eq:impdecrease} of the main text, the definition of the population impurity decrease in equation~(29) of the supplement of~\cite{behr_provable_2022} contains a typo. Instead, we use the formula from equation~(30) in the same document, which also forms the basis for subsequent results.} as 
\begin{equation}\label{eq:population-impurity-dec}
    \Delta_I(t) = \Delta_I(R_{t, l}, R_{t, r}) = \frac{\mu(R_{t, l}) \mu(R_{t, r})}{\mu(R_t)} [\E(Y \mid X \in R_{t, l}) - \E(Y \mid X \in R_{t, r})]^2,
\end{equation}
and define the event
\[ B_n = \left\{\sup_{R_{t, l}, R_{t, r} \in \sR} \abs{\Delta_I^n(R_{t, l}, R_{t, r}) - \Delta_I(R_{t, l}, R_{t, r})} \leq \epsilon/2\right\}, \]
which is independent of the tree $T$. By Proposition S6 (b) from the supplement of \cite{behr_provable_2022}, we have
\[ \sup_{R_{t, l}, R_{t, r} \in \sR} \abs{\Delta_I^n(R_{t, l}, R_{t, r}) - \Delta_I(R_{t, l}, R_{t, r})} \pto 0 \text{ as } n\to\infty. \]
Therefore, as $n\to\infty$,
\begin{align*}
    \P_\cD(B_n) &= \P_\cD\left(\sup_{R_{t, l}, R_{t, r} \in \sR} \abs{\Delta_I^n(R_{t, l}, R_{t, r}) - \Delta_I(R_{t, l}, R_{t, r})} \leq \epsilon/2\right) \\
    &= 1- \P_\cD \left( \sup_{R_{t, l}, R_{t, r} \in \sR} \abs{\Delta_I^n(R_{t, l}, R_{t, r}) - \Delta_I(R_{t, l}, R_{t, r})} > \epsilon/2 \right) \to 1.
\end{align*}

On the event $B_n$, we have 
\[ \abs{\Delta_I^n(t) - \Delta_I(t)} \leq \epsilon/2 \quad \text{for all } t \in T. \]
Hence, if $\Delta_I^n(t) \geq \epsilon$, it follows that
\[ \Delta_I(t) \geq \Delta_I^n(t) - \abs{\Delta_I^n(t) - \Delta_I(t)} \geq \epsilon - \epsilon/2 = \epsilon/2. \]
By the bounded-response constraint \ref{C:bounded-response}, we have $\abs{Y} < 1$. This implies 
\[ \abs{\E(Y \mid X \in R)} < 1 \]
for any region $R$. Therefore, for any region $R_{t,l}, R_{t,r}$
\begin{equation}\label{eq:bound-response-regions-4}
    [\E(Y \mid X \in R_{t, l}) - \E(Y \mid X \in R_{t, r})]^2 < 4.
\end{equation}
Moreover, since the hyper-rectangles of the child nodes partition the parent region, we have $\mu(R_{t, l}) + \mu(R_{t, r}) = \mu(R_t)$, and the product $\mu(R_{t, l}) \mu(R_{t, r}) / \mu(R_t)$ is maximized when both children have equal size (i.e.\ $\mu(R_{t, l}) = \mu(R_{t, r}) = \mu(R_{t})/2$). Thus,
\[ \frac{\mu(R_{t, l}) \mu(R_{t, r})}{\mu(R_t)} \leq \frac{1}{4} \mu(R_t). \]
Using this together with \eqref{eq:bound-response-regions-4}, we obtain with the definition of $\Delta_I(t)$ in \eqref{eq:population-impurity-dec}
\[ \Delta_I(t) \leq \frac{1}{4} \mu(R_t) \cdot 4 = \mu(R_t). \]
Therefore, on the event $B_n$, if $\Delta_I^n(t) \geq \epsilon$, then
\[ \Delta_I(t) \geq \epsilon/2 \quad \Rightarrow \quad \mu(R_t) \geq \epsilon / 2 =: \tilde{\epsilon}. \]
Thus, on the event $B_n$, 
\[ \{ t: \Delta^n_I(t) \geq \epsilon \} \subseteq \{ t: \mu(R_t) \geq \tilde{\epsilon}\}.\]
For the probability of $\P_T(\tilde{A}_{\epsilon} \mid \cD)$ in \eqref{prob:B_epsilon}, we now obtain:
\begin{align*}
    & \P_T(\tilde{A}_{\epsilon} \mid \cD) \\
    & = \P_T\left( \max_{\substack{t\in T \\ \Delta_I^n(t) \geq \epsilon \\ U(t)\neq \emptyset}} \max_{k\in U(t)} \abs{\bestTheta_{t,k} - \gamma_k} < \frac{C^*}{3} \,\middle|\, \cD \right) \\
    &\geq \P_T\left(\left\{\max_{\substack{t\in T \\ \Delta_I^n(t) \geq \epsilon \\ U(t)\neq \emptyset}} \max_{k\in U(t)} \abs{\bestTheta_{t,k} - \gamma_k} < \frac{C^*}{3}\right\} \cap B_n \,\middle|\, \cD \right) \\
    &\geq \P_T\left(\left\{\max_{\substack{t\in T \\ \mu(R_t) > \tilde\epsilon \\ U(t)\neq \emptyset}} \max_{k\in U(t)} \abs{\bestTheta_{t,k} - \gamma_k} < \frac{C^*}{3}\right\} \cap B_n \,\middle|\, \cD \right) \\
    &\geq \P_T\left(\max_{\substack{t\in T \\ \mu(R_t) > \tilde\epsilon \\ U(t)\neq \emptyset}} \max_{k\in U(t)} \abs{\bestTheta_{t,k} - \gamma_k} < \frac{C^*}{3} \,\middle|\, \cD \right) + \P_T(B_n \mid \cD) - 1.
\end{align*}
We show that the first two terms in the last sum each converge in probability to 1. For the first term, we use Lemma S11 (iii) in the supplement of \cite{behr_provable_2022}. It states that 
\[ \sup_{T\in T_1(\cD)}\max_{\substack{t\in T \\ \mu(R_t) > \tilde\epsilon \\ U(t)\neq \emptyset}} \max_{k\in U(t)} \abs{\bestTheta_{t,k} - \gamma_k} \pto 0 \text{ as } n\to\infty. \]
Consequently, for any fixed constant $c>0$ (in particular for $c=C^*/3$), it follows that
\[ \P_\cD\left( \sup_{T \in T_1(\cD)} \max_{\substack{t\in T \\ \mu(R_t) > \tilde\epsilon \\ U(t)\neq \emptyset}} \max_{k\in U(t)} \abs{\bestTheta_{t,k} - \gamma_k} < c \right) \to 1 \text{ as } n\to\infty. \]
Here, if for some fixed data $\cD$, the bound
\[ \sup_{T \in T_1(\cD)} \max_{\substack{t\in T \\ \mu(R_t) > \tilde\epsilon \\ U(t)\neq \emptyset}} \max_{k\in U(t)} \abs{\bestTheta_{t,k} - \gamma_k} < c \]
holds, then
\[ \P_{T}\left(\max_{\substack{t\in T \\ \mu(R_t) > \tilde\epsilon \\ U(t)\neq \emptyset}} \max_{k\in U(t)} \abs{\bestTheta_{t,k} - \gamma_k} < c \,\middle|\, \cD\right) = 1. \]
Thus, 
\[ \P_T\left(\max_{\substack{t\in T \\ \mu(R_t) > \tilde\epsilon \\ U(t)\neq \emptyset}} \max_{k\in U(t)} \abs{\bestTheta_{t,k} - \gamma_k} < \frac{C^*}{3} \,\middle|\, \cD \right) \pto 1. \]
As $B_n$ depends only on the data $\cD$, it is independent of the randomness in $T$, so 
\[ \P_T(B_n \mid \cD) = \1(B_n). \]
Furthermore, since $\P_\cD(B_n)\to 1$, it follows that 
\[ \1(B_n) \pto 1. \]
Combining these convergence results, we get
\[ \P_T(\tilde{A}_{\epsilon} \mid \cD) \geq \P_T\left(\max_{\substack{t\in T \\ \mu(R_t) > \tilde\epsilon \\ U(t)\neq \emptyset}} \max_{k\in U(t)} \abs{\bestTheta_{t,k} - \gamma_k} < \frac{C^*}{3} \,\middle|\, \cD \right) + \P_T(B_n \mid \cD) - 1 \pto 1 \]
and therefore also, as $n \to \infty$,
\[ \P_T(A^*_{\epsilon} \mid \cD) \pto 1. \]
\end{proof}

\begin{lemma}\label{thm:AstarTo1}
Assume that $T\in \cT_2$ and that constraints \ref{C:uniformity}--\ref{C:sparsity} and assumptions \ref{A:increasing_depth}--\ref{A:no_bootstrap} hold. Then
\[ \P_T(A^* \mid \cD) \pto 1, \text{ as } n \to \infty. \]
\end{lemma}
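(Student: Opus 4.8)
The proof upgrades Lemma~\ref{thm:AstarEpsilonTo1} from a fixed impurity-decrease threshold to a vanishing one. Since $A^* = \bigcap_{\epsilon>0} A^*_\epsilon$ while each $A^*_\epsilon$ only constrains nodes with $\Delta_I^n(t)\ge\epsilon$, the missing content is exactly the control of the \emph{desirable} nodes on $\ptest$ (nodes $t$ with $k_t\in U(t)$) whose impurity decrease is tiny; these are the deep nodes, so the extra assumptions \ref{A:increasing_depth} and \ref{A:mtry} (unused for $A^*_\epsilon$) must enter through a bound on the depth at which desirable splits can still occur.

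\emph{Step 1 (structure).} By Definition~\ref{def:desirable_features}, a split on a desirable feature $k_t\in S_j$ in the $\le$ direction adds $(k_t,-1)$ to $F^\pm$, removing $k_t$ from $U$ at every descendant, while a split in the $>$ direction adds $(k_t,+1)\in S_j^+$ to $F^\pm$, removing all of $S_j$ from $U$ thereafter. Hence every desirable split permanently deletes at least one index from $\bigcup_j S_j$, so along $\ptest$ there are at most $s$ desirable nodes (finite by \ref{C:sparsity}), and $A^*$ is a condition on those $\le s$ nodes only.

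\emph{Step 2 (depth of desirable nodes).} At a node with $U(t)\neq\emptyset$, since $\mt\ge C_m p$ by \ref{A:mtry} and $\abs{U(t)}\ge 1$, the random set $\Mt$ meets $U(t)$ with probability at least $\approx C_m$; on the tree-independent high-probability event that $\Delta_I^n$ lies within $o(1)$ of $\Delta_I$ uniformly over sub-rectangles (Proposition~S6(b) of \cite{behr_provable_2022}, using \ref{C:uniformity}--\ref{C:sparsity}), the CART split is then taken on a desirable feature, because desirable features have strictly positive population impurity decrease whereas noise features have zero. Thus, descending $\ptest$, each level with $U\neq\emptyset$ makes progress with probability $\gtrsim C_m$, so after $O(s/C_m)$ levels $U$ is exhausted; together with \ref{A:increasing_depth} (which guarantees $\ptest$ is long) a stochastic-domination plus Chernoff and union bound produces a sequence $D_n\to\infty$ with $D_n=o(\log n)$ such that, with probability tending to one, every desirable node on $\ptest$ has depth at most $D_n$.

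\emph{Step 3 (threshold at desirable nodes) and the main obstacle.} On that event, \ref{A:balancedsplit} gives $\mu(R_t)\ge C_\gamma^{D_n}$ at each desirable node $t$, hence $N_n(R_t)\gtrsim nC_\gamma^{D_n}$ by \ref{A:no_bootstrap}, \ref{C:uniformity} and a binomial concentration; if $D_n$ is additionally chosen so that $nC_\gamma^{D_n}/\log p\to\infty$, which is possible precisely because \ref{C:sparsity} forces $\log p/n\to 0$, then $\Delta_I^n$ concentrates around $\Delta_I$ uniformly over axis-aligned sub-splits of $R_t$ at the scale $\mu(R_t)$, and since the population impurity decrease of a desirable split exceeds $c_0\,\mu(R_t)$ for a constant $c_0=c_0(C_\beta,C_\gamma,s)$, the argument already used in the proof of Lemma~\ref{thm:AstarEpsilonTo1} (there with the fixed lower bound $\mu(R_t)\ge\epsilon/2$, here with $\mu(R_t)\ge C_\gamma^{D_n}$) forces the CART-optimal threshold $\theta_t$ into $(\gamma_{k_t}-C^*/3,\gamma_{k_t}+C^*/3)$. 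Intersecting with Step~2 gives $\P_T(A^*\mid\cD)\pto 1$. I expect the main difficulty to lie in this step: exhibiting one schedule $D_n$ that simultaneously diverges (needed for Step~2), keeps $nC_\gamma^{D_n}$ large against a possibly growing $\log p$ (needed for the within-region empirical-process bound), and renders the $\le D_n$ per-node failure probabilities summable --- this triple requirement is exactly what \ref{A:increasing_depth}, \ref{A:mtry}, and the rate $\log p/n\to 0$ provide beyond what Lemma~\ref{thm:AstarEpsilonTo1} needed.
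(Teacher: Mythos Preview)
Your route is considerably more elaborate than the paper's. The paper simply observes that on the event
\[E=\{\Delta_I^n(t)\ge\epsilon\text{ for every }t\in\ptest\text{ with }k_t\in U(t)\}\]
one has $A^*\supseteq A^*_\epsilon\cap E$, so $\P_T(A^*\mid\cD)\ge\P_T(A^*_\epsilon\mid\cD)+\P_T(E\mid\cD)-1$. Lemma~\ref{thm:AstarEpsilonTo1} handles the first term; the second is bounded directly by Proposition~S13(iii) of \cite{behr_provable_2022}, which gives $\P_T(E\mid\cD)\ge 1-b(\epsilon)-\eta_n(\cD,\epsilon)$ with $\eta_n\pto 0$ and $b(\epsilon)$ as in~\eqref{eq:bepsilon}. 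Hence $\P_T(A^*\mid\cD)\ge 1-b(\epsilon)+o_p(1)$ for each fixed $\epsilon$, and letting $\epsilon\downarrow 0$ (since $b(\epsilon)\to 0$) finishes. No depth schedule $D_n$, no scale-dependent concentration, and no Chernoff argument are needed: the cited proposition already packages the fact that desirable splits retain impurity decrease at least $\epsilon$ along the whole path, which is exactly what you are trying to rebuild in Steps~2--3.

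Your attempt to derive this from scratch has a gap precisely where you anticipated it. Step~2 invokes the uniform absolute bound $\sup|\Delta_I^n-\Delta_I|=o_p(1)$ from Proposition~S6(b) to argue that CART selects a desirable feature whenever one is in $\Mt$; but at depth $d$ the population impurity decrease of a desirable split is only of order $\mu(R_t)\le(1-C_\gamma)^d$, so once $D_n\to\infty$ an absolute $o_p(1)$ bound cannot separate it from noise-feature splits. You then need the concentration ``at the scale $\mu(R_t)$'' that you postulate in Step~3 already for Step~2 to go through, and that relative-error inequality is not what S6(b) or Lemma~S11(iii) provide --- it would have to be established separately with an explicit rate in $n C_\gamma^{D_n}/\log p$. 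The paper sidesteps this entire schedule-balancing problem by citing S13(iii).
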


\begin{proof}
Let $\epsilon > 0$ be arbitrary. If for all nodes $t\in\ptest$ with $k_t\in U(t)$ we have that $\Delta_I^n(t) \geq \epsilon$, then $A^* = A^*_\epsilon$. Define event 
\[ E=\{\forall t\in\ptest \text{ with } k_t\in U(t): \Delta_I^n(t) \geq \epsilon\}. \]
We now have
\begin{align*}
    \P_T(A^* \mid \cD) &\geq \P_T(A^*_{\epsilon} \cap E \mid \cD) \\
    &= \P_T(A^*_\epsilon \mid \cD) + \P_T(E \mid \cD) - \P_T(A^*_\epsilon \cup E \mid \cD) \\
    &\geq \P_T(A^*_\epsilon \mid \cD ) + \P_T(E \mid \cD) - 1.
\end{align*}
By Proposition S13 (iii) in \cite{behr_provable_2022}, for any path $\cP$, we have
\begin{align}\label{eq:split_decrease_min}
    &\P_T\Big( \min_{t\in \cP } \min_{k\in U(t)} \Delta_I^n(R_{t,l}(k, \bestTheta_{t,k}), R_{t,r}(k, \bestTheta_{t,k})) \geq \epsilon \Bigm| \cD \Big)  \\ 
    \geq & 1 - \Big( \frac{4\epsilon}{C^2_{\beta} C_{\gamma}^{2\max_j \abs{S_j}-1}}\Big)^{\tilde{C}} - \eta_n(\cD,\epsilon),
\end{align}
with constant $\tilde{C}=C^{2s}_m/\log(1/C_{\gamma})$ and $\eta_n(\cD, \epsilon) \pto 0 $. Rewriting $E$ and using \eqref{eq:split_decrease_min}, it follows that
\begin{align*}
    \P_T(E \mid \cD) &= \P_T(\forall t\in\ptest \text{ with } k_t\in U(t): \Delta_I^n(t) \geq \epsilon \mid \cD) \\
    &= \P_T\Big( \min_{t\in\ptest \text{ with } k_t\in U(t)} \Delta_I^n(t) \geq \epsilon \Bigm| \cD \Big) \\
    &= \P_T\Big( \min_{t\in\ptest \text{ with } k_t\in U(t)} \Delta_I^n(R_{t,l}(k_t, \bestTheta_{t,k_t}), R_{t,r}(k_t, \bestTheta_{t,k_t})) \geq \epsilon \Bigm| \cD \Big) \\
    &\geq \P_T\Big(\min_{t\in\ptest \text{ with } k_t\in U(t)} \min_{k \in U(t)} \Delta_I^n(R_{t,l}(k, \bestTheta_{t,k}), R_{t,r}(k, \bestTheta_{t,k})) \geq \epsilon \Bigm| \cD \Big) \\
    &\geq \P_T\Big(\min_{t\in\ptest} \min_{k \in U(t)} \Delta_I^n(R_{t,l}(k, \bestTheta_{t,k}), R_{t,r}(k, \bestTheta_{t,k})) \geq \epsilon \Bigm| \cD \Big) \\
    &\geq 1 - \Big( \frac{4\epsilon}{C^2_{\beta} C_{\gamma}^{2\max_j \abs{S_j}-1}}\Big)^{\tilde{C}} - \eta_n(\cD,\epsilon).
\end{align*}
The first inequality holds because, with $k_t \in U(t)$, we also have
\[ \Delta_I^n(R_{t,l}(k_t, \bestTheta_{t,k_t}), R_{t,r}(k_t, \bestTheta_{t,k_t})) \geq \min_{k \in U(t)} \Delta_I^n(R_{t,l}(k, \bestTheta_{t,k}), R_{t,r}(k, \bestTheta_{t,k})). \]
In the second inequality, we extend the set of considered nodes from those that split on a desirable feature (i.e., $t\in\ptest \text{ with } k_t\in U(t)$) to all nodes on $\ptest$ (i.e., $t\in\ptest$).

By Lemma~\ref{thm:AstarEpsilonTo1}, we know that $P(A^*_\epsilon \mid \cD) \pto 1$. Combining this with our previous bound, we get
\begin{align*}
    \P_T(A^* \mid \cD) &\geq \P_T(A^*_\epsilon \mid \cD) + \P_T(E\mid \cD) - 1 \\
    &= \P_T(A^*_\epsilon\mid \cD) - \Big( \frac{4\epsilon}{C^2_{\beta} C_{\gamma}^{2\max_j \abs{S_j}-1}}\Big)^{\tilde{C}} - \eta_n(\cD,\epsilon) \\
    &\pto 1 - \Big( \frac{4\epsilon}{C^2_{\beta} C_{\gamma}^{2\max_j \abs{S_j}-1}}\Big)^{\tilde{C}}.
\end{align*}
Since this holds for any $\epsilon > 0$ and $\Big( \frac{4\epsilon}{C^2_{\beta} C_{\gamma}^{2\max_j \abs{S_j}-1}}\Big)^{\tilde{C}} \to 0$ for $\epsilon \to 0$, we conclude that $\P_T(A^*\mid \cD) \pto 1$.
\end{proof}

\begin{proposition}\label{theo:hatFequalF}
Suppose that assumptions \ref{A:increasing_depth}--\ref{A:no_bootstrap} and constraints \ref{C:uniformity}--\ref{C:sparsity} hold and that $T\in \cT_2$. For any fixed constant $\epsilon > 0$, the following holds true:
\begin{align}
&\P_{T}\left( \Omega_0^c (\ptest) \,\middle|\, \cD \right)  \pto 0;\label{eq:omega}\\
& \P_{T}\left( \hat{\cF}_\epsilon(\ptest) \nsubseteq {\cF(\ptest)} \,\middle|\, \cD \right) \pto 0;\label{eq:not_contain}\\
&\P_{T}\left( \hat{\cF}_\epsilon (\ptest) \neq {\cF(\ptest)} \,\middle|\, \cD \right) \leq \left(\frac{ 4 \epsilon}{C_\beta^2 C_\gamma^{2\max_j \abs{S_j} -1}}\right)^{\tilde{C}} + \eta_n(\cD, \epsilon); \label{eq:not_equal}
\end{align}
with $\tilde{C} = C_m^{2 s} / \log(1/C_\gamma)$ and $\eta_n(\cD, \epsilon)\pto 0$.
\end{proposition}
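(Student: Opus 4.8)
Two of the three assertions, \eqref{eq:omega} and \eqref{eq:not_equal}, come almost for free from the quantitative bound \eqref{eq:split_decrease_min} (Proposition~S13(iii) of \cite{behr_provable_2022}), which holds for any path and which I apply to $\cP=\ptest$; the remaining one, \eqref{eq:not_contain}, needs a population-level argument. Write $G_\epsilon := \{\min_{t\in\ptest}\min_{k\in U(t)}\Delta_I^n(R_{t,l}(k,\theta^{\mathrm{best}}_{t,k}),R_{t,r}(k,\theta^{\mathrm{best}}_{t,k}))\ge\epsilon\}$, so that \eqref{eq:split_decrease_min} reads $\P_T(G_\epsilon\mid\cD)\ge 1-b(\epsilon)-\eta_n(\cD,\epsilon)$ with $b(\epsilon)$ as in \eqref{eq:bepsilon} and $\eta_n(\cD,\epsilon)\pto 0$. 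For \eqref{eq:omega} I would first record the \emph{deterministic} inclusion $G_\epsilon\subseteq\Omega_0(\ptest)$: the termination rule forces $I_n(R_{t_\leaf})=0$ at the leaf of $\ptest$, so any hypothetical split there has impurity decrease $\le 0<\epsilon$; hence if $U(t_\leaf)\ne\emptyset$ the minimum defining $G_\epsilon$ would be $\le 0$, a contradiction. Thus $\P_T(\Omega_0^c(\ptest)\mid\cD)\le b(\epsilon)+\eta_n(\cD,\epsilon)$ for every $\epsilon>0$, and since $b(\epsilon)\to 0$ as $\epsilon\to 0$ while $\eta_n(\cD,\epsilon)\pto 0$ for each fixed $\epsilon$, a routine $\epsilon$-$\delta$ argument (first shrink $\epsilon$ to push $b(\epsilon)$ below the target level, then send $n\to\infty$) gives $\P_T(\Omega_0^c(\ptest)\mid\cD)\pto 0$.

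For \eqref{eq:not_equal}, granted \eqref{eq:not_contain}, I would decompose $\{\hat{\cF}_\epsilon(\ptest)\ne\cF(\ptest)\}\subseteq\{\hat{\cF}_\epsilon(\ptest)\nsubseteq\cF(\ptest)\}\cup\{\cF(\ptest)\nsubseteq\hat{\cF}_\epsilon(\ptest)\}$; the first event contributes a $\pto 0$ term by \eqref{eq:not_contain}. For the second, if $(k,b)\in\cF(\ptest)\setminus\hat{\cF}_\epsilon(\ptest)$ then, by Definition~\ref{def:F(P)}, we must have $(k,b)=(k_t,b_t)$ at the node $t$ where $k$ \emph{first} occurs on $\ptest$ and with $k_t\in U(t)$ — a later occurrence of $k$ cannot lie in $U$, since by then $(k,-1)$, or some $(k',+1)$ with $k'$ in the interaction containing $k$ (unique by \ref{C:sparsity}), already sits in $F^\pm$ — and being absent from $\hat{\cF}_\epsilon$ at its first occurrence forces $\Delta_I^n(t)<\epsilon$; since $k_t\in U(t)$ and $\theta_t=\theta^{\mathrm{best}}_{t,k_t}$ by CART optimality, this says precisely that $G_\epsilon$ fails. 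Hence $\P_T(\cF(\ptest)\nsubseteq\hat{\cF}_\epsilon(\ptest)\mid\cD)\le b(\epsilon)+\eta_n(\cD,\epsilon)$, and adding the two bounds — absorbing both vanishing contributions into a single $\eta_n(\cD,\epsilon)$ — yields \eqref{eq:not_equal}.

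The remaining, and hardest, step is \eqref{eq:not_contain}. I would work on the event $A^*\cap B_n$ — of conditional probability tending to one by Lemma~\ref{thm:AstarTo1} and by $\P_\cD(B_n)\to 1$ for the uniform impurity-decrease convergence used inside Lemma~\ref{thm:AstarEpsilonTo1} — and show that on it every first-occurrence node $t$ of $\ptest$ with $k_t\notin U(t)$ satisfies $\Delta_I^n(t)<\epsilon$, so that $(k_t,b_t)\notin\hat{\cF}_\epsilon(\ptest)$ and therefore $\hat{\cF}_\epsilon(\ptest)\subseteq\cF(\ptest)$. This reduces to the deterministic claim that at such a node the \emph{population} impurity decrease $\Delta_I(t)$ vanishes: if $k_t$ belongs to no basic interaction, then $\E(Y\mid X)$ does not depend on $X_{k_t}$ by the independence of features under \ref{C:uniformity}, so $\Delta_I(t)=0$; if $k_t$ belongs to the (unique) interaction $S_j$, then $k_t\notin U(t)$ means some earlier split on a feature $k'\in S_j$ along $\ptest$ went in the $>$ direction, and on $A^*$ — combined with the uniform threshold consistency $\theta^{\mathrm{best}}_{t',k'}\to\gamma_{k'}$ of Lemma~S11(iii) of \cite{behr_provable_2022} — that split sits at a threshold tending to $\gamma_{k'}$, so $\prod_{k\in S_j}\mathbf{1}(X_k\le\gamma_k)$ is identically zero throughout $R_t$ except on a ``sliver'' whose measure relative to $R_t$ is $o_P(1)$ uniformly over the relevant nodes (the balanced-split assumption \ref{A:balancedsplit} being what keeps the $k'$-extent of $R_t$ bounded away from zero), whence $\Delta_I(t)=o_P(1)$ and then, on $B_n$ and via $\sup\abs{\Delta_I^n-\Delta_I}\pto 0$, $\Delta_I^n(t)<\epsilon$ simultaneously for all such $t$ with probability tending to one. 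Making this sliver estimate rigorous and uniform is the technical heart of the proof; it parallels the corresponding argument for random paths in \cite{behr_provable_2022}, with Lemma~\ref{thm:AstarTo1} playing the role of the threshold-localization event used there.
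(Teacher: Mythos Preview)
Your reconstruction is correct in substance and uses exactly the ingredients the paper's proof ultimately relies on (Proposition~S13(iii), Lemma~S11(iii), and the uniform impurity-decrease convergence in \cite{behr_provable_2022}). The paper, however, takes a much shorter route: it observes that the proof of Theorem~S3 in \cite{behr_provable_2022}---which establishes the three statements \eqref{eq:omega}--\eqref{eq:not_equal} for a \emph{random} path $\cP$---never uses the randomness of the path, and therefore applies verbatim to the deterministic path $\ptest$. So the paper's proof is a one-line reduction, whereas you re-derive Theorem~S3's content from its building blocks.

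Two small comments on your version. First, your slick argument for \eqref{eq:omega} via $G_\epsilon\subseteq\Omega_0(\ptest)$ hinges on the leaf node being included in the minimum defining $G_\epsilon$; this is how the paper quotes Proposition~S13(iii), so the logic is fine, but it is worth flagging that the conclusion depends on that convention. Second, conditioning on $A^*$ in your treatment of \eqref{eq:not_contain} is unnecessary and slightly out of place: $A^*$ is an $\xtest$-specific event (it involves $C^*=\min_k|x_k^*-\gamma_k|$), whereas the statement of the proposition carries no assumption on $\xtest$, and the original Theorem~S3 argument uses only the path-agnostic threshold consistency of Lemma~S11(iii). Dropping $A^*$ and working directly with Lemma~S11(iii) on $B_n$ (as you in fact do for the sliver estimate) gives a cleaner argument and keeps the proof independent of any condition like $x_k^*\neq\gamma_k$.
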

\begin{proof}
   The proof is identical to that of Theorem S3 in \cite{behr_provable_2022}. Theorem S3 in \cite{behr_provable_2022} considers a random path $\cP$, but the proof is also valid for the fixed path $\ptest$.
\end{proof}

\subsection*{\textbf{Proof of Proposition~\ref{theo:BSIlowerBound} above}}
\begin{proof}
    Let
    \[ r_n(\cD, \epsilon) = \max \left( \P_{T}(\Omega_0^c \mid \cD) +  \eta_n(\cD, \epsilon), \P_{T}(\hat{\cF}_{\epsilon}  \nsubseteq \cF \mid \cD) \right) + \P_T({A^*}^c \mid \cD). \]
    By Lemma~\ref{thm:AstarTo1} and Proposition~\ref{theo:hatFequalF} $r_n(\cD, \epsilon) \pto 0$. Let $b(\epsilon) = \left(\frac{ 4 \epsilon}{C_\beta^2 C_\gamma^{2\max_j \abs{S_j} -1}}\right)^{\tilde{C}}$. We have the following chain of inequalities: 
    \begin{align*}
     & \PP^*_{\epsilon}(S^{*\pm}) \\
     &= \P_{T} (S^{*\pm} \subset \hat{\cF}_{\epsilon}(\ptest,T,\cD) \mid \cD ) \\
     &\geq \P_{T} \left( S^{*\pm} \subset \cF(\ptest) \mid \cD \right) - \P_{T}(\hat{\cF}_{\epsilon}(\ptest,T,\cD) \neq \cF(\ptest) \mid \cD )  \\
     &\geq \P_{T} \left( S^{*\pm} \subset \cF(\ptest) \mid \cD \right) - \left(\frac{ 4 \epsilon}{C_\beta^2 C_\gamma^{2\max_j \abs{S_j} -1}}\right)^{\tilde{C}} - \eta_n(\cD, \epsilon) \\
     &\geq 1- \P_{T}(\Omega_0^c \mid \cD) - \P_T({A^*}^c \mid \cD) - b(\epsilon) - \eta_n(\cD, \epsilon) \\
     &\geq 1 - b(\epsilon) - r_n(\cD, \epsilon).
  \end{align*}
  where the inequality $\P_{T} \left( S^{*\pm} \subset \cF(\ptest) \mid \cD \right) \geq 1- \P_{T}(\Omega_0^c \mid \cD) - \P_T({A^*}^c \mid \cD)$ follows from Lemma~\ref{thm:USI_prob}.
\end{proof}

\subsection*{\textbf{Proof of Proposition~\ref{theo:nonBSIto0} above}}
\begin{proof}
    Since $S^\pm$ is a BSI in the LSS model but not a BSI for $\xtest$, there exists at least one feature $k$ such that either $(k, -1) \in S^\pm$ and $x^*_{k} > \gamma_{k}$, or $(k, +1) \in S^\pm$ and $x^*_{k} \leq \gamma_{k}$. Denote the set of these features as $\mathbb{K}$.
    
    If there is a $k\in\mathbb{K}$, such that no node $t$ with $\Delta_I^n(t) \geq \epsilon$ on $\ptest$ splits on feature $k$, then it follows directly that $S^\pm \nsubseteq \hat{\cF}_{\epsilon}(\ptest, T,\cD)$.
    
    Now, assume that all $k\in\mathbb{K}$ appear on $\ptest$ at nodes with $\Delta_I^n(t) \geq \epsilon$. Then choose the one which first appears on $\ptest$ and let $t \in \ptest$ be the corresponding node. By Lemma~\ref{thm:AstarEpsilonTo1}, we have that $A^*_{\epsilon}$ holds and $\abs{\bestTheta_{t,k} - \gamma_k} \leq \frac{C^*}{3} < \abs{x^* - \gamma_k}$ with probability approaching 1. In this case, $\bestTheta_{t,k}$ and $\gamma_k$ are on the same side relative to $x^*$. Thus, if $x^*_k > \gamma_k$ then also $x^*_k > \bestTheta_{t,k}$ and similarly, if $x^*_k < \gamma_k$, then $x^*_k < \bestTheta_{t,k}$. 
    
    First, consider $x^*_k > \gamma_k$. Then is $x^*_k > \bestTheta_{t,k}$ and so $(k, +1) \in \hat{\cF}_{\epsilon}(\ptest, T,\cD)$. Because for each $k$ only one of $(k, -1)$ and $(k, +1)$ can be an element of $\hat{\cF}_{\epsilon}(\ptest, T,\cD)$, it follows that $(k, -1) \notin \hat{\cF}_{\epsilon}(\ptest, T,\cD)$. At the same time, by construction of $k$ and $x^*_k > \gamma_k$, we have $(k, -1) \in S^\pm$ in this case. Therefore, $S^\pm \nsubseteq \hat{\cF}_{\epsilon}(\ptest, T,\cD)$.
    
    Analogously, for $x^*_k < \gamma_k$, we have $x^*_k < \bestTheta_{t,k}$ and so $(k, -1) \in \hat{\cF}_{\epsilon}(\ptest, T,\cD)$. Because for each $k$ only one of $(k, -1)$ and $(k, +1)$ can be an element of $\hat{\cF}_{\epsilon}(\ptest, T,\cD)$, it follows that $(k, +1) \notin \hat{\cF}_{\epsilon}(\ptest, T,\cD)$. At the same time, by construction of $k$ and $x^*_k < \gamma_k$, we have $(k, +1) \in S^\pm$ in this case. Therefore, $S^\pm \nsubseteq \hat{\cF}_{\epsilon}(\ptest, T,\cD)$ also in this case. 
    
    Thus, if all $k\in\mathbb{K}$ appear on $\ptest$ at nodes with $\Delta_I^n(t) \geq \epsilon$, $\P_T(S^\pm \nsubseteq \hat{\cF}_{\epsilon}(\ptest, T,\cD) \mid \cD) \geq \P_T(A^*_\epsilon \mid \cD) \pto 1$.
    
    Combining these results, we have $\PP^*_{\epsilon}(S^{\pm}) = \P_T(S^{\pm} \subset \hat{\cF}_{\epsilon}(\ptest, T,\cD) \mid \cD) \pto 0$.
\end{proof}

\section{Importance Measures for Single Signed Features}
\label{sec_sup:feature_imp} 

Recall \eqref{eq:globalsFI}:
\begin{equation*}
    \fDWP_\epsilon(k, b) := \max_{S^\pm \ni (k, b), \abs{S^\pm} \leq s_{\max}} 2^{\abs{S^\pm}} \cdot \DWP_\epsilon(S^\pm).
\end{equation*}


\subsection*{\textbf{Proof of Proposition~\ref{theo:fDWP}}}
\begin{proof}
    First consider the case that there is a basic signed interaction $S_j^\pm$ with $(k, b) \in S^\pm$. From the first part of Theorem~2 of~\cite{behr_provable_2022} follows that $ 2^{\abs{S_j^{\pm}}} \cdot \DWP_{\epsilon}(S_j^\pm) \geq 1-\eta_{\DWP}$, so also $\fDWP_{\epsilon}(k, b) \geq 2^{\abs{S_j^{\pm}}} \cdot \DWP_{\epsilon}(S_j^{\pm}) \geq 1-\eta_{\DWP}$, with probability approaching 1 as $n\to\infty$.

    Now consider the case, that there is no basic signed interaction which contains $(k, b)$. Then let $S^\pm$ be the signed interaction containing $(k, b)$ which maximizes $2^{\abs{S^\pm}} \cdot \DWP_{\epsilon}(S^\pm)$. From the second part of Theorem~2 of~\cite{behr_provable_2022} follows $ 2^{\abs{S_j^{\pm}}} \cdot \DWP_{\epsilon}(S_j^\pm) < 1-\eta_{\DWP}$, so also $\fDWP_{\epsilon}(k, b) = 2^{\abs{S_j^{\pm}}} \cdot \DWP_{\epsilon}(S_j^{\pm}) < 1-\eta_{\DWP}$, with probability approaching 1 as $n\to\infty$.
\end{proof}



\subsection*{\textbf{Proof of Proposition~\ref{theo:PP_feature}}}
\begin{proof}
    First consider the case that $S_j^\pm$ is a BSI for the test point. From the definition of path prevalence follows directly, that for any two signed interactions $S_1^\pm \subseteq S_2^\pm$ their path prevalences relate by $\PP^*_{\epsilon}(S_1^\pm) \geq \PP^*_{\epsilon}(S_2^\pm)$. Therefore, with $\{(k, b)\} \subseteq S_j^\pm$, follows $\PP^*_{\epsilon}(k, b) = \PP^*_{\epsilon}(\{(k, b)\}) \geq \PP^*_{\epsilon}(S_j^\pm)$. From Proposition~\ref{theo:BSIlowerBound} follows that $\PP^*_{\epsilon}(S_j^\pm) \geq 1-\eta_{\PP}$ with probability approaching 1 as $n\to\infty$. Combining these two inequalities, we get $\PP^*_{\epsilon}(k, b) \geq \PP^*_{\epsilon}(S_j^\pm) \geq 1-\eta_{\PP}$ with probability approaching 1 as $n\to\infty$.

    If $S_j^\pm$ is not a BSI for the test point, different situation must be considered. First consider the case, where $\xtest$ is on the wrong side of the threshold for feature $k$, i.e. $\{(k, +)\} = S_j^\pm$ with $\xtest_k \leq \gamma_k$ or $(k, -) \in S_j^\pm$ with $\xtest_k > \gamma_k$. As in the proof of Proposition~\ref{theo:nonBSIto0}, this implies $(k, b) \notin \hat{\cF}_{\epsilon}(\ptest, T,\cD)$ with probability approaching 1 as $n\to\infty$, so $\PP^*_{\epsilon}(k, b) \pto 0$.

    If the test point is on the correct side for the considered feature, then there must be a feature $(\bar{k}, -1) \in S_j^\pm$ with $\xtest_{\bar{k}} > \gamma_{\bar{k}}$. Now consider a tree, where the root splits on $\bar{k}$. By Lemma~\ref{thm:AstarTo1} is $\P_T(A^* \mid \cD) \pto 1$ and if $A^*$ holds true, then $\ptest$ follows the $>$ direction of the root split. In this case, for any subsequent node $t$ on $\ptest$ is $(\bar{k}, +1) \in \parent^\pm(t)$ and so $k \notin U(t)$, because $S_j^+ \cap \parent^\pm(t) \neq \emptyset$. This implies also $(k, b) \notin \cF(\ptest)$, so $(k, b) \in \hat{\cF}_\epsilon(\ptest)$ would imply $\hat{\cF}_\epsilon(\ptest) \nsubseteq {\cF(\ptest)}$. But as seen in equation~\eqref{eq:not_contain} of Proposition~\ref{theo:hatFequalF} is $\P_{T}(\hat{\cF}_\epsilon(\ptest) \nsubseteq {\cF(\ptest)} \mid \cD) \pto 0$. We can therefore bound $\PP^*_\epsilon(k, b)$ by
    \begin{align*}
        \PP^*_\epsilon(k, b) &\leq \P_{T}(t_\roo \text{ splits not on }\bar{k} \mid \cD) + \P_T({A^*}^c \mid \cD) + \P_{T}(\hat{\cF}_\epsilon(\ptest) \nsubseteq {\cF(\ptest)} \mid \cD) \\
        &\pto 1 - \P_{T}(t_\roo \text{ splits on }\bar{k} \mid \cD).
    \end{align*}
    By Theorem S2 from the supplement of~\cite{behr_provable_2022}, the probability that the root splits on $\bar{k}$ is almost surely at least $[C_m]^s$ as $n\to \infty$. So in this case, $\PP^*_{\epsilon}(k, b) \leq 1 - [C_m]^s$ with probability approaching 1 as $n\to\infty$.
\end{proof}

\section{Additional Simulation Results and Figures} \label{sec_sup:sim_fig}

\begin{longtable}{llllrrrrr}
\caption{Summary statistics for different importance measures in different simulation settings. Columns $\DWP$, $\PII$, and TreeSHAP show the fraction of simulations with respective settings, in which the ten interactions with the highest importance contained all BSIs for the test point. Columns ROC $\DWP$ and ROC $\PII$ list the average adjusted ROC-AUC for these simulations.}
\label{tab:sims_results}\\
\toprule
n & J & L & SNR & $\DWP$ & $\PII$ & TreeSHAP & ROC $\DWP$ & ROC $\PII$ \\
\midrule
\endfirsthead
\toprule
n & J & L & SNR & $\DWP$ & $\PII$ & TreeSHAP & ROC $\DWP$ & ROC $\PII$ \\
\midrule
\endhead
\midrule
\multicolumn{9}{r}{Continued on next page} \\
\midrule
\endfoot
\bottomrule
\endlastfoot
1000 & 1 & 2 & 0.5 & 0.999803 & 1.000000 & 0.001184 & 0.924136 & 0.934087 \\
1000 & 1 & 2 & 1.0 & 1.000000 & 1.000000 & 0.001973 & 0.951996 & 0.946779 \\
1000 & 1 & 2 & 2.0 & 0.999803 & 1.000000 & 0.004143 & 0.956445 & 0.951755 \\
1000 & 1 & 2 & 5.0 & 1.000000 & 1.000000 & 0.008286 & 0.961290 & 0.956840 \\
1000 & 1 & 3 & 0.5 & 1.000000 & 1.000000 & 0.010915 & 0.932085 & 0.905316 \\
1000 & 1 & 3 & 1.0 & 1.000000 & 1.000000 & 0.029172 & 0.944742 & 0.924765 \\
1000 & 1 & 3 & 2.0 & 1.000000 & 1.000000 & 0.068466 & 0.949593 & 0.930167 \\
1000 & 1 & 3 & 5.0 & 0.999802 & 1.000000 & 0.146458 & 0.948910 & 0.939759 \\
1000 & 1 & 4 & 0.5 & 1.000000 & 0.999800 & 0.000000 & 0.917183 & 0.881243 \\
1000 & 1 & 4 & 1.0 & 1.000000 & 1.000000 & 0.000200 & 0.935787 & 0.905203 \\
1000 & 1 & 4 & 2.0 & 1.000000 & 0.999200 & 0.000200 & 0.941699 & 0.914227 \\
1000 & 1 & 4 & 5.0 & 1.000000 & 0.999000 & 0.000000 & 0.961170 & 0.921784 \\
1000 & 2 & 2 & 0.5 & 0.965241 & 0.995989 & 0.000134 & 0.572500 & 0.731018 \\
1000 & 2 & 2 & 1.0 & 0.999866 & 0.996925 & 0.000000 & 0.747272 & 0.823848 \\
1000 & 2 & 2 & 2.0 & 1.000000 & 0.998529 & 0.000134 & 0.796635 & 0.849578 \\
1000 & 2 & 2 & 5.0 & 0.999866 & 0.998797 & 0.000401 & 0.834783 & 0.862758 \\
1000 & 2 & 3 & 0.5 & 0.229683 & 0.394884 & 0.000266 & 0.092918 & 0.113106 \\
1000 & 2 & 3 & 1.0 & 0.405409 & 0.621503 & 0.000533 & 0.199257 & 0.231954 \\
1000 & 2 & 3 & 2.0 & 0.618305 & 0.783373 & 0.000933 & 0.306804 & 0.338179 \\
1000 & 2 & 3 & 5.0 & 0.789235 & 0.908074 & 0.003331 & 0.398398 & 0.431047 \\
1000 & 2 & 4 & 0.5 & 0.066268 & 0.065206 & 0.000000 & 0.018341 & 0.010300 \\
1000 & 2 & 4 & 1.0 & 0.162550 & 0.127490 & 0.000000 & 0.057503 & 0.028375 \\
1000 & 2 & 4 & 2.0 & 0.212749 & 0.172908 & 0.000000 & 0.090837 & 0.041391 \\
1000 & 2 & 4 & 5.0 & 0.254449 & 0.256839 & 0.000000 & 0.114948 & 0.057290 \\
10000 & 1 & 2 & 0.5 & 1.000000 & 1.000000 & 0.000608 & 0.922218 & 0.950142 \\
10000 & 1 & 2 & 1.0 & 1.000000 & 1.000000 & 0.001216 & 0.926812 & 0.950817 \\
10000 & 1 & 2 & 2.0 & 1.000000 & 1.000000 & 0.003040 & 0.931676 & 0.951808 \\
10000 & 1 & 2 & 5.0 & 1.000000 & 1.000000 & 0.004662 & 0.937621 & 0.956785 \\
10000 & 1 & 3 & 0.5 & 0.999800 & 1.000000 & 0.025738 & 0.941363 & 0.956660 \\
10000 & 1 & 3 & 1.0 & 1.000000 & 1.000000 & 0.060854 & 0.952957 & 0.955662 \\
10000 & 1 & 3 & 2.0 & 0.999800 & 1.000000 & 0.114725 & 0.948634 & 0.957169 \\
10000 & 1 & 3 & 5.0 & 0.999800 & 1.000000 & 0.213687 & 0.947349 & 0.958322 \\
10000 & 1 & 4 & 0.5 & 1.000000 & 1.000000 & 0.000000 & 0.955715 & 0.956510 \\
10000 & 1 & 4 & 1.0 & 1.000000 & 1.000000 & 0.000000 & 0.950190 & 0.958013 \\
10000 & 1 & 4 & 2.0 & 1.000000 & 1.000000 & 0.000000 & 0.949571 & 0.955494 \\
10000 & 1 & 4 & 5.0 & 0.999801 & 1.000000 & 0.000000 & 0.943096 & 0.951715 \\
10000 & 2 & 2 & 0.5 & 0.999867 & 0.999867 & 0.000000 & 0.812699 & 0.899505 \\
10000 & 2 & 2 & 1.0 & 0.999867 & 0.999601 & 0.000000 & 0.799904 & 0.899716 \\
10000 & 2 & 2 & 2.0 & 0.999867 & 0.999734 & 0.000133 & 0.803194 & 0.901573 \\
10000 & 2 & 2 & 5.0 & 0.999867 & 0.999601 & 0.000133 & 0.808913 & 0.902999 \\
10000 & 2 & 3 & 0.5 & 0.966614 & 0.988076 & 0.001457 & 0.696502 & 0.722337 \\
10000 & 2 & 3 & 1.0 & 0.992978 & 0.994701 & 0.002650 & 0.760505 & 0.770759 \\
10000 & 2 & 3 & 2.0 & 0.999868 & 0.996555 & 0.006492 & 0.793636 & 0.799542 \\
10000 & 2 & 3 & 5.0 & 0.999868 & 0.998013 & 0.013249 & 0.807662 & 0.816993 \\
10000 & 2 & 4 & 0.5 & 0.511430 & 0.591308 & 0.000000 & 0.275618 & 0.236768 \\
10000 & 2 & 4 & 1.0 & 0.582403 & 0.695109 & 0.000000 & 0.357394 & 0.317994 \\
10000 & 2 & 4 & 2.0 & 0.634104 & 0.743620 & 0.000000 & 0.420305 & 0.380314 \\
10000 & 2 & 4 & 5.0 & 0.710925 & 0.792398 & 0.000000 & 0.492691 & 0.445394 \\
\end{longtable}

\begin{figure}[tb]
    \centering
    \includegraphics[width=\linewidth]{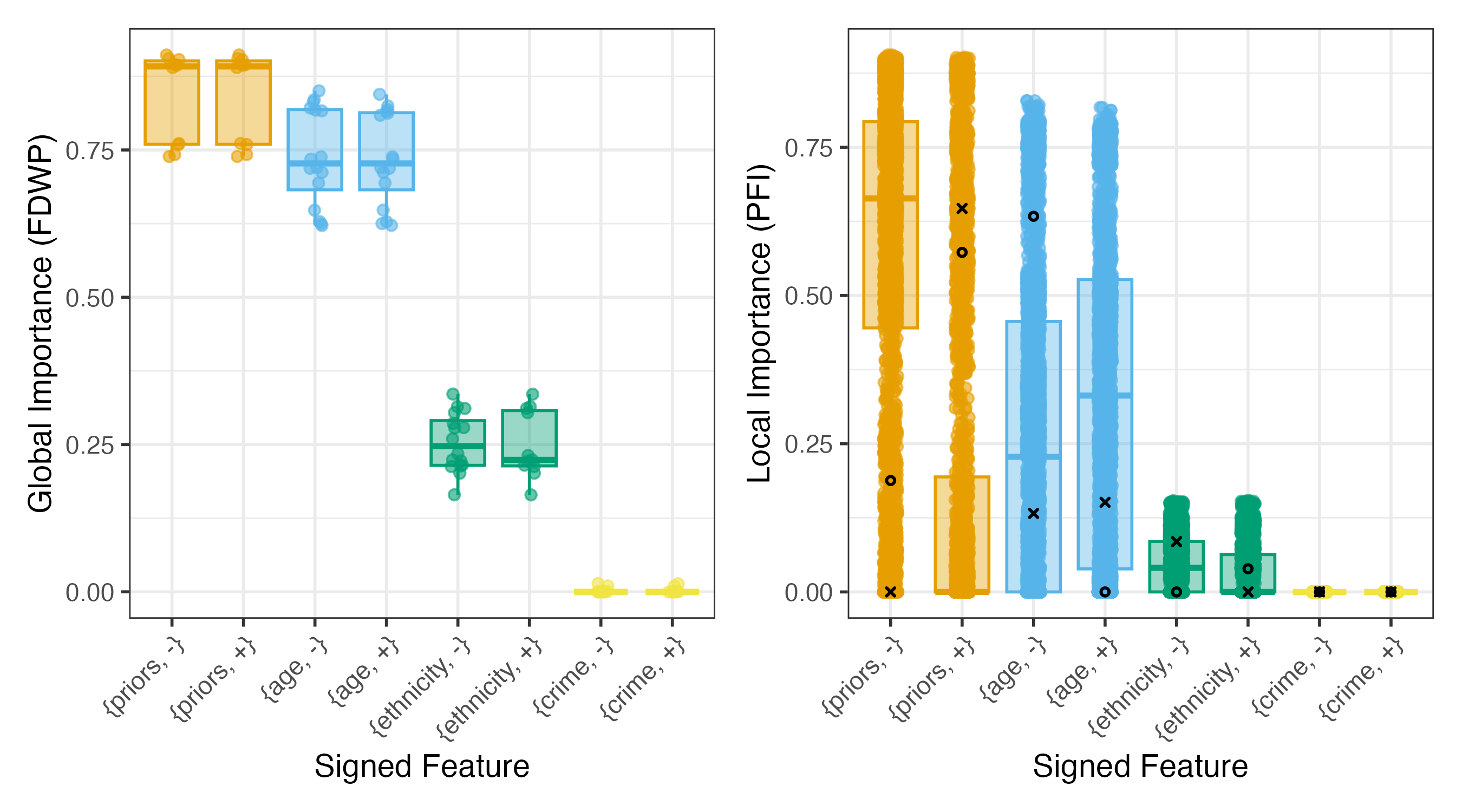}
    \caption{Same as Figure~\ref{fig:fi} above but with impurity decrease threshold $\epsilon=0.01$.}
    \label{fig:fi_sup_imp_dec_01}
\end{figure}

\begin{figure}[tb]
 \centering
    \includegraphics[width=0.5\textwidth]{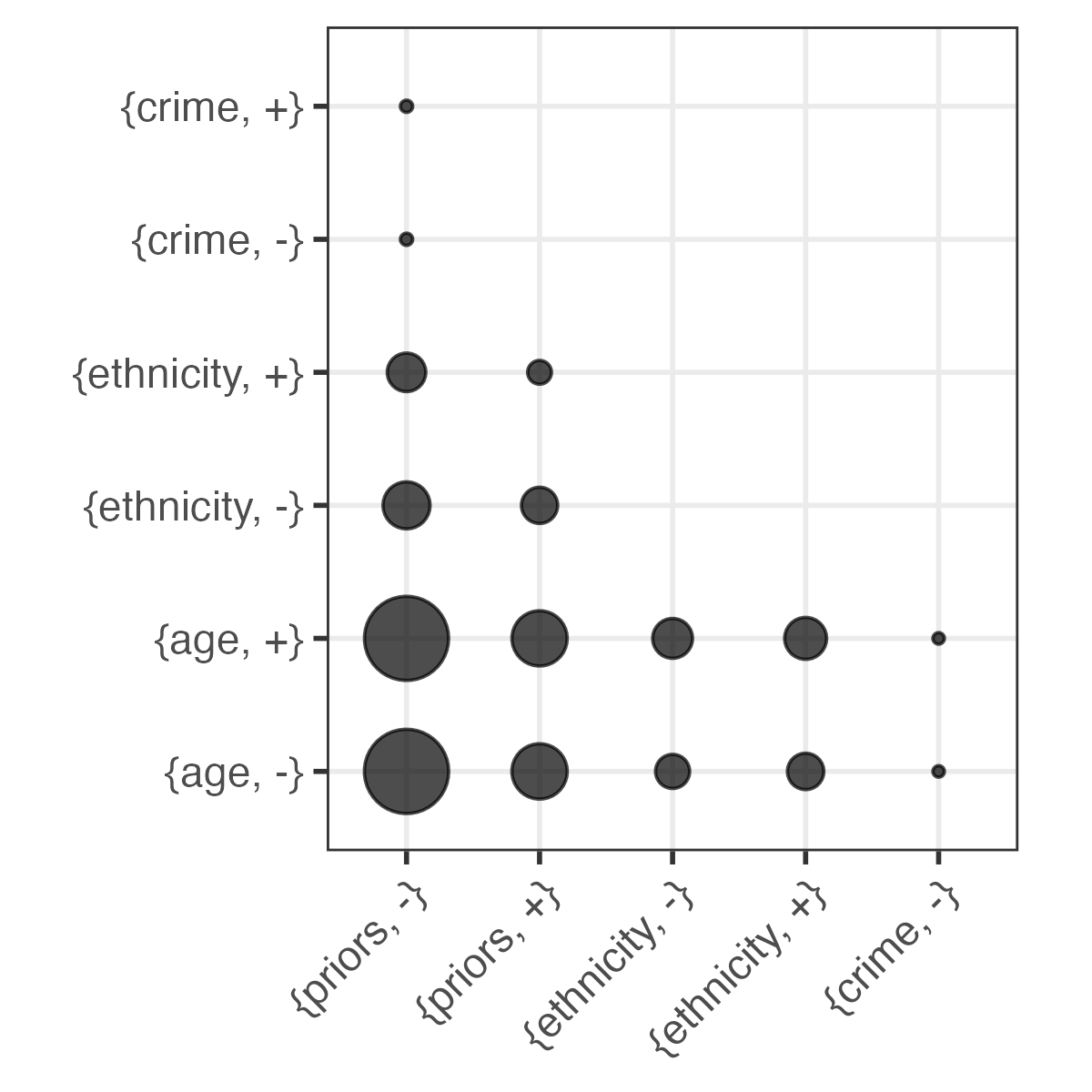}
         \caption{Same as Figure~\ref{fig:interaction} above but with impurity decrease threshold $\epsilon=0.01$.} 
         \label{fig:interaction_imp_dec_01}
\end{figure}

\begin{figure}[tb]
\centering
    \includegraphics[width=\textwidth]{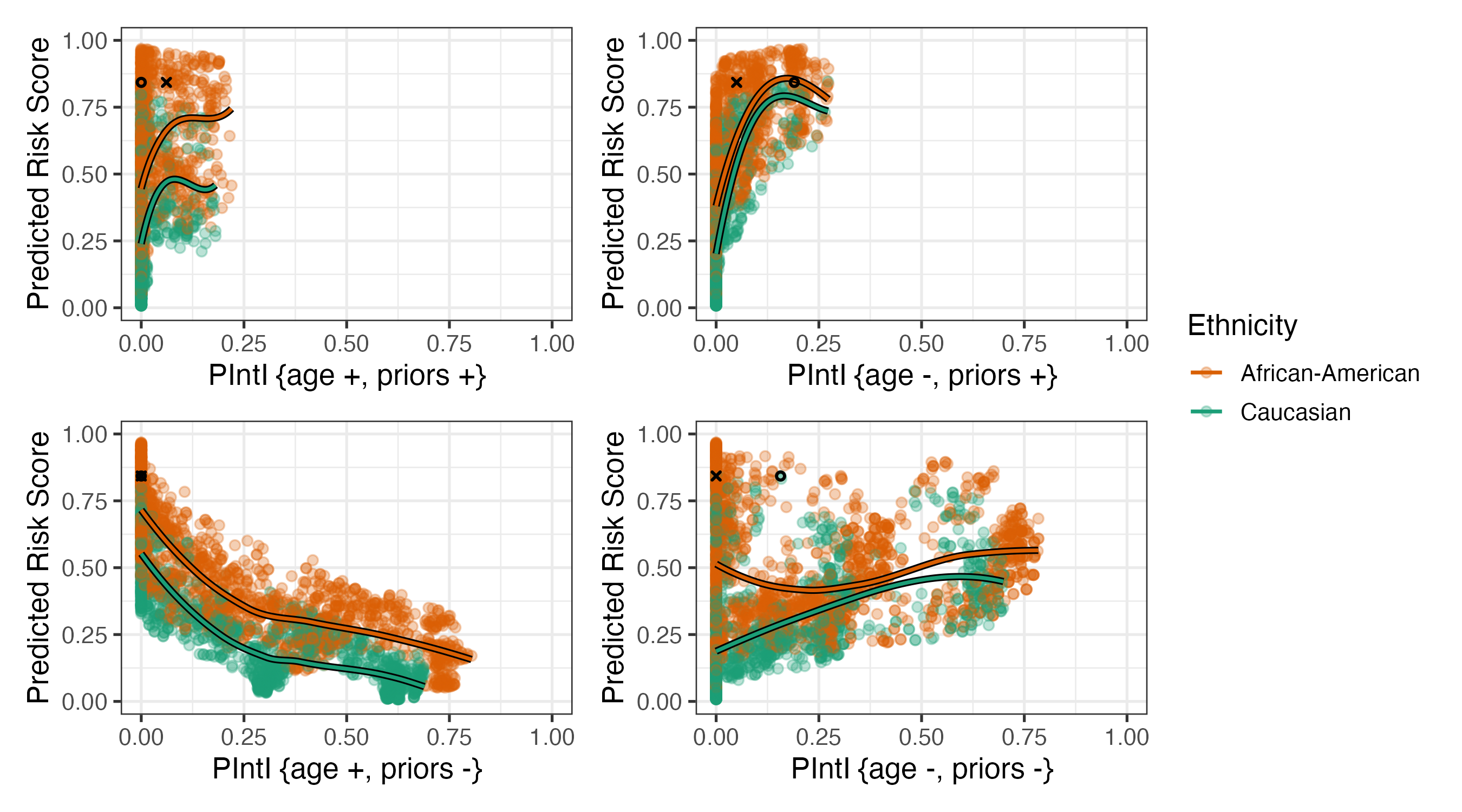}
         \caption{Same as Figure~\ref{fig:pred_age_priors} above but with impurity decrease threshold $\epsilon=0.01$.}
         \label{fig:pred_age_priors_imp_dec_01}
\end{figure}

\end{document}